\RequirePackage{xcolor}
\documentclass{article}
%ICML template

% Recommended, but optional, packages for figures and better typesetting:
\usepackage{microtype}
\usepackage{graphicx}
\usepackage{booktabs} % for professional tables

% hyperref makes hyperlinks in the resulting PDF.
% If your build breaks (sometimes temporarily if a hyperlink spans a page)
% please comment out the following usepackage line and replace
% \usepackage{icml2020} with \usepackage[nohyperref]{icml2020} above.
\usepackage{hyperref}

% Attempt to make hyperref and algorithmic work together better:

% Use the following line for the initial blind version submitted for review:
\usepackage[accepted]{icml2020}

% If accepted, instead use the following line for the camera-ready submission:
%\usepackage[accepted]{icml2020}

% The \icmltitle you define below is probably too long as a header.
% Therefore, a short form for the running title is supplied here:
\icmltitlerunning{Learning Fair Policies in (Deep) Reinforcement Learning}

%%end ICML template

% optional packages from sharelatex
\usepackage[utf8]{inputenc}
\usepackage{url}
\usepackage{amsfonts}

\usepackage{tikz}
\usepackage[inline,shortlabels]{enumitem}
\usepackage{natbib,bm,amssymb}
\setitemize{wide,nosep,topsep=0pt,labelindent=10pt}
\setenumerate{wide,nosep,topsep=0pt,labelindent=10pt}
\setdescription{wide,nosep,topsep=0pt,labelindent=10pt}

\newcommand{\St}{{\mathcal S}}
\newcommand{\Ac}{{\mathcal A}}
\newcommand{\T}{{\bm P}}
\newcommand{\R}{{\bm r}}
\newcommand{\sd}{\bm d} % stationary distribution
\newcommand{\aR}{\mu} % average reward, which is a scalar
\newcommand{\gain}{\bm g} % gain, which is a vector
\newcommand{\vaR}{\bm \mu} % average vector reward, which is a vector of dim D
\newcommand{\vgain}{\bm G} % vector gain, which is a matrix S x D
\newcommand{\vf}{{\bm v}} % value function, which is a vector
\newcommand{\vvf}{{\bm V}} % vector value function, which is a matrix S x D
\newcommand{\vQ}{{\bm Q}} % 
 % objective function
\newcommand{\vJ}{\bm J} % vectorial objective function
\newcommand{\w}{{\bm w}} % GGI weights
\newcommand{\nO}{D} % number of objectives

\newcommand{\vR}{{\bm R}} % reward vector D columns
\usepackage{amsmath}%argmax
\DeclareMathOperator*{\argmax}{argmax}
\DeclareMathOperator{\GGItext}{GGF}
\newcommand{\GGI}{{\GGItext}_{\w}}
\newcommand{\GGF}{{\GGItext}_{\w}}

\usepackage{amsthm}%proof
\usepackage{amsmath}%align
\newtheorem{theorem}{Theorem}[section]
\newtheorem{lemma}[theorem]{Lemma}
\newtheorem{corollary}[theorem]{Corollary}
\newtheorem{example}[theorem]{Example}

\newcommand{\Expect}{\mathbb E}

\newcommand{\us}[1]{#1}
\newcommand{\pw}[1]{#1}
\newcommand{\mz}[1]{#1}

% for appendix 
% \newtheorem{example}{Example}[section]
\usepackage{cancel}
\usepackage{xifthen}

\begin{document}
%%ICML template
\twocolumn[
\icmltitle{Learning Fair Policies in Multiobjective (Deep) Reinforcement Learning \\with Average and Discounted Rewards}

% It is OKAY to include author information, even for blind
% submissions: the style file will automatically remove it for you
% unless you've provided the [accepted] option to the icml2020
% package.

% List of affiliations: The first argument should be a (short)
% identifier you will use later to specify author affiliations
% Academic affiliations should list Department, University, City, Region, Country
% Industry affiliations should list Company, City, Region, Country

% You can specify symbols, otherwise they are numbered in order.
% Ideally, you should not use this facility. Affiliations will be numbered
% in order of appearance and this is the preferred way.
\icmlsetsymbol{equal}{*}

\begin{icmlauthorlist}
\icmlauthor{Umer Siddique}{ji}
\icmlauthor{Paul Weng}{ji,da}
\icmlauthor{Matthieu Zimmer}{ji}
\end{icmlauthorlist}

\icmlaffiliation{ji}{UM-SJTU Joint Institute, Shanghai Jiao Tong University, Shanghai, China}
\icmlaffiliation{da}{Department of Automation, Shanghai Jiao Tong University, Shanghai, China}

\icmlcorrespondingauthor{Paul Weng}{paul.weng@sjtu.edu.cn}

% You may provide any keywords that you
% find helpful for describing your paper; these are used to populate
% the "keywords" metadata in the PDF but will not be shown in the document
\icmlkeywords{Machine Learning, ICML}

\vskip 0.3in
]

% this must go after the closing bracket ] following \twocolumn[ ...

% This command actually creates the footnote in the first column
% listing the affiliations and the copyright notice.
% The command takes one argument, which is text to display at the start of the footnote.
% The \icmlEqualContribution command is standard text for equal contribution.
% Remove it (just {}) if you do not need this facility.

\printAffiliationsAndNotice{}  % leave blank if no need to mention equal contribution
%\printAffiliationsAndNotice{\icmlEqualContribution} % otherwise use the standard text.

\begin{abstract}
As the operations of autonomous systems generally affect simultaneously several users, it is crucial that their designs account for fairness considerations.
In contrast to standard (deep) reinforcement learning (RL), we investigate the problem of learning a policy that treats its users equitably.
In this paper, we formulate this novel RL problem, in which an objective function, which encodes a notion of fairness that we formally define, is optimized.
For this problem, we provide a theoretical discussion where we examine the case of discounted rewards and that of average rewards.
During this analysis, we notably derive a new result in the standard RL setting, which is of independent interest:
it states a novel bound on the approximation error with respect to the optimal average reward of that of a policy optimal for the discounted reward.
Since learning with discounted rewards is generally easier, this discussion further justifies finding a fair policy for the average reward by learning a fair policy for the discounted reward.
Thus, we describe how several classic deep RL algorithms can be adapted to our fair optimization problem, and we validate our approach with extensive experiments in three different domains.
\end{abstract}
%%END ICML template

%%%%%%%%%%%%%%%%%%%%%%%%%%%%%%%%%%%%%%%%%%%%%
\section{Introduction}

The progress in artificial intelligence (AI) and its use in autonomous systems have created a lot of opportunities as well as challenges for human society. 
Indeed, a well-trained AI system can automate or solve some tasks better than humans \citep{PilarskiDawsonDegrisFahimiCareySutton11,SilverSchrittwieserSimonyanAntonoglouHuangGuezHubertBakerLaiBoltonChenLillicrapHuiSifreDriesscheGraepelHassabis17}. 
However, current AI methods do not handle well situations where they impact many users.

The usual approach in those AI systems consists in maximizing a single overall utility (measuring for instance efficiency, accuracy, or task fulfillment).
When an AI system affects many users, a \textit{utilitarian} objective is generally adopted, where the individual utilities of all users are summed (or equivalently averaged).
Yet, such approach does not guarantee that users are treated equitably.
Indeed, in order to obtain an efficient global solution for the utilitarian objective, the utility of some users may be sacrificed.
Thus, fairness considerations during the design of autonomous systems are critical if we want users to accept and trust them.

One typical solution to the previous issue is to resort to an \textit{egalitarian} approach where the objective becomes to maximize the utility of the worse-off user.
However, a direct application of this maxmin approach may not yield strictly efficient solution for all users because of the focus on only one user.
In this paper, we adopt a more refined definition of fairness \citep{Moulin04} that relies on three properties: efficiency, impartiality, and equity (Section~\ref{sec:ggi}).
In order to encode them, we use the generalized Gini evaluation function \citep{Weymark81} as the social welfare function (i.e., the function that defines the overall utility from all the user utilities).

In this work, we study the optimization of this fair welfare function in the context of (deep) reinforcement learning considering both discounted rewards and average rewards.

Our contributions can be summarized as follows:
\begin{enumerate}[wide]
\item We introduce this novel problem that we call ``fair optimization in RL'' (Section~\ref{sec:fairpolicies}).
\item We investigate its theoretical properties (Section~\ref{sec:theoreticaldiscussions}).
Notably, (1) we establish the sufficiency of stationary Markov policies for finding fair solutions,
(2) we discuss the possible state-dependency of fair optimality, and
(3) we provide an approximation error bound for using a policy optimal for discounted rewards instead of one optimal for average rewards. 
Interestingly, this last result applied to single-objective RL leads to a novel, simple, and  interpretable bound, which is of independent interest.
\item We adapt three deep RL algorithms for solving our fair optimization problem (Section~\ref{sec:algos}).
\item We provide extensive experimental results in three different domains (Section~\ref{sec:expes}), which validate our propositions.
\end{enumerate}
In the next section, we present the necessary background before presenting our contributions.

%%%%%%%%%%%%%%%%%%%%%%%%%%%%%%%%%%%%%%%%%%%%%
\section{Background}\label{sec:background}

In this section, we first recall Markov decision processes, and their extension to the multiobjective setting.
Then, we motivate and review the welfare function called generalized Gini social welfare function, which  encodes fairness.

\textbf{Notations.} Matrices are  denoted in uppercase and vectors in lowercase.
Both are written in bold.
The identity matrix is denoted $\bm I$.
Vectors are column vectors, except for those denoting probability distributions, which are row vectors.

\subsection{Markov Decision Processes}

A Markov Decision Process (MDP) \cite{Puterman94} is defined as a tuple of the following elements: 
a finite set of states $\St$, 
a finite set of actions $\Ac$, 
transition matrices $\T_a$ for each $a \in \Ac$ where $\T_{a,ss'}$ denotes the probability of reaching state $s'$ after performing action $a$ in state $s$, 
reward vectors $\R_a$ for each $a$ where $\R_{a,s}$ is the reward obtained after performing $a$ in $s$, and probability distribution $\sd_0$ over initial states.
In this model, a \textit{policy} $\pi$ defines a procedure that specifies how actions are selected in states.
A policy is \textit{stationary} if the same procedure is used at every time steps.
It is \textit{Markov} if it selects actions only based on the current state.
In this paper, unless otherwise stated, policies are stationary and Markov. 
A policy can be \textit{deterministic} (i.e., $\forall s, \pi(s) \in \Ac$) or \textit{stochastic} (i.e., $\forall s, a,  \pi(a \mid s)$ denotes the probability of selecting $a$ in $s$).
Deterministic policies are special cases of stochastic ones.
By extension, we write $\T_{\pi}$ for $\T_{\pi,ss'} = \sum_a \pi(a \mid s) \T_{a,ss'}$ and $\R_\pi$ for $\R_{\pi,s} = \sum_a \pi(a \mid s) \R_{a,s}$.
A policy induces a \textit{Markov reward process} whose transitions and rewards are resp. $\T_\pi$ and $\R_\pi$.

In reinforcement learning (RL),  $\T_a$'s and $\R_a$'s are usually unknown. 
The goal in an MDP or in RL is to find a policy that optimizes some performance measure, such as the \textit{expected discounted total reward} or \textit{expected average reward}. 

Using the discounted-reward criterion,
the value function $\vf_\pi$ of a policy $\pi$ from an initial state $s$ is defined by:
\begin{align}\label{eq:vf discounted_orig}
\vf_{\pi,s} = \Expect_{\T_\pi}\left[\sum_{t=1}^{\infty} \gamma^{t-1} \R_t \mid s\right],
\end{align}
where $\Expect_{\T_\pi}$ is the expectation taken with respect to $\T_\pi$, 
$\gamma \in [0, 1)$ is a discount factor, and $\R_t$ is the random variable that represents the reward obtained at time step $t$.
Given initial distribution $\sd_0$, an optimal policy is given by:
\begin{align}\label{eq:opt discounted}
    \argmax_\pi \sd_0 \vf_\pi .
\end{align}
Interestingly, the objective function can be rewritten as follows:
$\sd_0 \vf_\pi = \sd_{\pi_\gamma} \R_\pi$ where $\sd_{\pi_\gamma}$ is the \textit{discounted occupation distribution}\footnote{Technically speaking, it is not a probability distribution as it is not normalized.} over states of $\pi$, which is defined as $\sd_{\pi_{\gamma}} = \sum_{t=0}^\infty \gamma^t \sd_0 \T_\pi^t$ with $\T_\pi^0 = \bm I$ and $\T_\pi^t = \T_\pi^{t-1} {\T_\pi}$.
Value $\sd_{\pi_{\gamma}, s}$ represents the total discounted probability of visiting state $s$ under policy $\pi$ from initial distribution $\sd_0$.
A policy that is a solution of Problem~\eqref{eq:opt discounted} is called \textit{$\gamma$-optimal} and denoted $\pi^*_\gamma$.

Using the average-reward criterion, the value function of a policy $\pi$ is usually called  \textit{gain} and denoted $\gain_{\pi}$.
For an initial state $s$, it is defined by:
\begin{align} \label{eq:gain_orig}
\gain_{\pi,s} = \lim_{h \to \infty} \frac{1}{h} \Expect_{\T_\pi}\left[\sum_{t=1}^{h} \R_t \mid s\right].
\end{align}
Given a distribution over initial states $\sd_0$, the expected average reward $\aR_{\pi}$ obtained by a policy $\pi$ is defined by $\aR_{\pi} = \sd_0 \gain_{\pi}$.
It can also be expressed as $\aR_{\pi} = \sd_{\pi} \R_\pi$ where $\sd_{\pi}$ is the \textit{stationary distribution} of policy $\pi$, which is defined as the Ces\`aro-limit\footnote{
The Ces\`aro-limit of sequence $u_n$ as $n \to \infty$ is given by $\lim_{n \to \infty} \frac{1}{n} \sum_{i=0}^{n-1} u_i$.
It is a generalized notion of limit and is equal to the standard limit, if the latter exists.} 
of $\sd_0 \T_\pi^n$. 
Distribution $\sd_{\pi}$ represents the proportion of time policy $\pi$ spends in each state.
For the average-reward criterion, an optimal policy is obtained by:
\begin{align}\label{eq:opt average}
    \argmax_\pi \aR_{\pi}.
\end{align}
A policy that is a solution to this problem is called \textit{average-optimal} and denoted $\pi^*_1$.

The average reward criterion is often preferred in problems where the interaction between agent and environment goes on for a long time horizon. 
However, the two criteria are intimately connected \cite{BaxterBartlett01}, i.e., for any policy $\pi$, we have
$\sd_{\pi} \vf_{\pi} = \frac{\aR_{\pi}}{1 - \gamma}$.

In this paper, we assume that MDPs are \textit{weakly communicating}\footnote{
An MDP is \textit{weakly communicating} if its states can be partitioned into two classes:
one in which all states are transient under every stationary policy, and the other in which any two states can be reached from each other under some stationary policy.
}.
Interestingly, in such MDPs, the optimal gain $\gain_{\pi^*_1}$ is constant, i.e., independent of the initial state.

When the state or action space becomes too large or continuous, function approximation is needed to allow generalization.
With parametric function approximation (e.g., neural networks or linear function), 
a function $f$ is approximated by $\hat{f}(\bm\theta)$ where $\bm\theta$ denotes the parameters to be learned. 
In RL, both value functions or policies can be approximated.

Standard deep RL methods are usually designed for discounted rewards.
For instance, Deep Q Network (DQN) is an efficient extension of Q-Learning \cite{MnihKavukcuogluSilverRusuVenessBellemareGravesRiedmillerFidjelandOstrovskiPetersenBeattieSadikAntonoglouKingKumaranWierstraLeggHassabis15}.
DQN combines bootstrapping, off-policy updates and function approximation. 
To improve the learning stability it relies on experience replay \cite{Lin91} and target networks.
Two approximations of the Q value function $Q_\pi(s,a) = \Expect_{\T_ \pi}\left[\sum_{t=1}^{\infty} \gamma^{t-1} \R_t \mid s, a\right] $ are learned respectively parametrized by $\bm\theta$ and $\bm{\theta'}$.
The target network associated with $\bm{\theta'}$ is periodically updated towards $\bm{\theta}$. 
To update $\bm{\theta}$, the regression target is: 
$\hat{Q}_{\bm{\theta}}(s,a) = r + \gamma \max_{a' \in \Ac}\hat{Q}_{\bm{\theta'}}(s',a') $
where $(s,a,s',r)$ is a tuple drawn from the replay buffer respectively composed of a state, an action, a next state and a reward.

A policy can also be approximated and parametrized by $\bm\theta$. 
The policy gradient \cite{SuttonMcAllesterSinghMansour00} gives the direction in which the parameters should be updated: 
$\nabla_{\bm\theta} J(\pi_{\bm\theta}) \!=\! \Expect_{s \sim \sd_{\pi}, a \sim \pi_{\bm\theta}(\cdot \mid s)} [ A_\pi (s,a) \nabla_{\bm\theta} \text{log } \pi_{\bm\theta}(a | s) ] $ where $A_\pi (s,a) = Q_\pi (s,a) - V_\pi(s)$ is the advantage function. 
Since this function is unknown, it needs to be estimated, which can be done in different manners. 
In Advantage Actor-Critic (A2C), the advantage is estimated by $\hat{A}_{\text{A2C}}(s_t,a_t) = \sum_{t=1} \gamma^{t-1} r_t - \hat{V}(s_t)$ where $\hat{V}(s_t)$ is approximated by a critic network \cite{MnihBadiaMirzaGravesLillicrapHarleySilverKavukcuoglu16}. 
The A2C actor update derives from the policy gradient obtained from $J_\text{A2C}(\pi_{\bm\theta}) = \Expect_{s \sim \sd_{\pi}, a \sim \pi_{\bm\theta}(\cdot|s)} [ \hat{A}_\text{A2C} (s,a) ]$. 
In Proximal Policy Optimization (PPO), the advantage $\hat{A}_\text{PPO}$ is estimated with $\lambda$-returns \cite{SchulmanWolskiDhariwalRadfordKlimov17}. 
It also derives from the policy gradient but with an additional constraint mitigating the policy changes.
It is obtained from
 $J_\text{PPO}(\pi_{\bm\theta}) = \Expect_{s \sim \sd_{\pi}, a \sim \pi_{\bm\theta}(\cdot|s)} [ \text{min}(\rho_{\bm\theta} \hat{A}_\text{PPO} (s,a), \bar{\rho}_{\bm\theta} \hat{A}_\text{PPO} (s,a) ) ]$ where $\bar{\rho_{\bm\theta}} = \text{clip}(\rho_{\bm\theta}, 1 - \delta, 1 + \delta)$, $\rho_\theta = \frac{\pi_{\bm\theta}(a|s)}{\pi_b(a|s)}$, $\pi_b$ is the policy that generates the transitions and $\delta$ is a hyperparameter to control the constraint.

\subsection{Multiobjective Markov Decision Process}
A multiobjective MDP (MOMDP) is an MDP where rewards are vectors (instead of scalars) whose components, called \textit{objectives}, are interpreted as different criteria  (e.g., length, cost, duration) in the multicriteria setting, and as individual utilities in the multi-agent setting. 
Formally, the reward function of an MOMDP is redefined as follows: $\vR_{a,s} \in \mathbb R^\nO$ where $\nO$ is the number of objectives.
Consequently, value functions (now denoted $\vvf$, $\vQ$, $\vgain$) also take values in $\mathbb R^\nO$.

All the previous definitions for MDPs extend naturally to MOMDPs.
Notably, with discounted rewards, \eqref{eq:vf discounted_orig} becomes:
\begin{align}\label{eq:vf discounted}
\vvf_{\pi,s} = \Expect_{\T_\pi}\left[\sum_{t=1}^{\infty} \gamma^{t-1} \vR_t \mid s\right] ,
\end{align}
where $\vvf_\pi$ can be seen as a $|\St | \times \nO$ matrix and $\vR_t$ represents the random
vector reward obtained at time step $t$.

With average reward, the gain \eqref{eq:gain_orig} becomes:
\begin{align} \label{eq:gain}
\vgain_{\pi,s} = \lim_{h \to \infty} \frac{1}{h} \Expect_{\T_\pi}\left[\sum_{t=1}^{h} \vR_t \mid s\right],
\end{align}
where $\vgain_{\pi}$ can be seen as a $|\St| \times \nO$-dimensional matrix.

Multiobjective optimization in MOMDPs amounts to solving the following problem: $\argmax_\pi \vJ(\pi)$ where $\vJ(\pi)$ is the multiobjective version of either \eqref{eq:opt discounted} or \eqref{eq:opt average}, and the vector maximization is with respect to \textit{Pareto dominance}\footnote{ 
$\forall \bm v, \bm v' \in \mathbb R^\nO$, $\bm v$ \textit{weakly Pareto-dominates} $\bm v' \Leftrightarrow \forall i, \bm v_i \ge \bm v'_i$. 
Besides, $\bm v$ \textit{Pareto-dominates} $\bm v' \Leftrightarrow \forall i, \bm v_i \ge \bm v'_i$ and $\exists j, \bm v_j > \bm v'_j$.}.
As there is no risk of confusion, Pareto dominance is simply denoted $\ge$ for its weak form and $>$ for its strict form.

A policy whose value function (or gain) is not Pareto-dominated is called \textit{Pareto-optimal}.
The usual approach in MOMDPs is to compute all the Pareto-optimal solutions.
However, the number of such solutions may be very large in some problems and such approach may be infeasible in general.
Indeed, there are some MOMDP instances where the number of Pareto-optimal policies are exponential in the MOMDP size \citep{PernyWengGoldsmithHanna13UAI}.

In practice, and especially in RL with an autonomous agent, one is rather interested to focus on one solution, typically one that finds a good balance between all the objectives.
One naive way to focus on only one solution is to use a weighted sum to combine the objectives. 
However, this technique does not provide any control on how balanced the objective values are.
A better method is to use a non-linear function to combine the objectives.
In our context of multiple users, finding balanced solutions amounts to finding fair solutions.
We detail our approach for fairness next.

\subsection{Generalized Gini Social Welfare Function}\label{sec:ggi}
In this paper, we require an optimal solution to satisfy three properties to quality as a fair solution:
\begin{description}
\item[Efficiency] A fair solution should be Pareto-optimal.
\item[Impartiality] A fair solution should satisfy the \textit{``equal treatment of equals''} principle, which states that users with identical characteristics should be treated similarly.
\item[Equity] A fair solution should satisfy the \textit{Pigou-Dalton principle} \citep{Moulin04}.
Intuitively, this principle states that given a utility vector $\bm v \in \mathbb R^\nO$, a transfer from a better-off user to a worse-off user yields a new vector that should be preferred.
Formally, for any indices $i$ and $j$, if $\bm v_i > \bm v_j$, then for any $\epsilon$ such that $\bm v_i - \bm v_j >\epsilon>0$, the new vector $\bm v - \epsilon \bm e_i + \epsilon \bm e_j$ is preferred to $\bm v$, where $\bm e_i$ denotes the $i-$th canonical basis vector\footnote{Vector $\bm e_i$ is such that $\bm e_{ij} = 0$ for $i\neq j$ and $\bm e_{ii}=1$.}.
\end{description}

The first property is natural because choosing a Pareto-dominated solution would be irrational.
The second one is reasonable in the context of fairness.
It holds in our work by assumption: we assume that all the objectives are equal and should therefore be treated in the same way. 
The third one is the key property in the context of fair optimization, as it captures in a natural way the idea that we prefer solutions whose utility distribution over users is balanced.

In order to implement concretely those three principles, we resort to a welfare function called the \textit{generalized Gini social welfare function} (GGF) \citep{Weymark81}.
GGF is defined as follows: %(see Appendix A for a discussion):
\begin{equation}\label{eq:ggi}
    \GGI(\bm v) = \sum_{i=1}^\nO \w_{i} \bm v^\uparrow_{i},
\end{equation}
where $\bm v \in \mathbb R^\nO$, $\w \in \mathbb R^\nO$ is a fixed positive weight vector whose components are strictly decreasing (i.e., $\w_1 > \ldots > \w_\nO$), and
$\bm v^\uparrow$ corresponds to the vector with the components of vector $\bm v$ sorted in an increasing order (i.e., $\bm v^\uparrow_1 \le \ldots \le \bm v^\uparrow_\nO$). 
Furthermore, we assume without loss of generality that the GGF weight vector $\w$ is normalized and sum to one (i.e., $\w \in [0, 1]^\nO$ and $\sum_{i=1}^{\nO} \w_{i} = 1$).

GGF satisfies the required three properties \citep{Weymark81}.
As the GGF weights are positive, GGF is monotonic with respect to Pareto dominance.
It therefore satisfies the efficiency property.
Because the components of $\bm v$ are reordered in \eqref{eq:ggi}, GGF is symmetric with respect to its components.
It therefore satisfies the impartiality property.
Finally, because the GGF weights are positive and decreasing, GGF is Schur-concave (i.e., it is monotonic with respect to Pigou-Dalton transfers).
It therefore satisfies the equity property.

Besides, GGF is a piecewise-linear concave function.
Indeed, it is easy to check that GGF can be rewritten as follows thanks to its positive decreasing weights:
\begin{align}\label{eq:ggi min}
    \GGI(\bm v) = \min_{\sigma \in \mathbb S_\nO} \w_\sigma^\intercal \bm v, % = \min_{\sigma \in \mathbb S_\nO} \sum_{i=1}^\nO \w_\sigma(i) 
\end{align}
where $\mathbb S_\nO$ is the symmetric group of degree $\nO$ (i.e., set of permutations over $\{1, \ldots, \nO\}$), 
$\sigma$ is a permutation, and
$\w_\sigma = (\w_{\sigma(1)}, \ldots, \w_{\sigma(\nO)})$. 
Equation~\eqref{eq:ggi min} holds since the minimum is attained by assigning the largest weight to the smallest component of $\bm v$, the second-largest weight to the second-smallest component of $\bm v$, and so on.

\mz{
Although GGF is not the only fair welfare function, it enjoys nice properties:
(1) simplicity, as it is a weighted sum in the Lorenz space \citep{chakravartyEthicalSocialIndex1990,PernyWengGoldsmithHanna13UAI},
(2) its well-understood properties axiomatized by  \citet{Weymark81},
(3) its generality.

GGF can cover various special cases by setting its weights appropriately, e.g.:
\begin{itemize}
    \item If $w_1 \to 1, w_2 \to 0, ..., w_D \to 0$, GGF corresponds to the maxmin egalitarian notion of fairness \citep{Rawls71}.
    \item If $w_1 \to 1, w_2 \to \varepsilon, ..., w_D \to \varepsilon$, GGF corresponds to the regularized maxmin egalitarian notion of fairness.
    \item If $w_1 \to 1/D, ..., w_D \to 1/D$, GGF corresponds to the utilitarian approach.
    \item If $w_k/w_{k+1} \to +\infty$, GGF corresponds to the leximin notion of fairness \citep{Rawls71,kurokawaLeximinAllocationsReal2015}.
\end{itemize}
}

%%%%%%%%%%%%%%%%%%%%%%%%%%%%%%%%%%%%%%%%%%%%%
\section{Fair Policies in RL} \label{sec:fairpolicies}
By integrating GGF with MOMDPs, we can now formally formulate the \textit{fair optimization} problem investigated in this paper, which is the problem of determining a policy that generates a fair distribution of rewards to $D$ fixed users:
\begin{equation}\label{eq:ggi pb}
    \argmax_{\pi} \GGI (\vJ(\pi)) ,
\end{equation}
where $\vJ(\pi)$ can be defined with the discounted or average reward.
As GGF is a concave function, \eqref{eq:ggi pb} defines a convex optimization problem.
This problem defined with  discounted rewards is called \textit{GGF-$\gamma$ problem}, while that with average rewards is called \textit{GGF-average problem.}
Their solutions are respectively called \textit{GGF-$\gamma$-optimal} and \textit{GGF-average-optimal} policies.

In this paper, we aim at solving this problem in the RL setting. 
As GGF is a non-linear function, fair optimization is a non-linear convex optimization problem. 
This brings novel difficulties, which we discuss next. 

\subsection{Theoretical Discussions}\label{sec:theoreticaldiscussions}

In this part, we discuss three important points related to fair optimization in MOMDPs:
\begin{enumerate*}[(i),mode=unboxed, afterlabel={{\nobreakspace}}]
    \item which subset of policies is guaranteed to contain an optimal solution,
    \item fair solution may depend on initial states, and
    \item how close is the GGF of the average vector reward of a GGF-$\gamma$-optimal policy to that of the optimal average vector reward.
\end{enumerate*}
The proofs of our theoretical results can be found in Appendix A.

\paragraph{Sufficiency of Stationary Markov Policies.}
A first question related to Problem~\eqref{eq:ggi pb} is which types of policies are optimal among the set of all (possibly non-stationary) policies.
The following lemma, which has not been stated and proved formally to the best of our knowledge, shows that there always exists a GGF-($\gamma$ or average)-optimal stationary stochastic Markov policy for Problem~\eqref{eq:ggi pb}.

\begin{lemma}\label{lem:ssm}
For any MOMDP, Problem~\eqref{eq:ggi pb} admits a solution that is a stationary stochastic Markov policy.
\end{lemma}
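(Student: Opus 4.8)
The plan is to reduce everything to the geometry of \emph{occupation measures} and to exploit the fact that $\vJ$ is a \emph{linear} function of this representation. Recall from the background that for discounted rewards $\sd_0 \vvf_\pi = \sd_{\pi_\gamma}\vR_\pi$, and analogously for average rewards $\sd_0\vgain_\pi = \sd_\pi \vR_\pi$; in both cases the value vector is a linear image of a state occupation measure. I would phrase this uniformly by introducing the induced state--action occupation measure $x_\pi(s,a)$ (the discounted visitation frequency $\sum_t \gamma^t \Pr(S_t=s,A_t=a)$ in the discounted case, and the long-run Ces\`aro frequency in the average case), so that in either setting $\vJ(\pi) = \sum_{s,a} x_\pi(s,a)\, \vR_{a,s}$. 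The crucial structural point is that $\vJ$ depends on $\pi$ \emph{only} through $x_\pi$, and does so linearly.

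First I would recall the classical characterization of the set of achievable occupation measures. For a fixed MOMDP, the set $\mathcal{X}$ of measures $x_\pi$ attainable by \emph{some} policy --- ranging over all history-dependent, possibly non-stationary randomized policies --- is a bounded convex polytope, cut out by the Bellman flow constraints together with nonnegativity. The two facts I need are: (i) no non-stationary policy produces an occupation measure lying outside $\mathcal{X}$, and (ii) every point of $\mathcal{X}$ is already realized by a stationary stochastic Markov policy, recovered on its support via $\pi(a\mid s) = x(s,a)/\sum_{a'}x(s,a')$. For discounted rewards with $\gamma\in[0,1)$ this is textbook and $\mathcal{X}$ is compact; for average rewards I would invoke the weakly communicating assumption, under which the analogous polytope of long-run state--action frequencies enjoys the same stationary-policy sufficiency.

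Combining these observations, the achievable value set $\mathcal{V} = \{\vJ(\pi) : \pi \text{ any policy}\}$ is the image of the compact convex polytope $\mathcal{X}$ under the linear map $x \mapsto \sum_{s,a} x(s,a)\,\vR_{a,s}$, so $\mathcal{V}$ is itself compact and convex; and --- this is the key consequence --- it is \emph{identical} whether $\pi$ ranges over all policies or only over stationary stochastic Markov policies. Since $\GGF$ is continuous (indeed piecewise-linear concave, by \eqref{eq:ggi min}), it attains its maximum over the compact set $\mathcal{V}$ at some point $\vJ^\star$, and any preimage of $\vJ^\star$ in $\mathcal{X}$ yields, through the recovery formula above, a stationary stochastic Markov policy solving Problem~\eqref{eq:ggi pb}.

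The main obstacle is the average-reward case: whereas for discounted rewards the occupation polytope and the correspondence $\pi \mapsto x_\pi$ are standard, for average rewards one must argue carefully that even non-stationary, history-dependent policies cannot escape the stationary frequency polytope --- precisely where the weakly communicating hypothesis and the Ces\`aro-limit definition of $\sd_\pi$ enter. I expect the bulk of the rigor to go into establishing this inclusion, since the GGF-specific part is elementary once compactness and convexity of $\mathcal{V}$ are in hand.
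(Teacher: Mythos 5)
Your discounted-reward half is correct and is essentially the paper's own argument: the paper also reduces to occupation measures, invoking Theorem~3.1 of \citet{Altman99} for the fact that the discounted occupation measure of an arbitrary (history-dependent, non-stationary) policy is realized by a stationary stochastic Markov policy, and then observes that the GGF of the value depends on the policy only through that measure.

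The gap is in the average-reward half, and it sits in the opposite direction from the one you flagged. Your part (i) --- no policy, however history-dependent, can produce long-run frequencies outside the flow polytope --- is the standard, true part and holds for any finite MDP. What fails is part (ii): in the average-reward setting a point $x$ of the flow polytope need \emph{not} be realized by any stationary stochastic Markov policy, because the recovery rule $\pi_x(a\mid s)=x(s,a)/\sum_{a'}x(s,a')$ can induce a multichain Markov chain, and the frequency it then achieves depends on the initial distribution, which does not appear in the average-reward balance constraints. Concretely, take $\St=\{s_1,s_2\}$ with actions $\{\mathrm{stay},\mathrm{switch}\}$, deterministic transitions, bi-objective rewards $(1,0)$ for staying in $s_1$, $(0,1)$ for staying in $s_2$, $(0,0)$ for switching, and $\sd_0=\delta_{s_1}$; this MOMDP is communicating. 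The polytope point $x^*$ putting mass $1/2$ on each of $(s_1,\mathrm{stay})$ and $(s_2,\mathrm{stay})$ has value $(1/2,1/2)$, the unique GGF-maximizer over the image of the polytope, but $\pi_{x^*}$ is ``always stay,'' whose frequency from $s_1$ is the point mass on $(s_1,\mathrm{stay})$. Worse, \emph{no} stationary policy attains $(1/2,1/2)$: writing $a,b$ for the stay-probabilities, the policy is either eventually absorbed (gain $(1,0)$ or $(0,1)$, with $\GGF=\w_2<1/2$) or unichain with gain $\bigl(\tfrac{a(1-b)}{2-a-b},\tfrac{b(1-a)}{2-a-b}\bigr)$, whose components sum to $\tfrac{a+b-2ab}{2-a-b}<1$, forcing $\GGF<1/2$; yet a non-stationary ``time-sharing'' policy alternating between the two states in blocks of slowly growing length has gain exactly $(1/2,1/2)$.

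Consequently, for average rewards the value set achievable by stationary policies is only \emph{dense} in your $\mathcal V$, not equal to it: your argument yields equality of suprema ($\varepsilon$-optimality of stationary policies) but not attainment, which is what the lemma asserts. This cannot be repaired by choosing a better polytope (e.g., the two-variable multichain LP including $\sd_0$): the point $x^*$ above remains feasible there, and the underlying obstruction is that stationary-policy average-value sets need not be closed --- a known pathology of multiobjective/constrained average-reward MDPs. This is also why the paper does not argue via frequencies for the average case: its proof instead runs a vanishing-discount argument through Lemma~2.6 and Corollary~2.5 of \citet{Kallenberg2003}, relating $(1-\gamma)\vvf_\pi$ to $\vgain_\pi$ together with a Pareto-domination step for a stationary $\pi^+$ --- a step whose delicacy (and, in light of the example above, fragility) reflects exactly the phenomenon your proposal runs into.
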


Note that this result holds in fact for any monotonic function, not only GGF in Problem~\eqref{eq:ggi pb}.
It implies that one can search for an optimal policy in the smaller set of stationary stochastic Markov policies instead of the set of all policies.
Also, note that contrary to the single-objective case, a deterministic policy may not be optimal \citep{Busa-FeketeSzorenyiWengMannor17} because fairer solution can be obtained via randomization.

\paragraph{Possibly State-Dependent Optimality.}
For the GGF-$\gamma$ problem, it is known that optimality depends on initial states or more generally on the distribution over initial states.
(see Example~\ref{ex:ggfgamma}). 
\mz{
\begin{example}\label{ex:ggfgamma}
\tikzstyle{vertex}=[draw,circle,minimum size=15pt,inner sep=0pt]
\tikzstyle{selected vertex} = [vertex, fill=red!24]
\tikzstyle{edge} = [draw,thick,->]
\tikzstyle{weight} = [font=\small]
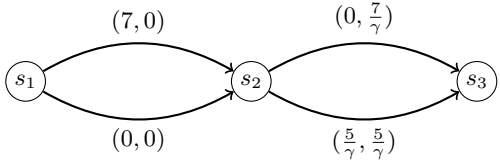
\begin{figure}[ht]
\centering
\begin{tikzpicture}[scale=1, auto,swap]
\footnotesize
    % First we draw the vertices
    \foreach \pos/\name/\label in {{(0,0)/x1/s_1}, {(3,0)/x2/s_2}, {(6,0)/x3/s_3}}
        \node[vertex] (\name) at \pos {$\label$};
    % Connect vertices with edges and draw weights
    \foreach \source/\dest/\weight in {x1/x2/{(7, 0)}, x2/x3/{(0, \frac{7}{\gamma})}}
        \path[edge] (\source) edge [bend left] node[weight,above] {$\weight$} (\dest);
    \foreach \source/\dest/\weight in {x1/x2/{(0, 0)}, x2/x3/{(\frac{5}{\gamma}, \frac{5}{\gamma})}}
        \path[edge] (\source) edge [bend right] node[weight,below] {$\weight$} (\dest);
\end{tikzpicture}
\caption{Example of MDP where optimality for Problem~\eqref{eq:ggi pb} with discounted rewards depends on states.}
\label{fig:ex}
\end{figure}
The following example adapted from \citep{OgryczakPernyWeng13} shows that a GGF-$\gamma$-optimal policy depends on initial states.
Consider the deterministic bi-objective three-state MDP depicted in Figure~\ref{fig:ex} where arcs represent actions, arc weights correspond to vector rewards, and  $\gamma \in [0, 1)$ is a discount factor.
Each state has two actions (Up, Down), except state $s_3$, which is an absorbing state.
Assume $\w = (5/9, 4/9)$.
Then, from $s_1$, the optimal policy $\pi_1$ chooses Up in $s_1$, and Up in $s_2$.
However, viewed from $s_2$, $\pi_1$ is not optimal, because action Down is preferred in $s_2$.
\end{example}
}

This point raises a potential difficulty when applying a $\gamma$-optimal policy: depending on which state is visited, one may have an incentive to switch to another policy.
In decision theory, this issue is called \textit{dynamic inconsistency of preferences} \citep{McClennen90}.
Besides, it implies that the Bellman principle of optimality does not hold anymore and therefore dynamic programming cannot be directly used for computing fair optimal solutions. 
However, we can prove that for with  average reward, preferences stay state-independent as the gain of the optimal policy is constant.

\begin{lemma} \label{lem:constant gain}
For any weakly-communicating MOMDP,  
the GGF-average problem
admits a solution that is a stationary stochastic Markov policy with constant gain.
\end{lemma}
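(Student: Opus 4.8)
The plan is to obtain the ``stationary stochastic Markov'' part directly from Lemma~\ref{lem:ssm} and to force the ``constant gain'' part out of the weakly-communicating structure through the long-run state--action frequency (occupation measure) representation of the average reward. First I would invoke Lemma~\ref{lem:ssm} to fix a GGF-average-optimal stationary stochastic Markov policy $\pi^*$ with value $v^* = \GGI(\sd_0 \vgain_{\pi^*})$. Writing $x^*$ for the Ces\`aro occupation frequency it induces from $\sd_0$, one has $x^* \ge 0$, $\sum_{s,a} x^*_{s,a}=1$, the balance equations $\sum_a x^*_{s,a} = \sum_{s',a'}\T_{a',s's}\,x^*_{s',a'}$, and $\sd_0 \vgain_{\pi^*} = \sum_{s,a} x^*_{s,a}\,\vR_{a,s}$. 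Because any state transient under $\pi^*$ carries zero frequency, and weak communication forces every state outside the communicating core to be transient under \emph{every} policy, $x^*$ is supported on that core, where any two states communicate under some policy.

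The reduction is then to build a \emph{single} stationary stochastic Markov policy $\pi'$ with state-independent (constant) gain whose GGF value still equals $v^*$; once such a $\pi'$ is produced, constancy of $\vgain_{\pi'}$ gives $\sd_0 \vgain_{\pi'}$ independent of $\sd_0$ and equal to the target vector, hence $\GGI(\sd_0 \vgain_{\pi'}) = v^*$ and $\pi'$ is optimal with constant gain. To construct $\pi'$ I would decompose $\pi^*$ into its recurrent classes $R_1,\dots,R_m$, with constant class gains $\bm g_1,\dots,\bm g_m$ and occupation weights $\alpha_1,\dots,\alpha_m$, so that $\sd_0 \vgain_{\pi^*} = \sum_k \alpha_k \bm g_k$. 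Using the communication property of the core, I would splice these classes into a \emph{single} recurrent class by inserting bridging transitions, tuned so that the long-run time spent emulating class $R_k$ stays proportional to $\alpha_k$; the induced gain is then constant and close to $\sum_k \alpha_k \bm g_k$. Here I can also lean on the single-objective fact already recalled in the excerpt --- that in a weakly-communicating MDP the optimal gain for any scalar reward is constant --- applied to each scalarization $\w_\sigma$ appearing in $\GGI(\cdot)=\min_\sigma \w_\sigma^\intercal(\cdot)$.

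I expect the construction of this merged policy to be the main obstacle. The naive stationary policy $\pi'(a\mid s)=x^*_{s,a}/\sum_{a'}x^*_{s,a'}$ reproduces $x^*$ as an invariant measure but is generally \emph{multichain}: its gain equals $\bm g_k$ on $R_k$, not the desired mixture $\sum_k \alpha_k \bm g_k$, which is exactly the regime where randomization strictly helps under the concave $\GGI$. Forming a single recurrent class requires bridging transitions that recur with positive long-run frequency and generically carry reward, so they perturb the gain away from $\sum_k\alpha_k\bm g_k$. The delicate heart of the proof is therefore to show this perturbation can be driven to zero --- sending the bridge frequency to $0$ while holding the per-class time split at $\alpha_k$ --- and to upgrade the resulting limit into an \emph{attained} constant-gain solution, via a closedness argument on the set of gains achievable by constant-gain stationary policies. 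This attainment step, where the weak-communication hypothesis must do all the work, is where I would concentrate the effort.
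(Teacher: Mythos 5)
Your reduction stalls exactly at the step you flag as the ``delicate heart,'' and that step is not just delicate --- it is false, so no closedness argument can rescue it. The set of gains attainable by \emph{constant-gain} stationary policies is in general not closed, because the gain is discontinuous in the policy parameters at the multichain boundary. Concretely: take two states $s_A,s_B$, a ``stay'' action with self-loop reward $(1,0)$ at $s_A$ and $(0,1)$ at $s_B$, and a ``switch'' action with reward $(0,0)$ moving to the other state; let $\sd_0$ be uniform and $\w=(2/3,1/3)$. Every per-step reward vector $\bm r$ here satisfies $\bm r_1+\bm r_2\le 1$, so every policy has GGF value at most $1/2$; the stay-everywhere policy attains $1/2$ from $\sd_0$ (its gain is $(1,0)$ at $s_A$ and $(0,1)$ at $s_B$, mixing to $(1/2,1/2)$), so $1/2$ is the optimum. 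But a stationary policy with constant gain either gets absorbed into one self-loop (GGF value $1/3$), or must switch with positive long-run frequency $c>0$, in which case its gain components sum to $1-c$ and its GGF value is at most $(1-c)/2<1/2$. The supremum $1/2$ over constant-gain stationary policies is approached along your ``bridge frequency $\to 0$'' sequence but attained by none of them: the $\epsilon=0$ endpoint is multichain and its gain jumps discontinuously. So the mixture $\sum_k\alpha_k\bm g_k$ cannot, in general, be realized as an attained constant gain, which is precisely what your plan requires.

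The paper's proof takes a genuinely different route that never attempts to realize the mixture: it argues by contradiction, invokes Proposition~8.3.1 of Puterman to get a stationary policy $\overline\pi$ whose chain has a single closed irreducible class, and splices the optimal policy $\pi^*_1$ with $\overline\pi$ --- following $\pi^*_1$ on one recurrent class $\mathcal C_{s_1}$ of $\pi^*_1$ and $\overline\pi$ everywhere else. The spliced policy is unichain and its constant gain equals the class gain $\vgain_{\pi^*_1,s_1}$ \emph{exactly}: no limits are taken, and no bridging rewards are paid in the long run because all states outside $\mathcal C_{s_1}$ are transient. Note, however, that this argument produces a contradiction only if some single recurrent-class gain of $\pi^*_1$ already achieves the optimal GGF value, and your observation --- that concavity of GGF can make the mixture strictly dominate every individual class gain --- is exactly the case it does not cover (in the example above the class gains have GGF value $1/3$ while the optimum is $1/2$). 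So your diagnosis of where the true difficulty lies is sharp and correct, but as a proof your proposal is incomplete at that very point, and the attainment argument you propose to supply there cannot exist.
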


\paragraph{Approximation Error}
A common practice in RL is to find an approximately optimal policy for the average reward by solving the related discounted reward problem.
In the single-objective case, \cite{Kakade01} proved that doing so, the difference between the gain of a $\gamma$-optimal policy and the optimal gain can be bounded:
\begin{theorem} \label{thm:Kakade}
Assume $\T_{\pi^*_1}$ has $n$ distinct eigenvalues.
Let $\bm U = (\bm u_1, \ldots, \bm u_{|\St|})$ be the matrix of its right eigenvectors with the corresponding eigenvalues $\lambda_1=1 > |\lambda_2| \ge \ldots \ge |\lambda_{|\St|}|$.
Then,
\vspace{-1em}
\begin{align*}
    \aR_{\pi^*_\gamma} \ge \aR_{\pi^*_1} - \kappa_2(\bm U) \|\R\| \frac{1 - \gamma}{1 - \gamma |\lambda_2|},
\end{align*}
where $\kappa_2(\bm U) = \|\bm U\|_2\|\bm U^{-1}\|_2$ is the condition number of $\bm U$, $\|\bm U\|_2 = \max_{\bm x : \|\bm x\|=1} \|\bm U \bm x\|$, and $\|\bm x\|$ is the Euclidean norm for any vector $\bm x$.
\end{theorem}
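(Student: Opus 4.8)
The plan is to relate the normalized discounted value $(1-\gamma)\vf_{\pi^*_1}$ of the average-optimal policy to its (constant) gain through the spectral decomposition of $\T_{\pi^*_1}$, and then to transfer this estimate to $\pi^*_\gamma$ by exploiting the fact that, in the single-objective discounted setting, $\pi^*_\gamma$ is optimal from \emph{every} initial state.

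First I would diagonalize. The existence of the eigenvector basis $\bm U$ means $\T_{\pi^*_1}$ is diagonalizable, so write $\T_{\pi^*_1} = \bm U \bm\Lambda \bm U^{-1}$ with $\bm\Lambda = \mathrm{diag}(\lambda_1,\ldots,\lambda_{|\St|})$ and $\lambda_1 = 1 > |\lambda_2| \ge \cdots$. Using $\vf_{\pi^*_1} = \sum_{t\ge 0}\gamma^t \T_{\pi^*_1}^t \R_{\pi^*_1} = \bm U(\bm I - \gamma\bm\Lambda)^{-1}\bm U^{-1}\R_{\pi^*_1}$, and noting that in a weakly communicating MDP the optimal gain is constant, $\gain_{\pi^*_1} = \aR_{\pi^*_1}\bm 1 = \bm U\,\mathrm{diag}(1,0,\ldots,0)\,\bm U^{-1}\R_{\pi^*_1}$ (the Ces\`aro limit annihilates every $|\lambda_i|<1$ mode), I would subtract to get $(1-\gamma)\vf_{\pi^*_1} - \aR_{\pi^*_1}\bm 1 = \bm U \bm D \bm U^{-1}\R_{\pi^*_1}$, where $\bm D = \mathrm{diag}(0,\tfrac{1-\gamma}{1-\gamma\lambda_2},\ldots,\tfrac{1-\gamma}{1-\gamma\lambda_{|\St|}})$: the $\lambda_1=1$ mode cancels exactly, while each remaining mode contributes $\sum_t \gamma^t\lambda_i^t = (1-\gamma\lambda_i)^{-1}$. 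Since $|1-\gamma\lambda_i| \ge 1-\gamma|\lambda_i| \ge 1-\gamma|\lambda_2|$, the diagonal matrix satisfies $\|\bm D\|_2 \le \tfrac{1-\gamma}{1-\gamma|\lambda_2|}$, and submultiplicativity gives $\|(1-\gamma)\vf_{\pi^*_1} - \aR_{\pi^*_1}\bm 1\|_2 \le \kappa_2(\bm U)\|\R\|\tfrac{1-\gamma}{1-\gamma|\lambda_2|} =: E$.

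Second, I would transfer this to $\pi^*_\gamma$. The key observation, available because this is the single-objective case, is that $\pi^*_\gamma$ maximizes $\vf_{\pi,s}$ simultaneously at every state, so $\vf_{\pi^*_\gamma} \ge \vf_{\pi^*_1}$ componentwise. Evaluating both sides against the stationary distribution $\sd_{\pi^*_\gamma}$ of $\pi^*_\gamma$ (a nonnegative row vector with $\|\sd_{\pi^*_\gamma}\|_2 \le \|\sd_{\pi^*_\gamma}\|_1 = 1$) and invoking the exact identity $(1-\gamma)\sd_{\pi^*_\gamma}\vf_{\pi^*_\gamma} = \aR_{\pi^*_\gamma}$ (the background relation $\sd_\pi\vf_\pi = \aR_\pi/(1-\gamma)$ at $\pi=\pi^*_\gamma$), I would chain $\aR_{\pi^*_\gamma} = (1-\gamma)\sd_{\pi^*_\gamma}\vf_{\pi^*_\gamma} \ge (1-\gamma)\sd_{\pi^*_\gamma}\vf_{\pi^*_1} = \aR_{\pi^*_1} + \sd_{\pi^*_\gamma}\bm U\bm D\bm U^{-1}\R_{\pi^*_1} \ge \aR_{\pi^*_1} - E$, using $\sd_{\pi^*_\gamma}(\aR_{\pi^*_1}\bm 1) = \aR_{\pi^*_1}$ and Cauchy--Schwarz on the error term. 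This is exactly the claimed inequality.

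The main obstacle is precisely this transfer step: naively comparing the discounted objectives at the fixed initial distribution $\sd_0$ fails, since $(1-\gamma)\sd_0\vf_{\pi^*_\gamma}$ need not be close to $\aR_{\pi^*_\gamma}$ (a $\gamma$-greedy policy may front-load reward). What makes the bound go through is evaluating at $\pi^*_\gamma$'s \emph{own} stationary distribution, where the normalized discounted value equals the average reward exactly; this is legitimate only because single-objective discounted optimality holds from every state, the very property that fails for the GGF objective (cf.\ Example~\ref{ex:ggfgamma}). Minor care remains to justify diagonalizability, to handle possibly complex eigenpairs inside the operator-norm estimates, and to confirm $\|\R_{\pi^*_1}\| \le \|\R\|$.
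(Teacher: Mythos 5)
Your proof is correct, but note first that the paper itself contains no proof of this statement: Theorem~\ref{thm:Kakade} is recalled from \citet{Kakade01} purely as motivation for the paper's alternative bound, and the appendix proves only Lemmas~\ref{lem:ssm}--\ref{lem:constant gain} and Theorem~\ref{thm:momdp bound}. What you have written is essentially a reconstruction of Kakade's original spectral argument, and it goes through: the $\lambda_1=1$ mode cancels exactly in $(1-\gamma)\vf_{\pi^*_1}-\aR_{\pi^*_1}\bm 1=\bm U\bm D\bm U^{-1}\R_{\pi^*_1}$, the bound $\|\bm D\|_2\le\frac{1-\gamma}{1-\gamma|\lambda_2|}$ survives complex eigenvalues via $|1-\gamma\lambda_i|\ge 1-\gamma|\lambda_i|$, and the transfer through $(1-\gamma)\sd_{\pi^*_\gamma}\vf_{\pi^*_\gamma}=\aR_{\pi^*_\gamma}$ plus Cauchy--Schwarz is sound. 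It is instructive to compare your route with the one the paper uses for its own analogous result (Theorem~\ref{thm:momdp bound}, Corollary~\ref{cor:bound MDP}): there, the expansion of $(1-\gamma)\vf_\pi$ around $\gain_\pi$ comes from the Laurent-series/Drazin-inverse decomposition (Theorem~\ref{thm:vf decomposition}), which requires no diagonalizability, and the transfer is performed at the fixed initial distribution $\sd_0$ using only $\sd_0$-optimality of $\pi^*_\gamma$ --- at the price of applying the decomposition to \emph{both} policies, which is why Corollary~\ref{cor:bound MDP} carries two instance-dependent terms. Your transfer instead evaluates at $\pi^*_\gamma$'s own stationary distribution, which lets you apply the decomposition to $\pi^*_1$ alone and recover Kakade's single constant $\kappa_2(\bm U)$, but it requires $\vf_{\pi^*_\gamma}\ge\vf_{\pi^*_1}$ componentwise, i.e., state-uniform optimality of $\pi^*_\gamma$; you correctly identify this as the step that is only available in the single-objective case (and that fails for GGF, cf.\ Example~\ref{ex:ggfgamma}). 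Under the paper's definition $\pi^*_\gamma\in\argmax_\pi \sd_0\vf_\pi$, you should either take a Bellman-optimal maximizer (which exists and is without loss of generality), or observe that any maximizer attains the optimal value function on the support of its own discounted occupation measure, which contains the support of $\sd_{\pi^*_\gamma}$ --- and that is all your chain of inequalities needs. The remaining caveats you list are indeed minor: diagonalizability is given by the distinct-eigenvalue hypothesis, and $\|\R_{\pi^*_1}\|\le\|\R\|$ holds because each component $\R_{\pi^*_1,s}$ is a convex combination over actions of the $\R_{a,s}$.
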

To the best of our knowledge, this is the only known bound for such approximation error. % on the average reward.
However, it only holds if $\T_{\pi^*_1}$ has $n$ distinct eigenvalues and it may be hard to interpret and evaluate in practice as it involves the condition number of the corresponding eigenvector matrix.

This motivates us to prove an alternative bound.
Using a matrix decomposition approach based on Laurent series expansion and Drazin generalized inverse, \citet{LamondPuterman89} proved the following relation between the discounted value function and the gain of a stationary policy.
\begin{theorem} \label{thm:vf decomposition}
For any MDP, any stationary policy $\pi$, and any $\gamma \in (\frac{\sigma(\bm H_{\T_\pi})}{\sigma(\bm H_{\T_\pi})+1}, 1)$,
\vspace{-.5em}
\begin{align}\label{eq:decomposition}
    \vf_\pi = \frac{1}{1 - \gamma} \gain_{\pi} + \frac{1}{\gamma} \sum_{n=0}^{\infty} \left(\frac{\gamma-1}{\gamma}\right)^n \bm{H}_{\T_\pi}^{n+1} \R_\pi,
\end{align}
where 
$\bm{H}_{\T_\pi}$ is the Drazin inverse of $\bm I - \T_\pi$, which is given by $(\bm I - \T_\pi + \T_\pi^*)^{-1}(\bm I - \T_\pi^*)$, %and 
$\T_\pi^*$ is the Ces\`aro-limit of $\T_\pi^n$ for $n \to \infty$, and
$\sigma(\bm H_{\T_\pi})$ is the spectral radius of matrix $\bm H_{\T_\pi}$.
\end{theorem}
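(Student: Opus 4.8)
The plan is to start from the fixed-point (Bellman) characterization of the discounted value function and to expand the associated resolvent as a Laurent series around the eigenvalue $\lambda = 1$ of $\T_\pi$. Concretely, the Bellman equation $\vf_\pi = \R_\pi + \gamma \T_\pi \vf_\pi$ gives $\vf_\pi = (\bm I - \gamma \T_\pi)^{-1}\R_\pi$, and the factorization $\bm I - \gamma \T_\pi = \gamma\bigl(\tfrac{1}{\gamma}\bm I - \T_\pi\bigr)$ reduces the whole problem to understanding the resolvent $(\lambda \bm I - \T_\pi)^{-1}$ evaluated at $\lambda = \tfrac{1}{\gamma}$.

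First I would record the three algebraic facts that drive the argument. Since $\T_\pi^*$ is the Ces\`aro-limit of $\T_\pi^n$, it is the spectral projection onto the eigenspace of $\T_\pi$ for the eigenvalue $1$, so $\T_\pi \T_\pi^* = \T_\pi^* \T_\pi = \T_\pi^* = (\T_\pi^*)^2$; the Drazin inverse satisfies $\T_\pi^* \bm H_{\T_\pi} = \bm H_{\T_\pi}\T_\pi^* = \bm 0$; and $(\bm I - \T_\pi)\bm H_{\T_\pi} = \bm H_{\T_\pi}(\bm I - \T_\pi) = \bm I - \T_\pi^*$. These are standard properties of the fundamental matrix, verifiable directly from the defining formula $\bm H_{\T_\pi} = (\bm I - \T_\pi + \T_\pi^*)^{-1}(\bm I - \T_\pi^*)$.

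Then I would claim the resolvent identity
\begin{align*}
(\lambda \bm I - \T_\pi)^{-1} = \frac{1}{\lambda - 1}\T_\pi^* + \sum_{n=0}^{\infty}(1-\lambda)^n \bm H_{\T_\pi}^{n+1},
\end{align*}
and verify it by left-multiplying the right-hand side by $\lambda \bm I - \T_\pi$ and checking that the product collapses to $\bm I$. On the range of $\T_\pi^*$ the operator acts as multiplication by $\lambda - 1$, since $(\lambda \bm I - \T_\pi)\T_\pi^* = (\lambda - 1)\T_\pi^*$, which turns the first term into $\T_\pi^*$. For the series, writing $\lambda \bm I - \T_\pi = (\bm I - \T_\pi) + (\lambda - 1)\bm I$ and applying the identities above shows that consecutive terms telescope, leaving exactly $\bm I - \T_\pi^*$; summing the two contributions yields $\bm I$.

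Finally, substituting $\lambda = \tfrac{1}{\gamma}$ (so $\lambda - 1 = \tfrac{1-\gamma}{\gamma}$ and $1 - \lambda = \tfrac{\gamma-1}{\gamma}$), multiplying by $\tfrac{1}{\gamma}$, and applying the result to $\R_\pi$ gives $\vf_\pi = \tfrac{1}{1-\gamma}\,\T_\pi^*\R_\pi + \tfrac{1}{\gamma}\sum_{n \ge 0}\left(\tfrac{\gamma-1}{\gamma}\right)^n \bm H_{\T_\pi}^{n+1}\R_\pi$; identifying the gain as $\gain_\pi = \T_\pi^*\R_\pi$ closes the argument. The main obstacle is the convergence of this Neumann-type series, which converges in operator norm precisely when $|1-\lambda|\,\sigma(\bm H_{\T_\pi}) < 1$, i.e.\ $\tfrac{1-\gamma}{\gamma}\,\sigma(\bm H_{\T_\pi}) < 1$. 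Rearranging gives $\gamma > \tfrac{\sigma(\bm H_{\T_\pi})}{\sigma(\bm H_{\T_\pi})+1}$, exactly the hypothesis on $\gamma$; thus the stated range is not an artifact but the sharp domain of validity, and it is also what licenses the term-by-term manipulation used in the telescoping verification.
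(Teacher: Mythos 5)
Your proof is correct: the algebraic identities for $\T_\pi^*$ and $\bm H_{\T_\pi}$ do follow directly from the Ces\`aro-limit definition and the formula $(\bm I - \T_\pi + \T_\pi^*)^{-1}(\bm I - \T_\pi^*)$, the telescoping verification of the resolvent expansion is valid under the spectral-radius condition $\tfrac{1-\gamma}{\gamma}\,\sigma(\bm H_{\T_\pi}) < 1$, and specializing at $\lambda = 1/\gamma$ together with $\gain_\pi = \T_\pi^* \R_\pi$ yields exactly \eqref{eq:decomposition}. Note, however, that the paper offers no proof to compare against: Theorem~\ref{thm:vf decomposition} is recalled as a known result of \citet{LamondPuterman89} and used as a black box in proving Theorem~\ref{thm:momdp bound}. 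Your argument---a Laurent expansion of $(\lambda \bm I - \T_\pi)^{-1}$ around $\lambda = 1$, with $\T_\pi^*$ as the singular part and powers of the Drazin inverse generating the regular part---is precisely the ``Laurent series expansion and Drazin generalized inverse'' approach the paper attributes to that source, so you have in effect reconstructed the cited proof rather than found a different route.
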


Using Th.~\ref{thm:vf decomposition}, and assuming $\gamma$ close enough to $1$, we can prove an error bound for GGF (policies are GGF-optimal):
\begin{theorem}\label{thm:momdp bound}
For any weakly-communicating MOMDP:
\vspace{-.5em}
\begin{align*}
&\GGI(\vaR_{\pi^*_\gamma}) 
     \ge \GGI(\vaR_{\pi^*_1}) \\
     & \qquad - \overline\vR (1-\gamma) \left(
     \rho(\gamma, \sigma(\bm{H}_{\T_{\pi^*_1}})) + \rho(\gamma, \sigma(\bm{H}_{\T_{\pi^*_\gamma}})) \right),
\end{align*}
where $\overline\vR = \max_\pi \|\vR_\pi\|_1$ and
$\rho(\gamma, \sigma) = \frac{\sigma}{\gamma -(1-\gamma) \sigma}$.
\end{theorem}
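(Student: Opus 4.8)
The plan is to transport the scalar decomposition of Theorem~\ref{thm:vf decomposition} to the multiobjective objective and then exploit that $\GGF$ is positively homogeneous and Lipschitz. First I would apply \eqref{eq:decomposition} componentwise to each of the $\nO$ objectives and left-multiply by the initial distribution $\sd_0$. Since $\vaR_\pi = \sd_0 \vgain_\pi$ and $\sd_0 \vvf_\pi$ is the discounted vector objective $\vJ(\pi)$, this gives
\[
\sd_0 \vvf_\pi = \tfrac{1}{1-\gamma}\vaR_\pi + \bm\delta_\pi,\qquad \bm\delta_\pi := \tfrac{1}{\gamma}\sum_{n=0}^\infty\left(\tfrac{\gamma-1}{\gamma}\right)^n \sd_0 \bm{H}_{\T_\pi}^{\,n+1}\vR_\pi,
\]
hence $\vaR_\pi = (1-\gamma)\,\sd_0\vvf_\pi - (1-\gamma)\,\bm\delta_\pi$. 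Taking $\gamma$ close enough to $1$ ensures $\gamma \in (\frac{\sigma(\bm{H}_{\T_\pi})}{\sigma(\bm{H}_{\T_\pi})+1},1)$ for $\pi\in\{\pi^*_1,\pi^*_\gamma\}$, which is exactly the convergence regime required by Theorem~\ref{thm:vf decomposition}.

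Next I would record two elementary facts about $\GGF$ from its representation \eqref{eq:ggi min}: since every $\w_\sigma$ is nonnegative with $\|\w_\sigma\|_1=1$, $\GGF$ is positively homogeneous ($\GGF(c\bm v)=c\,\GGF(\bm v)$ for $c>0$) and $1$-Lipschitz for $\|\cdot\|_\infty$, so $\GGF(\bm a+\bm b)\ge\GGF(\bm a)-\|\bm b\|_\infty$. Feeding the decomposition into these two facts, and inserting the optimality of $\pi^*_\gamma$ for the discounted $\GGF$ objective, yields the chain
\begin{align*}
\GGF(\vaR_{\pi^*_\gamma})
&\ge (1-\gamma)\,\GGF(\sd_0\vvf_{\pi^*_\gamma}) - (1-\gamma)\|\bm\delta_{\pi^*_\gamma}\|_\infty\\
&\ge (1-\gamma)\,\GGF(\sd_0\vvf_{\pi^*_1}) - (1-\gamma)\|\bm\delta_{\pi^*_\gamma}\|_\infty\\
&\ge \GGF(\vaR_{\pi^*_1}) - (1-\gamma)\big(\|\bm\delta_{\pi^*_1}\|_\infty+\|\bm\delta_{\pi^*_\gamma}\|_\infty\big).
\end{align*}
The first line applies homogeneity and the Lipschitz bound to $\vaR_{\pi^*_\gamma}$; the second uses $\GGF(\sd_0\vvf_{\pi^*_\gamma})\ge\GGF(\sd_0\vvf_{\pi^*_1})$ (the definition of the GGF-$\gamma$-optimal $\pi^*_\gamma$); and the third re-applies the decomposition and the Lipschitz bound to $\pi^*_1$.

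It then remains to prove the per-policy estimate $\|\bm\delta_\pi\|_\infty \le \overline\vR\,\rho(\gamma,\sigma(\bm{H}_{\T_\pi}))$. I would bound the series termwise by the triangle inequality and then sum a geometric series; with $\sigma:=\sigma(\bm{H}_{\T_\pi})$ and $\|\sd_0\|_1=1$, the target constant appears exactly,
\[
\tfrac{1}{\gamma}\sum_{n=0}^\infty\left(\tfrac{1-\gamma}{\gamma}\right)^n \sigma^{\,n+1} = \frac{\sigma}{\gamma-(1-\gamma)\sigma} = \rho(\gamma,\sigma),
\]
which converges under the same $\gamma$-condition as above. Substituting this into the chain gives the stated bound. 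I expect the termwise estimate $\|\sd_0\bm{H}_{\T_\pi}^{\,n+1}\vR_\pi\|_\infty \le \sigma^{\,n+1}\overline\vR$ to be the main obstacle: $\sigma$ is only the spectral radius, and every standard matrix norm lower-bounds (rather than upper-bounds) it, so $\|\bm{H}_{\T_\pi}^{\,n+1}\|$ need not be $\le\sigma^{n+1}$ for finite $n$. Making this rigorous requires either an operator norm adapted to $\bm{H}_{\T_\pi}$ whose value is within $\epsilon$ of $\sigma$ (Householder) followed by continuity of $\rho$ in $\sigma$ as $\epsilon\to0$, or a spectral/Jordan decomposition of $\bm{H}_{\T_\pi}$ to control the decay of its powers; the remaining ingredients (homogeneity, Lipschitzness, optimality, geometric summation) are routine.
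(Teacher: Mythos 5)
Your proposal follows essentially the same route as the paper's proof: the Lamond--Puterman decomposition (Theorem~\ref{thm:vf decomposition}) applied to both $\pi^*_1$ and $\pi^*_\gamma$, the inequality $\GGF(\sd_0\vvf_{\pi^*_\gamma}) \ge \GGF(\sd_0\vvf_{\pi^*_1})$ from discounted GGF-optimality, and a geometric series in $\frac{1-\gamma}{\gamma}\sigma(\bm H_{\T_\pi})$ for the two correction terms, which produces exactly $\rho(\gamma,\sigma)$. Your packaging of the GGF step via positive homogeneity and $1$-Lipschitzness in $\|\cdot\|_\infty$ is equivalent to the paper's device of pinning down the minimizing permutations $\sigma,\sigma'$ in \eqref{eq:ggi min} and passing to scalarized MDPs with rewards $\w_\sigma^\intercal \vR_{a,s}$; both rest on $\w \ge 0$ and $\|\w\|_1 = 1$. (You do still need stationarity of $\pi^*_1$ and $\pi^*_\gamma$, via Lemmas~\ref{lem:ssm} and~\ref{lem:constant gain}, for $\bm H_{\T_\pi}$ and Theorem~\ref{thm:vf decomposition} to even apply.)

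The obstacle you single out at the end is genuine---and, notably, the paper does not resolve it either. After a Cauchy--Schwarz step the paper reaches $\sum_{n\ge 1}\left(\frac{1-\gamma}{\gamma}\right)^n \|\bm H_{\T_{\pi^*_1}}\|_2^n\,\overline\vR$ in \eqref{eq:cor6} and then rewrites $\|\bm H_{\T_{\pi^*_1}}\|_2^n$ as $\sigma(\bm H_{\T_{\pi^*_1}})^n$ in the following line, i.e., it silently identifies the operator $2$-norm with the spectral radius. For a non-normal matrix the inequality goes the wrong way: $\|\bm H^n\|_2 \ge \sigma(\bm H)^n$, exactly as you observe. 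Your proposed repairs also do not rescue the stated constant: a Householder $\epsilon$-norm changes the norms in which $\sd_0$ and $\vR_\pi$ are measured, leaving $\epsilon$-dependent equivalence constants that do not vanish as $\epsilon \to 0$; and Gelfand's formula or a Jordan/diagonalization argument gives $\|\bm H^n\|_2 \le C\,\sigma(\bm H)^n$ with $C$ a condition-number-type constant (e.g., $\kappa_2$ of the eigenvector matrix), which reintroduces precisely the dependence that Theorem~\ref{thm:momdp bound} was designed to avoid relative to Theorem~\ref{thm:Kakade}. So the bound as stated is rigorously established only when the relevant $\bm H_{\T_\pi}$ are normal, or with $\sigma(\cdot)$ read as the spectral norm, or with an extra multiplicative constant; your write-up is exactly as complete as the paper's, and more candid about where the difficulty lies.
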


Interestingly, Theorem~\ref{thm:momdp bound} applied to the single-objective case ($\nO=1$) yields an alternative approximation error bound, which is more general than that of Theorem~\ref{thm:Kakade}.
\begin{corollary}\label{cor:bound MDP}
For any weakly-communicating MDP:
\vspace{-.5em}
\[
\aR_{\pi^*_\gamma} 
     \ge \aR_{\pi^*_1} - \overline\R (1-\gamma) \left(
     \rho(\gamma, \sigma(\bm{H}_{\T_{\pi^*_1}})) + \rho(\gamma, \sigma(\bm{H}_{\T_{\pi^*_\gamma}})) \right)
\]
where $\overline\R = \max_\pi \|\R_\pi\|$. %, 
\end{corollary}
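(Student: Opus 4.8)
The plan is to derive the corollary as the single-objective ($\nO = 1$) instance of Theorem~\ref{thm:momdp bound}, since a weakly-communicating MDP is exactly a weakly-communicating MOMDP with one objective. First I would observe that for $\nO = 1$ the GGF degenerates to the identity: the normalization $\w \in [0,1]^{\nO}$ with $\sum_i \w_i = 1$ forces $\w = (1)$, the reordering in \eqref{eq:ggi} is vacuous, and hence $\GGF(v) = v$ for every scalar $v$. Consequently $\GGF(\vaR_{\pi^*_\gamma}) = \aR_{\pi^*_\gamma}$ and $\GGF(\vaR_{\pi^*_1}) = \aR_{\pi^*_1}$, and maximizing $\GGF(\vJ(\pi))$ coincides with maximizing the scalar objective, so the GGF-$\gamma$-optimal and GGF-average-optimal policies of Theorem~\ref{thm:momdp bound} reduce precisely to the $\gamma$-optimal policy $\pi^*_\gamma$ and the average-optimal policy $\pi^*_1$ of the corollary. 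Since $\rho(\gamma, \cdot)$ and the matrices $\bm H_{\T_{\pi^*_1}}, \bm H_{\T_{\pi^*_\gamma}}$ are untouched by this reduction, the only remaining point is that the leading constant $\overline\vR = \max_\pi \|\vR_\pi\|_1$ collapses to $\overline\R = \max_\pi \|\R_\pi\|$.

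I expect this norm bookkeeping to be the one delicate step. In Theorem~\ref{thm:momdp bound} the $\ell_1$ norm is inherited from the Lipschitz behaviour of GGF along the \emph{objective} axis; for $\nO = 1$ that axis disappears (the identity map is $1$-Lipschitz for every norm), so no $\ell_1$ factor survives and one is left only with the norm over the \emph{state} components, namely the Euclidean norm $\|\cdot\|$ used in the corollary. Confirming that the inequality chain in the proof of Theorem~\ref{thm:momdp bound} indeed pairs an objective-wise $\ell_1$ term with a state-wise Euclidean term -- so that setting $\nO=1$ leaves exactly $\|\R_\pi\|$ with no spurious dimensional factor -- is the part that must be checked carefully rather than asserted.

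To make the origin of $\rho$ transparent and to double-check the scalar constant, I would also give the corollary a direct derivation from Theorem~\ref{thm:vf decomposition}. Writing $\vf_\pi = \tfrac{1}{1-\gamma}\gain_\pi + \bm b_\pi(\gamma)$ with $\bm b_\pi(\gamma) = \tfrac{1}{\gamma}\sum_{n \ge 0}\big(\tfrac{\gamma-1}{\gamma}\big)^n \bm H_{\T_\pi}^{\,n+1}\R_\pi$, left-multiplying by $(1-\gamma)\sd_0$ and using $\aR_\pi = \sd_0 \gain_\pi$ gives $\aR_\pi = (1-\gamma)\sd_0\vf_\pi - (1-\gamma)\sd_0\bm b_\pi(\gamma)$; the $\gamma$-optimality inequality $\sd_0\vf_{\pi^*_\gamma} \ge \sd_0\vf_{\pi^*_1}$ then yields $\aR_{\pi^*_\gamma} \ge \aR_{\pi^*_1} - (1-\gamma)\big(\sd_0\bm b_{\pi^*_\gamma}(\gamma) - \sd_0\bm b_{\pi^*_1}(\gamma)\big)$. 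Bounding each term by $|\sd_0\bm b_\pi(\gamma)| \le \|\bm b_\pi(\gamma)\| \le \overline\R\, \rho(\gamma, \sigma(\bm H_{\T_\pi}))$, where the last inequality sums the geometric series $\sum_{n\ge0}\big(\tfrac{(1-\gamma)\sigma}{\gamma}\big)^n = \tfrac{\gamma}{\gamma-(1-\gamma)\sigma}$ (convergent exactly on the admissible range of $\gamma$), reproduces the stated bound. The genuinely technical point in this route is the estimate $\|\bm H_{\T_\pi}^{\,n+1}\| \le \sigma(\bm H_{\T_\pi})^{n+1}$: it is immediate in a norm adapted to $\bm H_{\T_\pi}$ (e.g.\ when the matrix is diagonalizable or normal) but in full generality needs a spectral-radius/Gelfand argument, and this is where I would concentrate the effort.
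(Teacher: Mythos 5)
Your proposal is correct and is essentially the paper's own derivation: the paper gives no separate appendix proof of Corollary~\ref{cor:bound MDP}, obtaining it exactly as in your first paragraph by instantiating Theorem~\ref{thm:momdp bound} with $\nO=1$, where the normalization forces $\w=(1)$, $\GGF$ becomes the identity, the GGF-optimal policies coincide with $\pi^*_\gamma$ and $\pi^*_1$, and $\overline\vR$ collapses to $\overline\R=\max_\pi\|\R_\pi\|$. Your supplementary direct derivation is not a genuinely different route either: it is the paper's proof of Theorem~\ref{thm:momdp bound} (the chain \eqref{eq:thm1}--\eqref{eq:thm5} followed by the geometric-series bound \eqref{eq:cor4}--\eqref{eq:cor8}) with the multiobjective bookkeeping stripped out; its one genuine payoff is the observation that in the scalar case neither the permutation scalarizations nor the constant-gain Lemma~\ref{lem:constant gain} is needed, since $\gamma$-optimality can be invoked directly via $\sd_0\vf_{\pi^*_\gamma}\ge\sd_0\vf_{\pi^*_1}$.

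One substantive remark: the step you flag as delicate, $\|\bm{H}_{\T_\pi}^{\,n}\|_2\le\sigma(\bm{H}_{\T_\pi})^{n}$, is indeed the weak point, and the paper does not resolve it. In the line following \eqref{eq:cor6}, the paper silently replaces $\|\bm{H}_{\T_{\pi^*_1}}\|_2^n$ by $\sigma(\bm{H}_{\T_{\pi^*_1}})^n$ as if these were equal; that identity holds for normal (or, e.g., diagonalizable with orthogonal eigenvectors) matrices but fails in general, since $\|\bm H\|_2\ge\sigma(\bm H)$ can be strict and $\bm{H}_{\T_\pi}$ has no reason to be normal --- so the substitution goes the wrong way for an upper bound. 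Note also that a Gelfand-type argument cannot fully repair it: Gelfand's formula only yields $\|\bm{H}^n\|_2\le C_\epsilon\,(\sigma(\bm H)+\epsilon)^n$, which gives convergence of the series on the stated range of $\gamma$ but not the clean constant $\rho(\gamma,\sigma)$. A safe version of the statement replaces $\sigma(\bm{H}_{\T_\pi})$ by $\|\bm{H}_{\T_\pi}\|_2$ both inside $\rho$ and in the admissible range of $\gamma$, or else reintroduces a conditioning constant in the spirit of Theorem~\ref{thm:Kakade}. This caveat applies equally to Theorem~\ref{thm:momdp bound} and to the corollary, so your proposal is exactly as rigorous as the paper; your instinct to concentrate effort on that estimate was the right one.
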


The bounds in Theorem~\ref{thm:momdp bound} and Corollary~\ref{cor:bound MDP} clearly show that when $\gamma \to 1$, the approximation error tends to zero as expected.
Like in the bound in Theorem~\ref{thm:Kakade}, the error depends on instance-specific constants.
Here, it mainly depends on the spectral radius of the Drazin inverse of $\bm I - \T_\pi$ where $\pi$ is either a GGF-$\gamma$-optimal or GGF-average-optimal policy.
This spectral radius can be intuitively interpreted as a measure of how long the policy could spend in transient states (and therefore, as a measure of how long the policy takes to converge to its average reward).
Indeed, a larger $\sigma(\bm{H}_{\T_{\pi}})$ implies a larger $\rho(\gamma, \sigma(\bm{H}_{\T_{\pi}}))$ and a larger bound.

%%%%%%%%%%%%%%%%%%%%%%%%%%%%%%%%%%%%%%%%%%%%%
\section{Algorithms}\label{sec:algos}
In this section, we explain how to modify the DQN and policy gradient algorithms in order to solve Problem~\eqref{eq:ggi pb} with discounted rewards.
As previously discussed, it can provide approximate solutions to that problem with average rewards for $\gamma$ close enough to $1$.

\paragraph{DQN.}

To optimize GGF, we modify the deep Q network (DQN) to
take their values in $\mathbb R^{|\Ac| \times \nO}$ instead of $\mathbb R^{|\Ac|}$.
DQN is trained to predict the multiobjective $\vQ$ function.
Note that directly predicting the GGF values would have prevented bootstrapping.
Thus, the regression target of DQN becomes:
\begin{align*}
\hat{\vQ}_{\theta}(s,a) &= \us{\R} + \gamma \hat{\vQ}_{\theta'}(s', a^*),
\end{align*}
where $a^* =  \argmax_{a' \in \Ac} \ \GGI\big(\us{\R} + \gamma  \hat{\vQ}_{\theta'}(s',a')\big)$.
This adapted version of DQN is called \textit{GGF-DQN}.

Note that ideally, $a^*$ should be selected as $\argmax_{a' \in \Ac} \ \GGI\big(\Expect_{s'}\left[\us{\R} + \gamma  \hat{\vQ}_{\theta'}(s',a')\big)\right]$.
However, it is hard to compute even if the expectation is estimated by a sample mean.
Therefore, our modification of DQN optimizes in fact an expectation of GGF, and not a GGF of an expectation.
By Jensen inequality, this implies that we are actually optimizing a lower bound of the correct objective function.

\paragraph{Policy Gradient Methods.}
A natural alternative approach for solving Problem~\eqref{eq:ggi pb} is to use a policy gradient method.
Contrary to the adaptation of DQN, it directly optimizes the desired objective function.
Another advantage is that it can learn a stochastic policy, which may strictly dominate a deterministic one for GGF.

The policy gradient is formulated as follows for GGF:
\begin{align*}
\nabla_{\bm\theta} \GGI (\vJ(\pi_{\bm\theta})) =& \nabla_{\vJ(\pi_{\bm\theta})} \GGI (\vJ(\pi_{\bm\theta})) \cdot \nabla_{\bm\theta} \vJ(\pi_{\bm\theta}) \\
=& \w_\sigma^\intercal \nabla_{\bm\theta} \vJ(\pi_{\bm\theta}),
\end{align*}
where $\nabla_{\bm\theta} \vJ(\pi_{\bm\theta})$ is a $\nO \times N$ matrix representing the classic policy gradient over the $\nO$ objectives, $\w_\sigma$ is sorted according to $\vJ(\pi_{\bm\theta})$ and $N$ is the number of policy parameters.

In the experiments, we applied it to PPO and A2C:
$\nabla_{\bm\theta} \vJ(\pi_{\bm\theta})$ is respectively replaced by $\nabla_{\bm\theta} \vJ_\text{PPO}(\pi_{\bm\theta})$ and $\nabla_\theta \vJ_\text{A2C}(\pi_{\bm\theta})$. 
In order to sort $\w$, the initial states are stored to empirically estimate $\vJ(\pi_{\bm\theta}) = \Expect_{s_0 \sim \sd_0} [\hat{\vvf}(s_0)]$ where $\hat{\vvf} : \St \to \mathbb R^\nO$ is approximated with the critic.
The resulting algorithms are called respectively \textit{GGF-PPO} and \textit{GGF-A2C}.

%%%%%%%%%%%%%%%%%%%%%%%%%%%%%%%%%%%%%%%%%%%%%
\section{Experimental Results}\label{sec:expes}
To test our three algorithms, we carried out experiments in three different domains (for detailed descriptions, see Appendix B):
\begin{enumerate*}[(i),mode=unboxed, 
afterlabel={{\nobreakspace}}, 
]
\item Species conservation (SC),
\item Traffic light control (TL),
\item Data center control (DC).
\end{enumerate*}
The first domain (SC) corresponds to a conservation problem encountered in ecology, where the goal is to maintain the populations of several interacting endangered species. 
We adapt the two-species model proposed by \citet{chades2012setting} to specifically take into account fairness with respect to the two species, namely an endangered species (sea otters) and its prey (northern abalone).
A state encodes the population numbers of the species.
The transition function is based on the population growth models for both species taking into account factors such as  poaching (for abalones) or oil spills (for sea otters).
In order to keep the two populations balanced, five actions are considered: do nothing
introduce sea otters, enforce antipoaching, 
control sea otters, and 
one-half antipoaching and one-half control sea otters.
Vector rewards correspond to scaled species  densities (in $m^{-2}$).

The second domain (TL) corresponds to the classic traffic light control problem, in which an agent controls the traffic lights at one intersection in order to optimize traffic flow.
The usual approach to this problem amounts to minimizing the expected sum of waiting times over all lanes.
Instead, we propose to take into account fairness with respect to each road ($D=4$ in our experiments).
In other words, the goal is to learn a controller that optimizes the expected waiting times per road.
More specifically, we consider an eight-lane intersection, where the four directions have 2 lanes.
A state is composed of the total waiting time and density of cars (in $[0, 1]$) stopped at the intersection in each lane.
It also contains the current phase (i.e., which lanes and directions have green/red lights) of the traffic lights.
An action corresponds to a change of phase.
Traffic randomly generated with fixed distributions.
At each time step vehicles are emitted randomly with the given probability by following a binomial distribution. The binomially distributed flow approximates a Poisson distribution for small probabilities that a number of random events happens with certain rate independently. 
A reward is a vector whose components are the sums of negative waiting times for each lane.

The third domain (DC) is a data center control problem, where a centralized controller manages a  computer network that is shared by a certain number of hosts in order to optimize the bandwidths of each host \citep{iroko}. 
Here, fairness is expressed with respect to hosts ($D=16$ in our experiments).
This kind of problem can typically occur in software-defined networking (SDN) for instance.
A state encodes network statistics (i.e., queue length, derivative over time of queue length, number of packet drops, and queue length above some limit) and the current bandwidth allocation to hosts.
An action is a vector of bandwidth allocation.
Traffic between hosts is randomly generated.
A reward is a vector whose components are  bandwidths per host penalized by a sum of queue lengths (in order to avoid bufferbloat).

The three domains are roughly ranked in increasing complexity (with also increasing number of objectives).
The first two domains have discrete state-action spaces, while the third has continuous state-action spaces.
As they are all episodic problems, they are all communicating (MO)MDPs.

On those domains, we typically ran DQN, PPO, A2C, and their adaptations to GGF (i.e., GGF-DQN, GGF-PPO, GGF-A2C).
The hyperparameters of the algorithms were optimized (Appendix C) and all the experiments were conducted using Lightweight HyperParameter Optimizer (LHPO), an open source library used to run parallel experiments on a cluster \cite{zimmer2018phd}.
Two computers with double CPU sockets have been used (Intel Xeon CPU E5-2678 v3).
The unnormalized GGF coefficients are defined as $\w_i = \frac{1}{2^i}$ from 0 to $\nO-1$.
All the experimental results (e.g., plots) are averaged over 50 (resp. 20) runs with different seeds for SC and TL (resp. DC as it is a complex environment).

We now present the \pw{main} results of ours experiments (for more, see Appendix D).
They have been designed to answer the following questions:
\begin{enumerate*}[(A),mode=unboxed, afterlabel={{\nobreakspace}},font={\bfseries}]
\item What is the impact of optimizing GGF instead of the average of the objectives?
\item How do the algorithms adapted to GGF compare with each other and with their standard versions? 
\item How do fair deterministic and stochastic policies compare?
\item What is the effect of $\gamma$ with respect to GGF-average optimality?
\item How do those algorithms perform in continuous domains?
\end{enumerate*}

\paragraph{Question (A)}

In order to answer (A), we discuss the experimental results of \us{DQN, A2C,} PPO and their GGF \us{counterparts} in the SC domain.
We can first compare them in terms of the average over the two accumulated densities \pw{during learning phase (see
Figure~\ref{fig:species-ppo-accumulative}).}
As a sanity check, \pw{the figure} also includes the uniformly random policy.
As densities are accumulated over an episode, drops happen in the curves of \pw{all} algorithms.
As expected, the random policy performs the worse and \us{standard RL algorithms} 
are the best because the average density is roughly what is optimized \us{in those algorithms.}
An algorithm adapted for GGF would normally perform worse than its original version in terms of the average of the objectives, since it trade-offs between efficiency and equity.
\begin{figure}[tb]
\begin{center}
\centering
\includegraphics[width=1.0\linewidth]{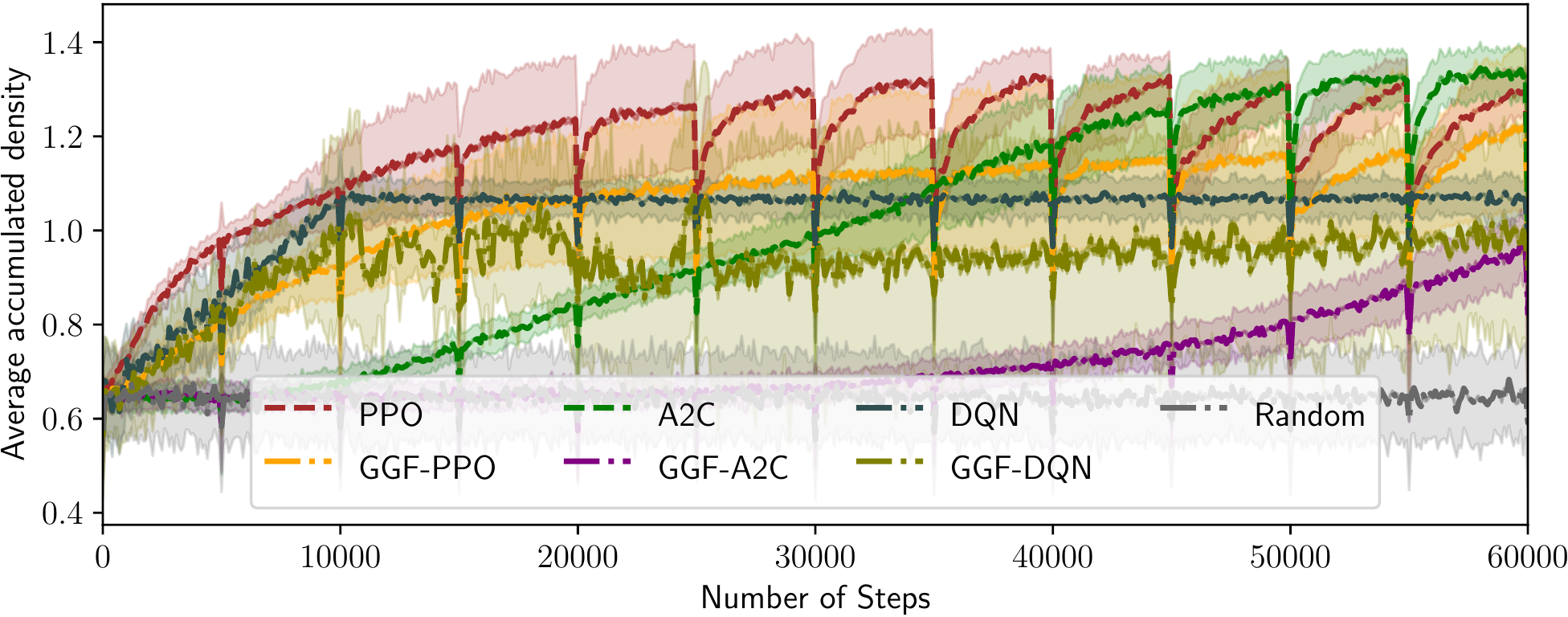}
\caption{Average accumulated densities of \us{DQN, A2C, PPO and their GGF versions} during the learning phase with those of the random policy in the SC domain.} 
\label{fig:species-ppo-accumulative}
\end{center}
\end{figure}

More interestingly, for our purposes, we can compare the algorithms in terms of the GGF score.
After training, the obtained policies are applied 50 times in the environment.
This score is the GGF of the sample average vector rewards of the generated trajectories.
Figure~\ref{fig:species-ppo-box} shows the distribution of this score for the policies learned by \us{DQN, A2C, PPO and their GGF algorithms.}
The number of steps during training and testing is 3650 (corresponding to 10 years).
As expected, \us{all the three GGF algorithms have higher GGF score than their original algorithms.} 
\begin{figure}[tb]
\vskip 0.2in
\begin{center}
\includegraphics[width=1.0\linewidth]{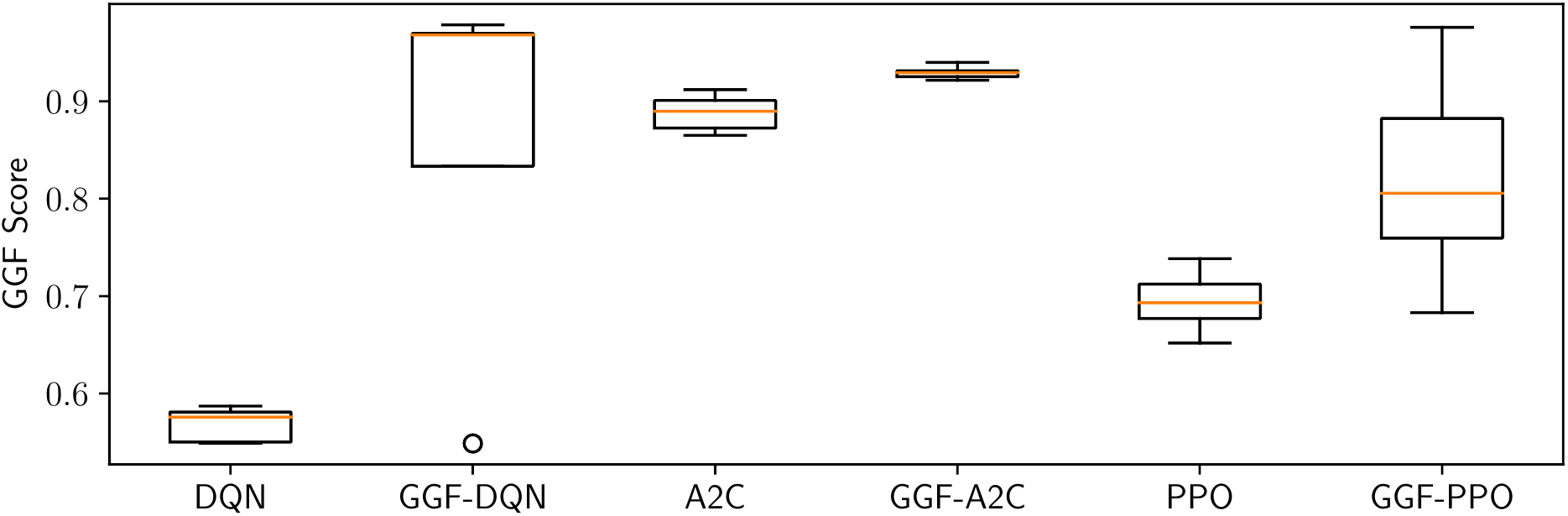}
\caption{GGF scores of \us{DQN, A2C, PPO and their GGF algorithms} during the testing phase in the SC domain.} 
\label{fig:species-ppo-box}
\end{center}
\vskip -0.2in
\end{figure}

As the GGF score does not directly give a clear picture of how balanced the objectives are, we also provide the plots of non-aggregated accumulated densities estimated after training (Figure~\ref{fig:species-separated-densities}), which can easily be done for the SC domain as it is bi-objective.
We can observe again that standard RL algorithms obtain higher total accumulated densities than their GGF counterparts.
However, the individual densities of the two species for the standard approaches are much more unequal than those obtained with our approach, which provides much fairer solutions.

Because we cannot easily display the non-aggregated objectives in all the domains, we introduce additional statistics to evaluate fairness.
Notably, the \textit{Coefficient of Variation} (CV), which can be understood as a simple measure of inequality.
In Figure~\ref{fig:species-cv}, every algorithm optimizing GGF have a lower CV and a higher minimum density.

\begin{figure}[tb]
\begin{center}
\centering
\includegraphics[width=1.0\linewidth]{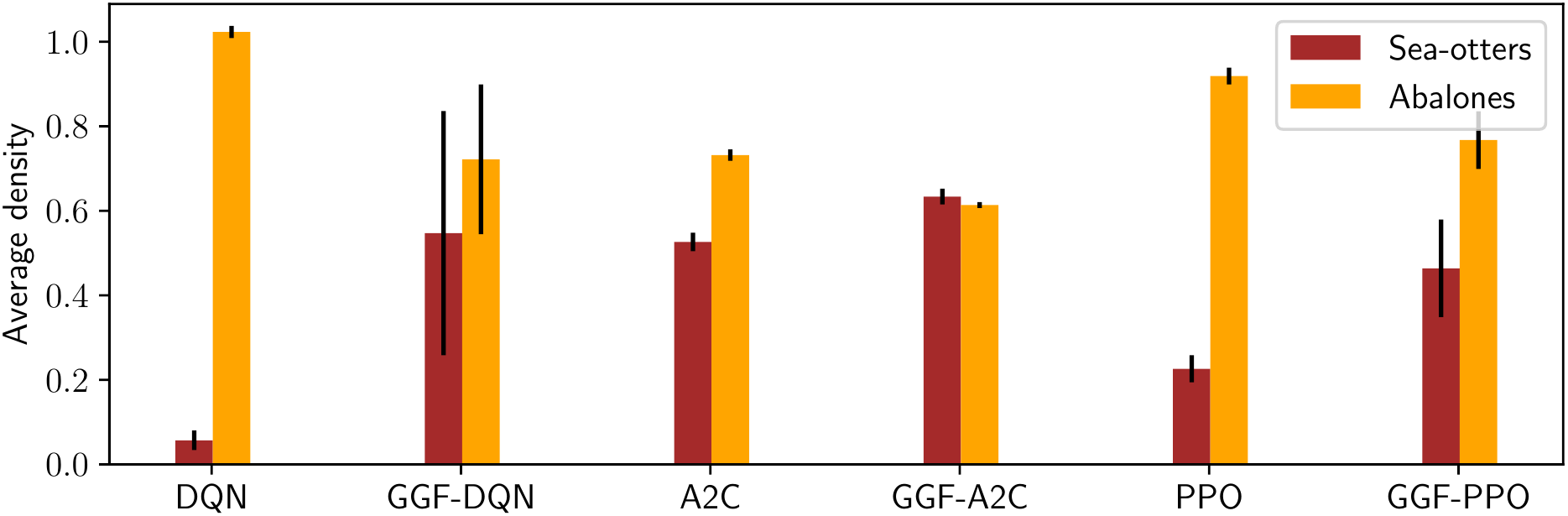}
\caption{Individual densities for DQN, A2C, PPO and their GGF versions during the testing phase in the SC domain.} 
\label{fig:species-separated-densities}
\end{center}
\end{figure}

\begin{figure}[tb]
\begin{center}
\centering
\includegraphics[width=1.0\linewidth]{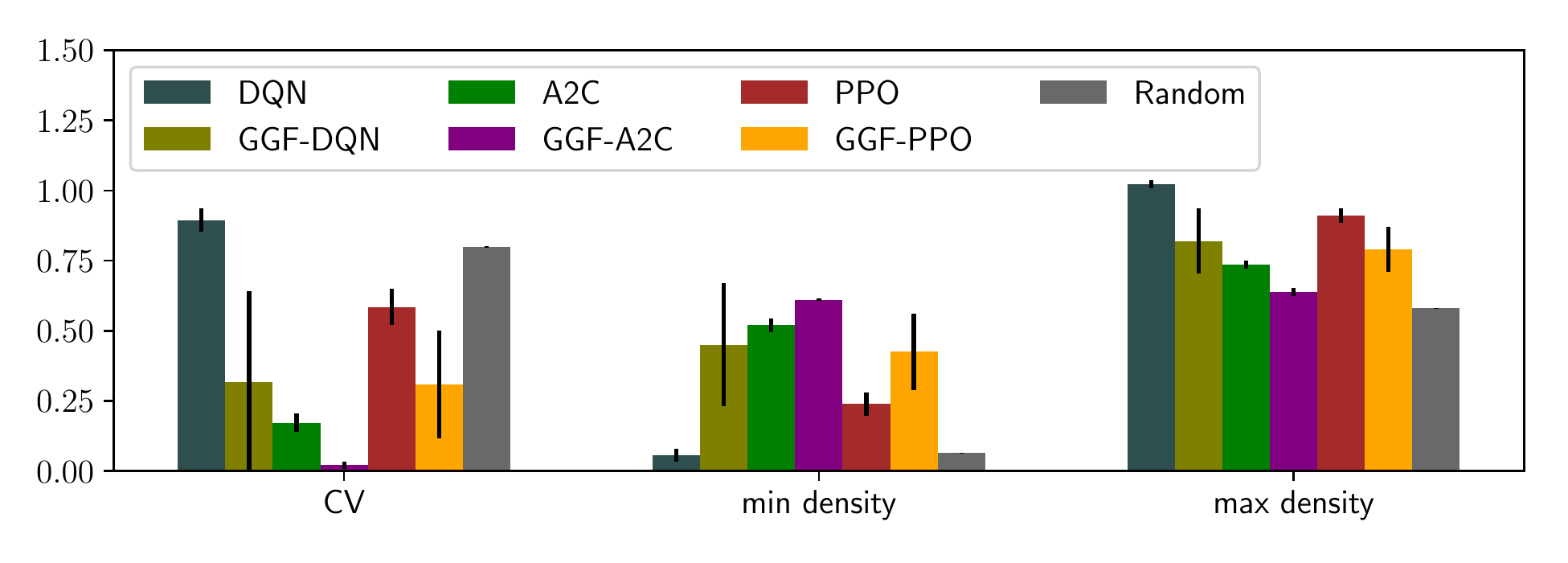}
\caption{CV, minimum and maximum densities of DQN, A2C, PPO and their GGF counterparts during the testing phase in the SC domain.} 
\label{fig:species-cv}
\end{center}
\end{figure}

\paragraph{Question (B)}
To answer (B), we turn to the TL domain, a more complex environment.
Figure~\ref{fig:online} shows the waiting times averaged over all the lanes obtained during the learning phase by the six different RL algorithms we considered.
As a reference, we added the performances of the random policy and a fixed policy.
This latter policy cycles between all the phases at a fixed frequency, which has been optimized over many simulations.
The fixed policy naturally performs better than the random one.
When training ends, it is worse than all RL algorithms. 
As expected, all standard algorithms performs better than their GGF counterparts, because the average accumulated waiting times correspond to the measure optimized by the original algorithms.

\begin{figure}[tb]
\begin{center}
\centering
\includegraphics[width=1.0\linewidth]{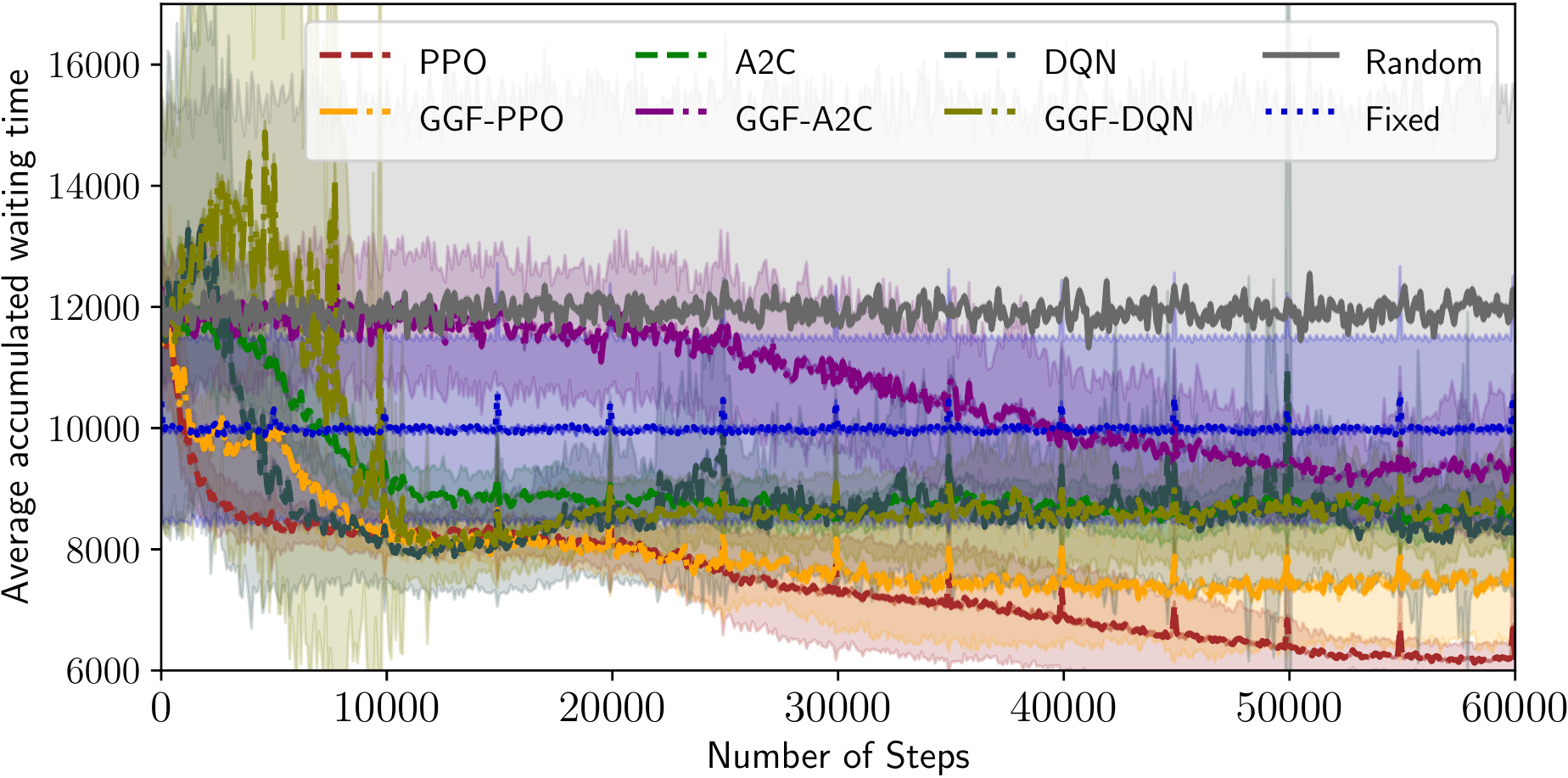}
\caption{Average waiting times of DQN, A2C, PPO, and their GGF counterparts during learning phase, and those of the fixed and random policies in the TL domain.}
\label{fig:online}
\end{center}
\end{figure}

Figure~\ref{fig:offline-performance-reward}  depicts the GGF score (over minus waiting times) computed after training. 
It also includes results for PPO and GGF-PPO with two different values of $\gamma$ (i.e., 0.99 and close to 1), which we discuss in the answer to (D).
Although, the fixed policy is not included for space reasons as it has the worse GGF-performance, we note that GGF-DQN performs better than the fixed policy, which shows that GGF-DQN has indeed optimized for fairness.
All the three GGF algorithms have better GGF scores than their original counterparts. 
GGF-PPO achieves the best score.

To confirm that those high GGF scores correspond to fairer solutions, we can indeed observe in Figure~\ref{fig:sumo-cv} that our proposed algorithms always achieve a lower CV than their original algorithms. 
Among all the algorithms, PPO performs the best as it has the lowest waiting time. 
Similar to PPO, GGF-PPO also has lower waiting times in all roads but with more balanced distributions of waiting times in each road.

\begin{figure}[tb]
\begin{center}
\centering
\includegraphics[width=1.0\linewidth]{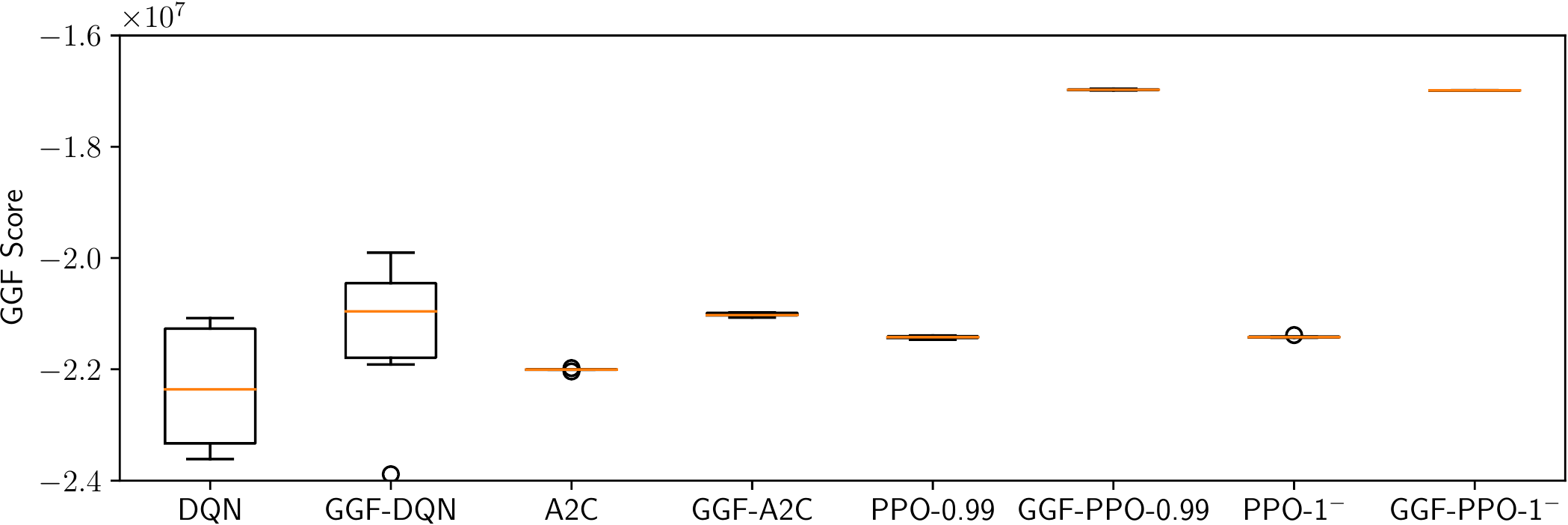}
\caption{GGF scores of DQN, A2C, PPO, and their GGF versions, with those of PPO and GGF-PPO  when $\gamma$ is close to 1, during the testing phase in the TL domain.}
\label{fig:offline-performance-reward}
\end{center}
\end{figure}

\begin{figure}[tb]
\begin{center}
\centering
\includegraphics[width=1\linewidth]{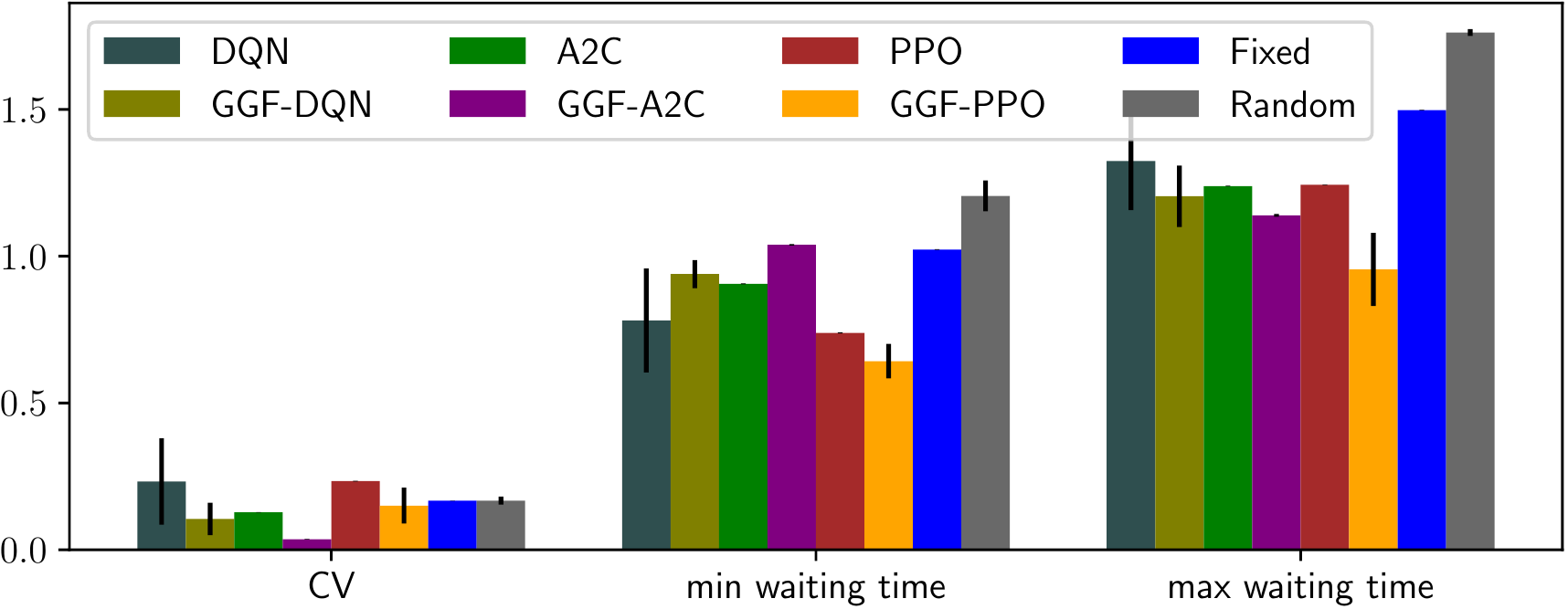}
\caption{CV, minimum and maximum waiting times of DQN, A2C, PPO and their GGF counterparts during the testing phase in the TL domain. The minimum and maximum waiting times have been divided by 3000 to be displayable with the CV.}
\label{fig:sumo-cv}
\end{center}
\end{figure}

\paragraph{Question (C)}

Algorithms optimizing a stochastic policy often perform better than DQN in terms of the average (or total) of the objectives (see~Figure~\ref{fig:species-ppo-accumulative}, Figure~\ref{fig:online}, or Figure~23 in Appendix).
However, in terms of GGF, for a simple domain like SC, GGF-DQN actually performs well, while the conclusion is reversed for a more complex domain such as TL.
This may be due to the partial observability of the domain and the use of GGF, which calls for stochastic policies for fairer solutions (as suggested by our theoretical discussion).
Figure~\ref{fig:online} indicates that the price of fairness (i.e., loss in terms of the average of the objectives for optimizing GGF instead of a utilitarian criterion) is limited.

\paragraph{Question (D)}
We also ran the algorithms with $\gamma$ very close to $1$, i.e., $\gamma=0.99999$.
The last two boxplots of Figure~\ref{fig:offline-performance-reward} show that the results for GGF-PPO are very similar to those with $\gamma=0.99$.
This indicates that the policy found by GGF-PPO is close to GGF-average optimal.
Also, this suggests that in practice, except for difficult MDP structures, using $\gamma=0.99$ is sufficient.

\paragraph{Question (E)}

Although our theoretical discussion concerned finite MOMDPs, we conjecture that similar results could be obtained in continuous spaces by adding some usual technical conditions \cite{Arapostathis1993}.
Therefore, we also tried our approach (with A2C and PPO) on the DC domain, whose states and actions are continuous.

Figure~\ref{fig:iroko-box} illustrates the offline performance of A2C, PPO, GGF-A2C, and GGF-PPO in terms of GGF score. 
As a reference, it also includes the GGF score of a fixed policy.
As expected, the fixed policy has the lowest GGF score.
GGF-A2C and GGF-PPO have better GGF scores than their original counterparts. 

This shows clearly that if a fair policy is important, the usual approach based on a weighted sum to aggregate the objectives (with equal weights) is insufficient.

\begin{figure}[tb]
\begin{center}
\centering
\includegraphics[width=1.0\linewidth]{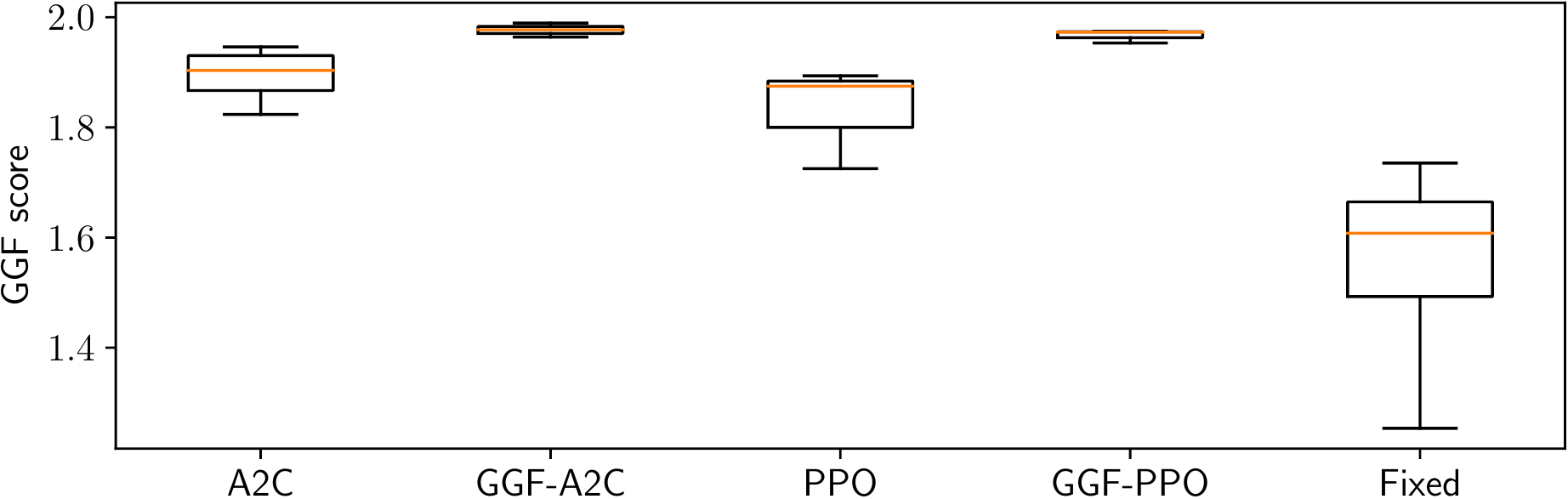}
\caption{GGF scores of A2C, PPO, and their GGF versions, with those of a fixed policy, during the testing phase in the DC domain.} 
\label{fig:iroko-box}
\end{center}
\end{figure}

Figure \ref{fig:iroko-cv} illustrates the performances of different RL algorithms and their GGF algorithms with those of the fixed and the random policy in terms of their CV, minimum and maximum of bandwidths.
As expected, the random policy performs worse as it has the lowest minimum and maximum bandwidths and also obtain the lowest accumulated bandwidth.
The fixed policy performs better than random one.
The GGF versions of A2C and PPO have lower CV which indicates that their are more fairer than their original algorithms.

For more experimental details and results, see the appendix.

%%%%%%%%%%%%%%%%%%%%%%%%%%%%%%%%%%%%%%%%%%%%%
\section{Related Work}\label{sec:relatedwork}

Fair optimization in applied mathematics, operations research, and theoretical computer science is an active research direction \citep{OgryczakLussPioroNaceTomaszewski14}.
Numerous classic continuous and combinatorial  optimization problems \citep{HurkalaSliwinski12,OgryczakPernyWeng13,NguyenWeng17,LescaMinouxPerny19} have been extended to take into account fairness.
However, these works assume that the whole model is known and therefore only focuses on the fair optimization problem since no learning is needed.
One notable work among them \citep{OgryczakPernyWeng13} solves Problem~\eqref{eq:ggi pb} with discounted rewards using a linear-programming approach, which is possible only if the transition and reward functions are known.
In contrast, in our work, we solve Problem~\eqref{eq:ggi pb} in the RL setting  considering both discounted and average rewards and also tackling problems with large or even continuous state space. 

Fairness considerations have recently become an important topic in machine learning \citep{Busa-FeketeSzorenyiWengMannor17,SpeicherHeidariGrgicHlacaGummadiSinglaWellerZafar18,AgarwalBeygelzimerDudikLangfordWallach18,HeidariFerrariGummadiKrause18,Jiang2019,Weng19}.
Most work focuses on the impartiality aspect of fairness.
However, a few notable exceptions consider fair optimization in sequential decision-making problems.
\citet{Busa-FeketeSzorenyiWengMannor17} investigate a problem similar to ours but in the multi-armed bandit setting.
\citet{Jiang2019} consider the problem of learning fair policies in multi-agent RL where fairness is defined over agents and encoded with a different welfare function.
Besides, their focus is on learning decentralized policies in a distributed way using a consensus mechanism.
Finally, another recent work deserves to be mentioned here although it does not specifically deal with fairness.
\citet{Cheung2019} investigates a problem more general than ours in the UCRL setting \citep{JakschOrtnerAuer2010} where the focus is on regret minimization for solving efficiently the exploration-exploitation dilemma.
That work also deals with non-stationary deterministic policies in tabular MDPs, contrary to our work.

%%%%%%%%%%%%%%%%%%%%%%%%%%%%%%%%%%%%%%%%%%%%%

\begin{figure}[tb]
\begin{center}
\includegraphics[width=1.0\linewidth]{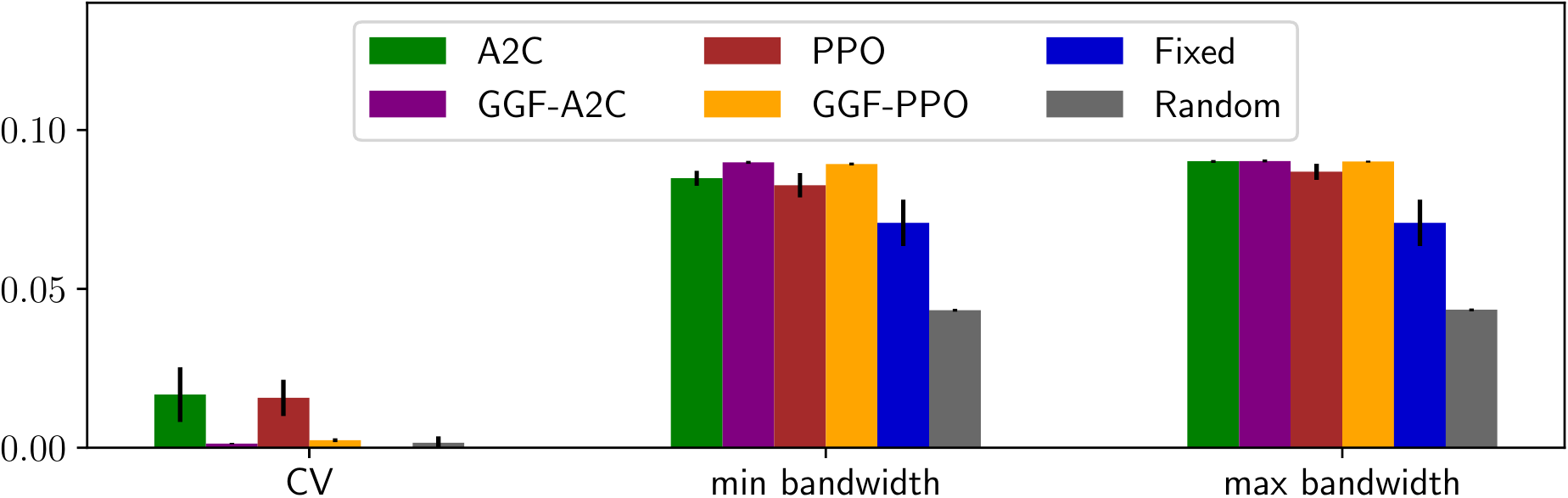}
\caption{CV, minimum and maximum bandwidths of A2C, PPO and their GGF counterparts during the testing phase in the DC domain.}
\label{fig:iroko-cv}
\end{center}
\end{figure}

\section{Conclusion}\label{sec:conclusion}

In this work, we introduced the novel problem of fair optimization in RL, which we theoretically discussed.
We proposed adaptations of three deep RL algorithms to solve large-scale problems and provided an extensive empirical validation.
As future work, we may consider other fair welfare functions \citep{OgryczakLussPioroNaceTomaszewski14}, extend to distributed control, or directly adapt RL algorithms for average reward.

% Acknowledgements should only appear in the accepted version.
\section*{Acknowledgements}

This work is supported in part by the program of National Natural Science Foundation of China (No. 61872238), the program of the Shanghai NSF (No. 19ZR1426700), and a Yahoo FREP grant.

\bibliographystyle{icml2020}
\bibliography{biblio190214}

\onecolumn

\icmltitle{Appendix: Learning Fair Policies in Multi-Objective Deep Reinforcement Learning with Average and Discounted Rewards}

\appendix

\section{Proofs of Theoretical Discussion} \label{sec:proof}

For better legibility, we first recall the equations and results that we need for our proofs.

\setcounter{equation}{7}
\begin{align}
    \GGF(\bm v) = \min_{\sigma \in \mathbb S_\nO} \w_\sigma^\intercal \bm v 
\end{align}
where $\mathbb S_\nO$ is the symmetric group of degree $\nO$ (i.e., set of permutations over $\{1, \ldots, \nO\}$), 
$\sigma$ is a permutation, and
$\w_\sigma = (\w_{\sigma(1)}, \ldots, \w_{\sigma(\nO)})$.

\setcounter{theorem}{4}
\begin{theorem} 
\citep{LamondPuterman89}
For any MDP, any stationary policy $\pi$, and any $\gamma \in (\frac{\sigma(\bm H_{\T_\pi})}{\sigma(\bm H_{\T_\pi})+1}, 1)$,
\vspace{-.5em}
\setcounter{equation}{9}
\begin{align}
    \vf_\pi = \frac{1}{1 - \gamma} \gain_{\pi} + \frac{1}{\gamma} \sum_{n=0}^{\infty} \left(\frac{\gamma-1}{\gamma}\right)^n \bm{H}_{\T_\pi}^{n+1} \R_\pi
\end{align}
where 
$\bm{H}_{\T_\pi}$ is the Drazin inverse of $\bm I - \T_\pi$, which is given by $(\bm I - \T_\pi + \T_\pi^*)^{-1}(\bm I - \T_\pi^*)$, 
$\T_\pi^*$ is the Ces\`aro-limit of $\T_\pi^n$ for $n \to \infty$, and
$\sigma(\bm H_{\T_\pi})$ is the spectral radius of matrix $\bm H_{\T_\pi}$.
\end{theorem}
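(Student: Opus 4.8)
The plan is to derive the decomposition directly from the closed form of the discounted value function, via an interest-rate change of variable followed by a resolvent (Laurent-series) expansion around the singularity of $\bm I - \T_\pi$. Since $\T_\pi$ is stochastic, $\gamma \T_\pi$ has spectral radius $\gamma < 1$, so the geometric series defining \eqref{eq:vf discounted_orig} converges and $\vf_\pi = (\bm I - \gamma \T_\pi)^{-1} \R_\pi$. First I would introduce the interest rate $\rho = \frac{1-\gamma}{\gamma}$ (so that $\gamma = \frac{1}{1+\rho}$) and write $\bm I - \gamma \T_\pi = \frac{1}{1+\rho}\bigl(\rho \bm I + \bm A\bigr)$ with $\bm A = \bm I - \T_\pi$, giving $\vf_\pi = (1+\rho)\,(\rho \bm I + \bm A)^{-1} \R_\pi$. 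The task then reduces to expanding the resolvent $(\rho \bm I + \bm A)^{-1}$, where $\bm A$ is singular because it annihilates the recurrent directions.

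Next I would record the algebraic identities linking $\bm A$, its Ces\`aro projection $\T_\pi^*$, and its Drazin inverse $\bm H_{\T_\pi}$. From $\T_\pi \T_\pi^* = \T_\pi^*$ one gets $\bm A \T_\pi^* = \T_\pi^* \bm A = \bm 0$, while the defining properties of the Drazin inverse give $\bm A \bm H_{\T_\pi} = \bm H_{\T_\pi} \bm A = \bm I - \T_\pi^*$ and $\bm H_{\T_\pi} \T_\pi^* = \T_\pi^* \bm H_{\T_\pi} = \bm 0$ (these follow by direct computation from the formula $\bm H_{\T_\pi} = (\bm I - \T_\pi + \T_\pi^*)^{-1}(\bm I - \T_\pi^*)$, using that $\T_\pi^*$ is both a left and a right fixed point of $\bm I - \T_\pi + \T_\pi^*$). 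For stochastic $\T_\pi$ the matrix $\bm A$ has index one, so $\bm H_{\T_\pi}$ is a genuine inverse of $\bm A$ on the complementary invariant subspace $\mathrm{range}(\bm I - \T_\pi^*)$. I would then split $\bm I = \T_\pi^* + (\bm I - \T_\pi^*)$ and compute the resolvent block by block: on $\mathrm{range}(\T_\pi^*)$ the operator $\bm A$ vanishes, so $(\rho \bm I + \bm A)^{-1}$ acts as $\frac{1}{\rho}\T_\pi^*$; on $\mathrm{range}(\bm I - \T_\pi^*)$, $\bm A$ is invertible, I factor $(\rho \bm I + \bm A)^{-1} = (\bm I + \rho \bm H_{\T_\pi})^{-1}\bm H_{\T_\pi}$, and expand the first factor as the Neumann series $\sum_{n=0}^{\infty}(-\rho)^n \bm H_{\T_\pi}^{n}$. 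Since $\bm H_{\T_\pi}$ already lives in this subspace, the two contributions assemble into $(\rho \bm I + \bm A)^{-1} = \frac{1}{\rho}\T_\pi^* + \sum_{n=0}^{\infty}(-\rho)^n \bm H_{\T_\pi}^{n+1}$, which I would verify by multiplying through by $\rho \bm I + \bm A$ and checking the telescoping cancellation yields $\bm I$.

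The step I expect to be the main obstacle is justifying convergence of this Neumann series and showing its convergence region is precisely the stated one. The series converges iff $\rho\,\sigma(\bm H_{\T_\pi}) < 1$; substituting $\rho = \frac{1-\gamma}{\gamma}$ and solving $\frac{1-\gamma}{\gamma}\sigma(\bm H_{\T_\pi}) < 1$ for $\gamma$ gives exactly $\gamma > \frac{\sigma(\bm H_{\T_\pi})}{\sigma(\bm H_{\T_\pi})+1}$, matching the hypothesis. Care is needed to argue that it is the \emph{spectral radius} of $\bm H_{\T_\pi}$ that controls the series: $\bm H_{\T_\pi}$ vanishes on $\mathrm{range}(\T_\pi^*)$, so its nonzero spectrum all comes from the invariant subspace where the expansion takes place, and hence $\sigma(\bm H_{\T_\pi})$ (rather than some matrix norm) is the correct convergence threshold.

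Finally I would reassemble everything: multiplying the resolvent by $(1+\rho)$ and simplifying the scalar prefactors using $1+\rho = \frac{1}{\gamma}$, $\frac{1+\rho}{\rho} = \frac{1}{1-\gamma}$, and $-\rho = \frac{\gamma-1}{\gamma}$ turns the first term into $\frac{1}{1-\gamma}\T_\pi^*$ and the sum into $\frac{1}{\gamma}\sum_{n=0}^{\infty}\bigl(\frac{\gamma-1}{\gamma}\bigr)^n \bm H_{\T_\pi}^{n+1}$. Applying this operator to $\R_\pi$ and using $\gain_\pi = \T_\pi^* \R_\pi$ (the gain equals the Ces\`aro projection of the reward) gives exactly \eqref{eq:decomposition}.
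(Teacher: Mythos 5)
Your proof is correct: the change of variable $\rho=(1-\gamma)/\gamma$, the Drazin-inverse identities $\bm A\bm H_{\T_\pi}=\bm H_{\T_\pi}\bm A=\bm I-\T_\pi^*$ and $\T_\pi^*\bm H_{\T_\pi}=\bm H_{\T_\pi}\T_\pi^*=\bm 0$, the telescoping verification of the resolvent expansion, the convergence threshold $\rho\,\sigma(\bm H_{\T_\pi})<1 \Leftrightarrow \gamma>\sigma(\bm H_{\T_\pi})/(\sigma(\bm H_{\T_\pi})+1)$, and the final identification $\gain_\pi=\T_\pi^*\R_\pi$ all check out and assemble into \eqref{eq:decomposition}. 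Note that the paper itself gives no proof of this statement---it recalls it from \citet{LamondPuterman89}---and your argument is exactly the Laurent-series/Drazin-inverse derivation the paper attributes to that reference, so it matches the intended proof.
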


\setcounter{equation}{10} 
\setcounter{theorem}{7}
We now present our proofs.

\begin{lemma}
For any MOMDP, Problem~\eqref{eq:ggi pb} admits a solution that is a stationary stochastic Markov policy.
\end{lemma}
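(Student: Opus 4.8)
The plan is to reduce the optimization over the very large class of all (possibly history-dependent, non-stationary) policies to the much smaller class of stationary stochastic Markov policies by exploiting the fact that $\vJ(\pi)$ depends on $\pi$ only through an occupation measure that is \emph{linear} in the underlying state-action frequencies. Recall that for the discounted criterion $\vJ(\pi) = \sd_{\pi_\gamma}\vR_\pi$ and for the average criterion $\vJ(\pi) = \sd_\pi \vR_\pi$; in both cases each of the $\nO$ objectives is a linear functional of the (discounted, respectively Ces\`aro-averaged) state-action occupation measure $x_\pi(s,a)$ induced by $\pi$ from the fixed initial distribution $\sd_0$.

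First I would invoke the classical characterization of achievable occupation measures (Derman, Puterman): for both criteria, the set $\mathcal X$ of occupation measures achievable by arbitrary policies coincides with the set achievable by stationary stochastic Markov policies, and it is a convex polytope. In the discounted case this is elementary, since $x_\pi$ is exactly characterized by the nonnegativity and Bellman flow constraints $\sum_a x(s',a) = \sd_{0,s'} + \gamma \sum_{s,a} x(s,a)\,\T_{a,ss'}$, and any feasible $x$ is realized by the stationary policy $\pi'(a\mid s) = x(s,a)/\sum_{a'} x(s,a')$ (with an arbitrary choice on null states). Because $\vJ$ is the image of $x_\pi$ under a fixed linear map, the two policy classes induce exactly the same set of achievable value vectors $\mathcal V = \{\vJ(\pi)\} \subset \mathbb R^\nO$, which is therefore compact, being the linear image of a compact polytope.

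Given this, the conclusion is immediate: the supremum of $\GGF(\vJ(\pi))$ over all policies equals the supremum over stationary stochastic Markov policies, since in both cases the value vectors range over the identical set $\mathcal V$. As $\GGF$ is continuous (being a minimum of finitely many linear forms, cf.\ \eqref{eq:ggi min}) and $\mathcal V$ is compact, the supremum is attained at some $\bm v^\star \in \mathcal V$, and any stationary stochastic Markov policy realizing $\bm v^\star$ is the desired solution. Notice that this argument never uses the specific form of $\GGF$: it works for \emph{any} objective, which is why the statement holds in particular for every monotonic function. Monotonicity would only be needed for the weaker alternative route of showing that a stationary policy \emph{Pareto-dominates} an arbitrary one and then appealing to the Pareto-monotonicity of $\GGF$.

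The main obstacle is the occupation-measure equality across policy classes in the average-reward case, which for general weakly-communicating (and possibly multichain) MOMDPs is genuinely delicate: the long-run frequencies are defined through a Ces\`aro limit, and one must rule out that the limiting frequencies produced by cleverly chosen non-stationary policies escape the stationary polytope. This is precisely the part that relies on the cited classical MDP theory rather than on an elementary constraint description, whereas the discounted case contributes essentially no difficulty.
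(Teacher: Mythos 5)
Your discounted-reward argument is essentially the paper's own: evaluation depends on a policy only through its discounted occupation measure, and the set of such measures generated by arbitrary policies is already generated by stationary stochastic Markov policies (the paper cites Theorem~3.1 of Altman for exactly this step). The genuine gap is in the average-reward half, and it sits precisely where you flagged the ``main obstacle'' and then deferred to classical theory: the claimed coincidence of the two occupation-measure sets is \emph{false} for the average criterion in general MOMDPs. Concretely, take two states where $s_2$ is absorbing with reward $(0,1)$, and in $s_1$ action $a$ self-loops with reward $(1,0)$ while action $b$ moves to $s_2$ with reward $(0,0)$; start in $s_1$. Every stationary stochastic Markov policy either never plays $b$ (gain $(1,0)$) or plays $b$ with positive probability and is absorbed almost surely (gain $(0,1)$), so the achievable set consists of exactly two points; but the non-stationary Markov policy that plays $b$ at time $1$ with probability $1/2$ and $a$ thereafter has gain $(1/2,1/2)$, whose GGF with $\w=(2/3,1/3)$ is $1/2>1/3$. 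Thus the occupation-measure sets do not coincide, the value sets $\mathcal V$ do not coincide, and the ``immediate'' conclusion collapses. What the classical results (Derman, Hordijk--Kallenberg, Puterman) actually give is that limit points of expected frequencies of arbitrary policies lie in the closed \emph{convex hull} of the stationary ones (equivalently, the $x$-part of the multichain LP); in multichain models the stationary set is non-convex and strictly smaller, which is exactly why non-stationarity or initial randomization buys something. Nor does the paper's standing weak-communication assumption repair your step: there, only a density/closure statement holds, and a communicating variant of the example above (put a penalty reward on the switching actions between two self-looping states) still has its GGF-average supremum approached but not attained by stationary policies, so attainment of the optimum by a \emph{stationary} policy---the entire content of the lemma---is the point your argument assumes rather than proves.

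For comparison, the paper's proof of the average half follows the route you explicitly set aside: Kallenberg's vanishing-discount results ($\lim_{\gamma\uparrow 1}(1-\gamma)\vvf_\pi \ge \vgain_\pi$ for every policy, with equality for stationary ones), the existence of a stationary $\pi^+$ whose discounted values dominate those of $\pi$ for all $\gamma$ close to $1$, and then Pareto domination of gains combined with the Pareto-monotonicity of GGF. In particular, monotonicity is not a dispensable afterthought in that argument, so your remark that the statement ``works for any objective'' does not survive once the false set-equality claim is removed. (One may debate whether the paper's own domination step is airtight in the multiobjective setting, but as written your proposal's average-reward case rests on an identity of occupation-measure sets that is simply not true, whereas the discounted case you handle correctly and identically to the paper.)
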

\begin{proof}
For discounted rewards, a straightforward adaptation of the proof of Theorem~3.1 in \citep{Altman99} shows that the discounted occupation distribution of any policy can be obtained with that of a stationary stochastic Markov policy. 
As the evaluation of policies with GGF is completely determined by their discounted occupation distributions, the GGF of any policy can be obtained with that of a stationary stochastic Markov policy.

The situation is a bit more complicated for average rewards.
We recall two results for single-objective MDPs that can straightforwardly be extended to multi-objective MDPs.
Lemma~2.6 of \citet{Kallenberg2003} states that:
\begin{align}\label{eq:ineq}
    \lim_{\gamma \uparrow 1} (1-\gamma)\vvf_\pi \ge \vgain_\pi
\end{align}
for any policy $\pi$.
However, for such $\pi$, there exists a stationary policy $\pi^+$ such that for all $\gamma$ close to one, $(1-\gamma)\vvf_{\pi^+} \ge (1-\gamma)\vvf_\pi$.
Moreover, Corollary~2.5 of \citep{Kallenberg2003} states that  Inequality~\eqref{eq:ineq} becomes an equality for stationary policies.
Therefore, the set of Pareto-optimal gains can be obtained using only stationary policies.
\end{proof}

\begin{lemma}
For any weakly-communicating MOMDP, the GGF-average problem admits a solution that is a stationary stochastic Markov policy with constant gain.
\end{lemma}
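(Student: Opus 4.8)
The plan is to promote the policy produced by the previous lemma --- a GGF-average-optimal stationary stochastic Markov policy $\pi^*$ with value $v^* = \GGI(\vaR_{\pi^*})$ --- into one whose gain is state-independent, by transporting to the GGF objective the classical single-objective fact that a weakly-communicating MDP admits an average-optimal policy with constant gain (equivalently, with a single recurrent class). The natural bridge is the state-action occupation-measure polytope $X = \{\bm x \ge \bm 0 : \sum_a x_{s'a} = \sum_{s,a} x_{sa}\T_{a,ss'}\ \forall s',\ \sum_{s,a}x_{sa}=1\}$, on which the reachable vector gains are the linear image $\bm x \mapsto \sum_{s,a} x_{sa}\vR_{a,s}$. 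First I would show that, in a weakly-communicating MOMDP, the set of vector gains $\{\vaR_\pi : \pi \text{ stationary}\}$ attainable from $\sd_0$ equals this image: the stationary distribution of any $\pi$ gives a point of $X$, and conversely weak communication lets one route $\sd_0$ into any prescribed recurrent behaviour in the correct proportions.

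I would then recast the GGF-average problem as the concave program $\max_{\bm x \in X}\GGI\!\big(\sum_{s,a}x_{sa}\vR_{a,s}\big)$. Because $X$ is a nonempty compact polytope and $\GGI$ is continuous (indeed concave and piecewise-linear, via $\GGF(\bm v) = \min_\sigma \w_\sigma^\intercal \bm v$), a maximiser $\bm x^*$ exists with value $v^*$, and by the argument used in the previous lemma it is realised by a stationary stochastic Markov policy. The crux then reduces to exhibiting an \emph{optimal} policy whose induced Markov reward process has a \emph{single} recurrent class: a single class forces every state (transient states included) to share the same gain, which is precisely the desired constant-gain property.

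The hard part will be exactly this last step. The recurrent states of $\bm x^*$ all lie in the communicating component $S_C$, but they may split into several classes $R_1,\dots,R_m$ with distinct gains $\bm g_1,\dots,\bm g_m$, so that $\vaR = \sum_k \beta_k \bm g_k$ even though the realising policy has state-dependent gain. To collapse $R_1,\dots,R_m$ into one class I would use that $S_C$ is communicating to insert connecting actions that route a small fraction of the long-run time between the $R_k$. The main obstacle is that these connecting states contribute their own rewards, so a naive merge recovers the target gain only as the connecting traffic vanishes; turning this into an \emph{exact} realisation of $v^*$ by a constant-gain policy is the delicate point, which I would attempt via a perturbation-plus-continuity argument together with closedness of the set of attainable constant-gain values. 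I also expect that, unlike in the single-objective case, randomisation will be indispensable here, since only a stochastic single-class policy can realise a balanced --- hence high-GGF --- gain vector.
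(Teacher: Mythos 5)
Your proposal is not a proof: after the (reasonable) reduction to the occupation-measure polytope and the existence of a maximiser $\bm x^*$, the entire content of the lemma sits in the step you defer --- exactly realising the optimal value by a policy with a single recurrent class --- and the patch you sketch for that step cannot work. The set of gain vectors attainable by constant-gain stationary policies is in general \emph{not} closed, so no perturbation-plus-continuity argument can turn your $\varepsilon$-merged policies into an exact optimum. Concretely, take two states $s_1,s_2$, each with a \emph{stay} action (reward $(1,0)$ in $s_1$, reward $(0,1)$ in $s_2$) and a \emph{switch} action with reward $(0,0)$; this MDP is communicating. Since every one-step reward has components summing to at most $1$, every policy's average-reward vector $\bm g$ satisfies $g_1+g_2\le 1$, hence $\GGF(\bm g)\le \tfrac{1}{2}(g_1+g_2)\le \tfrac12$ (using $\w_1>\tfrac12>\w_2$). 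With $\sd_0=(\tfrac12,\tfrac12)$, the always-stay policy attains this bound: $\sd_0\vgain=(\tfrac12,\tfrac12)$, but its gain is non-constant. Every \emph{constant-gain} stationary policy, however, is strictly below $\tfrac12$: if its chain is irreducible it must randomise onto the switch actions and then $g_1+g_2<1$; otherwise its constant gain is $(1,0)$ or $(0,1)$, worth only $\w_2<\tfrac12$. So $\tfrac12$ is a limit of constant-gain values that no constant-gain policy attains --- the obstacle you flagged is not a technicality of your construction but a genuine obstruction at exactly the point where you hoped closedness would save you.

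The paper's proof takes a structurally different route that never merges classes, which is the idea your plan lacks. It argues by contradiction: given an optimal stationary policy $\pi^*_1$ with non-constant gain, it picks states with $\GGF(\vgain_{\pi^*_1,s_1})>\GGF(\vgain_{\pi^*_1,s_2})$, invokes Proposition~8.3.1 of \citet{Puterman94} to obtain a stationary $\overline\pi$ whose chain has a single closed irreducible class, and splices: $\pi^+$ copies $\pi^*_1$ on the recurrent class $\mathcal C_{s_1}$ containing $s_1$ and plays $\overline\pi$ everywhere else. Then $\mathcal C_{s_1}$ is the unique recurrent class of $\pi^+$, all other states are transient and contribute nothing to the average reward, so the gain of $\pi^+$ is constant and \emph{exactly} $\vgain_{\pi^*_1,s_1}$ --- no connecting-traffic loss and no limiting argument, precisely the exactness your merge loses. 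Note, however, what this splice does and does not give: it shows the single-class value $\vgain_{\pi^*_1,s_1}$ is attainable with constant gain, so the contradiction still needs $\GGF(\vgain_{\pi^*_1,s_1})\ge\GGF(\sd_0\vgain_{\pi^*_1})$, i.e., that mixing several recurrent classes through $\sd_0$ cannot beat the best single class. Because $\GGF$ is concave, mixtures are favoured (in the example above, the two classes have $\GGF$ value $\w_2$ each while their mixture is worth $\tfrac12$), so this comparison --- left implicit in the paper --- is where the real work lies. In short: your instinct about where the difficulty sits is correct and your example-level reasoning probes the right phenomenon, but your proposed resolution fails, and the paper's proof resolves the class-collapsing issue by discarding all but one class rather than connecting them.
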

\begin{proof}
We start by recalling the following property of weakly communicating MDP.
By Proposition~8.3.1 in \citep{Puterman94}, an MDP is weakly communicating if and only if there exists a stationary stochastic policy $\overline\pi$ which induces a Markov chain with a single closed irreducible class $\mathcal C$ and a set of states $\mathcal T$ that is transient under all stationary policies.

Lemma~\ref{lem:ssm} shows that we can focus on stationary stochastic Markov policies.
Assume by contradiction that all such policies that are solutions of \eqref{eq:ggi pb} with average rewards are such that their gains are not constant.
Let $\pi^*_1$ be such a policy.
By assumption, there exist two states $s_1$ and $s_2$ such that $\GGF(\vaR_{\pi^*_1, s_1}) > \GGF(\vaR_{\pi^*_1, s_2})$. 
Denote $\mathcal C_{s_1}$ the set of states containing $s_1$ that is closed, irreducible, and recurrent with respect to $\pi^*_1$.
We have necessarily $\mathcal C_{s_1} \subset \mathcal C$.

By the previous property of weakly-communicating MDPs, we can define a new policy $\pi^+$ such that $\forall s \in \mathcal C_{s_1}, \pi^+$ makes the same choices as $\pi^*_1$.
For all the other states, $\pi^+$ makes the same choices as $\overline\pi$. 

By definition, $\pi^+$ induces a Markov chain with a single closed irreducible class $\mathcal C_{s_1}$ and a set of transient states $\St \backslash \mathcal C_{s_1}$.
The gain of policy $\pi^+$ is therefore constant and equal to $\vaR_{\pi^*_1, s_1}$, which contradicts our previous assumption.
Therefore, the lemma holds.
\end{proof}

In the next theorem, $\pi^*_\gamma$ is GGF-$\gamma$-optimal and $\pi^*_1$ is GGF-average-optimal.
\setcounter{theorem}{5}
\begin{theorem}
For any weakly-communicating MOMDP and any $\gamma \in (\max(\frac{\sigma(\bm H_{\T_{\pi^*_\gamma}})}{\sigma(\bm H_{\T_{\pi^*_\gamma}})+1}, \frac{\sigma(\bm H_{\T_{\pi^*_1}})}{\sigma(\bm H_{\T_{\pi^*_1}})+1}, 1)$,
\begin{align*}
&\GGF(\vaR_{\pi^*_\gamma}) 
     \ge \GGF(\vaR_{\pi^*_1}) - \overline\vR (1-\gamma) \left(
     \rho(\gamma, \sigma(\bm{H}_{\T_{\pi^*_1}})) + \rho(\gamma, \sigma(\bm{H}_{\T_{\pi^*_\gamma}})) \right)
\end{align*}
where $\overline\vR = \max_\pi \|\vR_\pi\|_1$,  
$\sigma(\bm M)$ is the spectral radius of matrix $\bm M$, and 
$\rho(\gamma, \sigma) = \frac{\sigma}{\gamma -(1-\gamma) \sigma}$.
\end{theorem}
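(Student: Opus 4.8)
The plan is to transfer the Lamond--Puterman decomposition (Theorem~\ref{thm:vf decomposition}), which holds one objective at a time, up to the aggregated GGF values, and then to exploit the GGF-$\gamma$-optimality of $\pi^*_\gamma$. First I would apply Theorem~\ref{thm:vf decomposition} componentwise to each of the $\nO$ objectives, obtaining the vector identity $\vvf_\pi = \frac{1}{1-\gamma}\vgain_\pi + \bm E_\pi$, where $\bm E_\pi = \frac{1}{\gamma}\sum_{n=0}^\infty (\frac{\gamma-1}{\gamma})^n \bm H_{\T_\pi}^{n+1}\vR_\pi$. Left-multiplying by the initial distribution $\sd_0$ and writing $\bm e_\pi = \sd_0 \bm E_\pi \in \mathbb R^\nO$ gives $\sd_0\vvf_\pi = \frac{1}{1-\gamma}\vaR_\pi + \bm e_\pi$, which links the discounted objective $\sd_0\vvf_\pi$ to the average objective $\vaR_\pi = \sd_0\vgain_\pi$ for every stationary policy $\pi$.

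Next I would push this identity through GGF using two elementary properties that follow directly from the representation $\GGF(\bm v) = \min_{\sigma}\w_\sigma^\intercal \bm v$ in \eqref{eq:ggi min}: positive homogeneity, so that $\GGF(\frac{1}{1-\gamma}\vaR_\pi) = \frac{1}{1-\gamma}\GGF(\vaR_\pi)$ since $1-\gamma>0$; and the fact that GGF is nonexpansive (each linear piece $\w_\sigma^\intercal$ is, as $\w_\sigma$ carries convex weights summing to one, and a pointwise minimum of nonexpansive maps is nonexpansive). Together these yield, for every $\pi$, the two-sided estimate $|\GGF(\sd_0\vvf_\pi) - \frac{1}{1-\gamma}\GGF(\vaR_\pi)| \le \|\bm e_\pi\|$. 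I would then apply the upper estimate to $\pi^*_\gamma$, the lower estimate to $\pi^*_1$, and insert the optimality inequality $\GGF(\sd_0\vvf_{\pi^*_\gamma}) \ge \GGF(\sd_0\vvf_{\pi^*_1})$, which holds because $\pi^*_\gamma$ maximizes the GGF-$\gamma$ objective $\GGF(\sd_0\vvf_\pi)$. Chaining these and multiplying by $1-\gamma$ gives $\GGF(\vaR_{\pi^*_\gamma}) \ge \GGF(\vaR_{\pi^*_1}) - (1-\gamma)(\|\bm e_{\pi^*_1}\| + \|\bm e_{\pi^*_\gamma}\|)$, which already has the shape of the claimed bound, with one error term per policy.

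It then remains to establish $\|\bm e_\pi\| \le \overline\vR\,\rho(\gamma,\sigma(\bm H_{\T_\pi}))$. Taking norms in the series defining $\bm e_\pi$, using $|\frac{\gamma-1}{\gamma}| = \frac{1-\gamma}{\gamma}$ and bounding each term by $\sigma(\bm H_{\T_\pi})^{n+1}\overline\vR$, the geometric series sums to $\frac{\overline\vR\,\sigma}{\gamma - (1-\gamma)\sigma} = \overline\vR\,\rho(\gamma,\sigma)$; the hypothesis $\gamma > \frac{\sigma(\bm H_{\T_\pi})}{\sigma(\bm H_{\T_\pi})+1}$ is precisely the condition $\frac{(1-\gamma)\sigma}{\gamma} < 1$ that makes the series converge and keeps the denominator of $\rho$ positive, so taking $\gamma$ above the maximum of the two thresholds validates both policies' decompositions at once. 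The main obstacle is this final step: the naive bound $\|\bm H_{\T_\pi}^{n+1}\| \le \sigma(\bm H_{\T_\pi})^{n+1}$ fails for a non-normal $\bm H_{\T_\pi}$, so instead I would invoke the standard fact that for every $\varepsilon>0$ there is a submultiplicative operator norm with $\|\bm H_{\T_\pi}\| \le \sigma(\bm H_{\T_\pi})+\varepsilon$, run the whole estimate with $\sigma+\varepsilon$ in place of $\sigma$ (checking also that multiplication by the probability vector $\sd_0$ and by $\vR_\pi$ contributes only the factor $\overline\vR$ in the matching norm), and let $\varepsilon\downarrow 0$ to recover the stated constant.
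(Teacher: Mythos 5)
Your argument follows the same skeleton as the paper's proof: apply the Lamond--Puterman decomposition (Theorem~\ref{thm:vf decomposition}) to both $\pi^*_\gamma$ and $\pi^*_1$, insert the optimality inequality $\GGF\big((\sd_0\vvf_{\pi^*_\gamma})^\intercal\big)\ge\GGF\big((\sd_0\vvf_{\pi^*_1})^\intercal\big)$ between the two decompositions, and bound the two residual series by geometric series governed by $\sigma(\bm H_{\T_{\pi^*_1}})$ and $\sigma(\bm H_{\T_{\pi^*_\gamma}})$. The only structural difference is how the nonlinearity of $\GGItext_\w$ is handled: the paper extracts minimizing permutations from \eqref{eq:ggi min} and runs the decomposition inside scalarized MDPs with rewards $\w_\sigma^\intercal\vR_{a,s}$, invoking Lemma~\ref{lem:constant gain} along the way, whereas you use positive homogeneity together with $1$-Lipschitzness of $\GGItext_\w$ with respect to $\|\cdot\|_\infty$ (valid because each $\w_\sigma$ is a probability vector and a pointwise minimum of $1$-Lipschitz maps is $1$-Lipschitz). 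Your two-sided estimate is cleaner, subsumes the paper's permutation bookkeeping, and dispenses with the constant-gain lemma; up to the last step the two arguments are equivalent.

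The genuine problem is the last step, exactly where you flagged it, and your repair does not close it. The Householder norm satisfying $\|\bm H_{\T_\pi}\|_\varepsilon\le\sigma(\bm H_{\T_\pi})+\varepsilon$ has the form $\|\bm M\|_\varepsilon=\|(\bm S\bm D_\varepsilon)^{-1}\bm M(\bm S\bm D_\varepsilon)\|_\infty$ for a similarity transform $\bm S\bm D_\varepsilon$. Since the quantity to bound is the scalar $\sd_0\bm H_{\T_\pi}^{n}\vR_\pi$ (one objective at a time), the vectors $\sd_0$ and $\vR_\pi$ must be measured in the primal/dual norms compatible with $\|\cdot\|_\varepsilon$, so the constant you obtain is not $\overline\vR$ but $\kappa(\bm S\bm D_\varepsilon)\,\overline\vR$, where the distortion $\kappa(\bm S\bm D_\varepsilon)$ diverges as $\varepsilon\downarrow 0$ when $\bm H_{\T_\pi}$ is not diagonalizable, and tends to the eigenvector condition number (not to $1$) when it is. Letting $\varepsilon\downarrow 0$ therefore does not recover the stated constant; it reintroduces exactly the condition-number dependence of Theorem~\ref{thm:Kakade} that this theorem is meant to eliminate. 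Indeed, the matrix-level inequality your estimate needs is false in general: for a nilpotent $\bm H\ne\bm 0$ one has $\sigma(\bm H)=0$ while $\sd_0\bm H\vR_\pi$ can be arbitrarily large. You should know, however, that the paper's own proof contains the identical gap, only silently: after Cauchy--Schwarz it passes from $\|\bm H_{\T_{\pi^*_1}}\|_2^n$ in step \eqref{eq:cor6} to $\sigma(\bm H_{\T_{\pi^*_1}})^n$ in the following line, an equality that holds only for normal matrices. What your argument and the paper's both legitimately establish is the bound with $\sigma(\bm H_{\T_\pi})$ replaced by the spectral norm $\|\bm H_{\T_\pi}\|_2$; the spectral-radius form would require either an additional assumption (e.g., normality, or diagonalizability with controlled conditioning) or a genuinely different estimate of the residual series.
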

\begin{proof}
We start with some notations. For any permutation $\sigma$, let $\mathcal M^\sigma$ be the MDP obtained from the initial MOMDP with the reward function defined by $\tilde\R^\sigma_{a,s} = \w_\sigma^\intercal \vR_{a,s}$.
Naturally, the MDP and MOMDP have the same policies.
An element (e.g., optimal policy, value function, gain) corresponding specifically to $\mathcal M^\sigma$ will be marked with the tilde sign and exponent $\sigma$, e.g., $\tilde \vf^\sigma_\pi$ (resp. $\tilde \gain^\sigma_\pi$) is the value function (resp. gain) of policy $\pi$ in $\mathcal M^\sigma$.

By \eqref{eq:ggi min}, there exists $\sigma$ such that $\GGF(\vaR_{\pi^*_\gamma}) = \w_\sigma^\intercal \vaR_{\pi^*_\gamma}$.
We now make two observations regarding $\pi^*_\gamma$ and $\pi^*_1$.
Regarding $\pi^*_\gamma$, we have:
\begin{align}
%\w_\sigma^\intercal \vvf_{\pi^*_\gamma} 
 \w_\sigma^\intercal (\sd_0 \vvf_{\pi^*_\gamma})^\intercal
    &= \sd_0 \tilde{\vf}^\sigma_{\pi^*_\gamma} \label{eq:pistargamma1}\\
%    &= \tilde{\vf}^{\tilde{\pi}^*_\gamma} \label{eq:pistargamma2}
    & \ge \GGF\big ((\sd_0 \vvf_{\pi^*_\gamma})^\intercal\big) \label{eq:pistargamma2}\\
    & \ge \GGF\big((\sd_0 \vvf_{\pi^*_1})^\intercal \big) \label{eq:pistargamma3}\\
    &= \w_{\sigma'}^\intercal (\sd_0 \vvf_{\pi^*_1})^\intercal  \label{eq:pistargamma4}\\
    & = \sd_0 \tilde{\vf}^{\sigma'}_{\pi^*_1} \label{eq:pistargamma5}
\end{align}
where 
\eqref{eq:pistargamma1} and \eqref{eq:pistargamma5} are obtained by linearity,
%\eqref{eq:pistargamma2} holds by definition of $\pi^*_\gamma$ (i.e., it maximizes GGF, which is equivalent here to the weighted sum with $\w_\sigma$).
\eqref{eq:pistargamma2} holds by \eqref{eq:ggi min}, 
\eqref{eq:pistargamma3} is true by definition of the two policies, and
\eqref{eq:pistargamma4} holds for some $\sigma'$.
Regarding $\pi^*_1$, we have:
\begin{align} 
    \GGF(\vaR_{\pi^*_1}) 
    \le \w_{\sigma'}^\intercal \vaR_{\pi^*_1} 
    %= \tilde{\aR}^{\sigma'}_{\pi^*_1}
    = \sd_0\tilde{\gain}^{\sigma'}_{\pi^*_1} \label{eq:pistarone}
\end{align}
where the inequality comes from \eqref{eq:ggi min} and the equality is obtained by linearity.

Using the first observation, we obtain:
\begin{align}
\sd_0 \tilde{\gain}^\sigma_{\pi^*_\gamma} 
     &= \sd_0 \big( (1 - \gamma) \tilde{\vf}^\sigma_{\pi^*_\gamma}  + \sum_{n=1}^{\infty} \left(\frac{\gamma-1}{\gamma}\right)^{n}  \bm{H}_{\T_{\pi^*_\gamma}}^{n} \tilde{\R}^\sigma_{\pi^*_\gamma} \big) \label{eq:thm1}\\
     &\ge (1 - \gamma) \sd_0 \tilde{\vf}^{\sigma'}_{\tilde{\pi}^*_1} + \sum_{n=1}^{\infty} \sd_0\left(\frac{\gamma-1}{\gamma}\right)^n  \bm{H}_{\T_{\pi^*_\gamma}}^{n} \tilde{\R}^\sigma_{\pi^*_\gamma} \label{eq:thm2}\\
%     &\ge (1 - \gamma) \tilde{\vf}_{\pi^*_1} + \sum_{n=1}^{\infty} \left(\frac{\gamma-1}{\gamma}\right)^n  \bm{H}_{\T_{\pi^*_\gamma}}^{n} \tilde{\R}_{\pi^*_\gamma} \label{eq:thm3}\\
     &= \sd_0\tilde{\gain}^{\sigma'}_{\pi^*_1} - \sum_{n=1}^{\infty} \left(\frac{\gamma-1}{\gamma}\right)^n \sd_0 \bm{H}_{\T_{\pi^*_1}}^{n} \tilde{\R}^{\sigma'}_{\pi^*_1} +  \sum_{n=1}^{\infty} \left(\frac{\gamma-1}{\gamma}\right)^n  \sd_0 \bm{H}_{\T_{\pi^*_\gamma}}^{n} \tilde{\R}^{\sigma}_{\pi^*_\gamma} \label{eq:thm4}\\
     &%\tilde{\aR}^{\sigma}_{\pi^*_\gamma} 
\ge \GGF(\vaR_{\pi^*_1})
- \sum_{n=1}^{\infty} \left(\frac{\gamma-1}{\gamma}\right)^n \sd_{0} \bm{H}_{\T_{\pi^*_1}}^{n} \tilde{\R}^{\sigma'}_{\pi^*_1} + \sum_{n=1}^{\infty} \left(\frac{\gamma-1}{\gamma}\right)^n  \sd_{0} \bm{H}_{\T_{\pi^*_\gamma}}^{n} \tilde{\R}^\sigma_{\pi^*_\gamma} \label{eq:thm5}
\end{align}
where \eqref{eq:thm1} is obtained from \eqref{eq:decomposition} applied to $\tilde{\pi}^*_\gamma$ and rearranging terms, 
\eqref{eq:thm2} comes from \eqref{eq:pistargamma2}, 
%\eqref{eq:thm3} holds because $\tilde{\pi}^*_\gamma$ is $\gamma$-optimal in $\mathcal M$, and
\eqref{eq:thm4} is obtained from \eqref{eq:decomposition} applied to $\pi^*_1$, and
\eqref{eq:thm5} holds because the gain of $\pi^*_1$ is constant by Lemma~\ref{lem:constant gain}.

We now show how the second term in the right-hand side of the inequality can be bounded:
\begin{align}
    \sum_{n=1}^{\infty} \left(\frac{\gamma-1}{\gamma}\right)^n \sd_{0} \bm{H}_{\T_{\pi^*_1}}^{n} \tilde{\R}^{\sigma'}_{\pi^*_1}
    & \le \sum_{n=1}^{\infty} \left(\frac{1 - \gamma}{\gamma}\right)^n \|\sd_{0}\| \|\bm{H}_{\T_{\pi^*_1}}^{n}\|_2 \|\tilde{\R}^{\sigma'}_{\pi^*_1}\| \label{eq:cor4}\\
    & \le \sum_{n=1}^{\infty} \left(\frac{1 - \gamma}{\gamma}\right)^n \|\bm{H}_{\T_{\pi^*_1}}\|_2^n \|\tilde{\R}^{\sigma'}_{\pi^*_1}\| \label{eq:cor5}\\
    & \le \sum_{n=1}^{\infty} \left(\frac{1 - \gamma}{\gamma}\right)^n \|\bm{H}_{\T_{\pi^*_1}}\|_2^n \overline\vR \label{eq:cor6}\\
    & = \sum_{n=1}^{\infty} \left(\frac{1 - \gamma}{\gamma}\right)^n \sigma(\bm{H}_{\T_{\pi^*_1}})^n \overline\vR \nonumber \\
    & = \overline\vR \left(\frac{1}{1 - \frac{1-\gamma}{\gamma}\sigma(\bm{H}_{\T_{\pi^*_1}})} - 1\right)  \label{eq:cor8}\\
    & = \overline\vR  \frac{(1-\gamma)\sigma(\bm{H}_{\T_{\pi^*_1}})}{\gamma - (1-\gamma)\sigma(\bm{H}_{\T_{\pi^*_1}})} \nonumber
\end{align}
where \eqref{eq:cor4} is obtained by applying the Cauchy-Schwartz inequality, 
\eqref{eq:cor5} uses $\|\sd_{0}\| \le 1$,
\eqref{eq:cor6} uses $\|\w\|_1 = 1$, 
and
\eqref{eq:cor8} holds by assumption ($\gamma > \sigma(\bm{H}_{\T_{\pi^*_1}})/(\sigma(\bm{H}_{\T_{\pi^*_1}})+1)$).

A similar bound can be found for the third term.
Finally, the result follows by plugging the two bounds in \eqref{eq:thm5} and the fact that $\GGF(\vaR_{\pi^*_\gamma}) = \w_\sigma^\intercal \vaR_{\pi^*_\gamma} = \sd_0 \tilde{\gain}^\sigma_{\pi^*_\gamma} %\tilde{\aR}_{\pi^*_\gamma}
$, which holds by linearity.
\end{proof}

%%%%%%%%%%%%%%%%%%%%%%%%%%%%%%%%%%%%%%%%%%%%%
\section{Descriptions of Experimental Domains}
\subsection{Conservation of two endangered species}

This domain is based on the model introduced by \citet{chades2012setting} that describes the interaction of two endangered species, sea otters and its prey, northern abalones.
In this problem, interventions can be taken in order to maintain the populations of the two species at relatively balanced levels.

The environment for this problem is in fact a partially observable MOMDP and similarly to \citep{chades2012setting}, we solve it as an MOMDP.
The model can be summarized as follows\footnote{For simplicity, we use similar notations as in \citep{chades2012setting} since there is no much risk of confusion with those used in our main paper.} (motivation and more explanation can be found in \citep{chades2012setting}):
\begin{itemize}
    \item At time step $t$, a state consists of 
    $N^O_t$ (the population number of sea otters), 
    $\bm N^{A}_{t}$ (a 10-dimensional vector indexed from 4 to 13, where $\bm N^{A}_{i,t}$ is the population number of abalones for age group $i$), and a $10 \times 10 $ matrix representing the survival rate of abalones for different age and living area.
    In the model, $10$ age groups are considered from $4$ to $13$:
    the enrollment age starts from age $4$ and all the ages greater than $13$ are pooled together into the $13$ age group.
    The initial state is fixed to some stable abalone population numbers.
    \item An observation is defined as a pair $ (n_t^O,n_t^A) \in \{1, ..., 21\} \times \{1, ..., 39\}$ where $n_t^O$ (respectively $n_t^A$) is a discretization of $N_t^O$ to represent sea otters (respectively $\sum_i \bm N_{i,t}^A$ to represent abalones).
    \item Five management actions are considered: \textit{do nothing}, \textit{introduce sea otters}, \textit{enforce abalone antipoaching measures}, \textit{control sea otters}, and \textit{one-half antipoaching and one-half control sea otters}. 
          
    \item The transition function is based on the population growth models of the two species taking into account factors such as poaching and predation (for abalones) or oil spills (for sea otters).
    The next state is computed in the following order : 1) apply abalone and sea otter growth models independently of the action, 2) potentially apply culling of sea otters according to the action, 3) remove abalone because of predation, 4) remove abalone because of poaching according to the action. 
    The interaction between sea otters and abalone is modeled with a linear function response.

    \item 
    The reward after performing action $a$ in state $s_t$ is a vector consisting of two components:
    $$\vR_{a,s_t} = \begin{bmatrix} JR_{so}(N^O_t) \\ 
    JR_{aba}(\sum_{i=3}^{14} \bm N^{A}_{i,t}) \end{bmatrix}$$
    where $s_t = (N_t, d^3_t, \ldots, d^{14}_t)$, and the two functions $JR_{so}$ and $JR_{aba}$ are introduced to make a population number of sea otters and a density of northern abalones commensurable. 
    Note that rewards do not depend on actions here.
    In \cite{chades2012setting}, a scalar reward was defined as the minimum of those two components in order to balance the two objectives.
\end{itemize}

\subsection{Traffic Light Control}

We also evaluate our method in the classic traffic light control problem.
While the usual approach to this problem consists in minimizing the expected waiting times averaged over all lanes, we consider the expected waiting times of the four directions (north, east, south, or west) at the intersection separately.
We use Simulation of Urban MObility (SUMO)\footnote{\url{https://github.com/eclipse/sumo}} to simulate a single eight-lane intersection (see Figure~\ref{fig:sumo-env}).
Depending on intersections, different numbers of traffic light phases can be considered.
A traffic light phase specifies which lanes have the green light.
We assume here that there are four phases $NSL$, $NSSR$, $EWL$, and $EWSR$.
Phase $NSL$ (North-South Left) corresponds to the case where the green light is given to cars in the left lanes of the roads coming from the north and south.
The cars can only turn left in this phase. 
Phase $NSSR$ (North-South Straight and Right) allows cars in the right lanes for the north-south axis to go straight or turn right.
Phases $EWL$ and $EWSR$ are defined similarly for the east-west axis.

\begin{figure}[h]
\vskip 0.2in
\begin{center}
\centering
\includegraphics[width=0.3\linewidth]{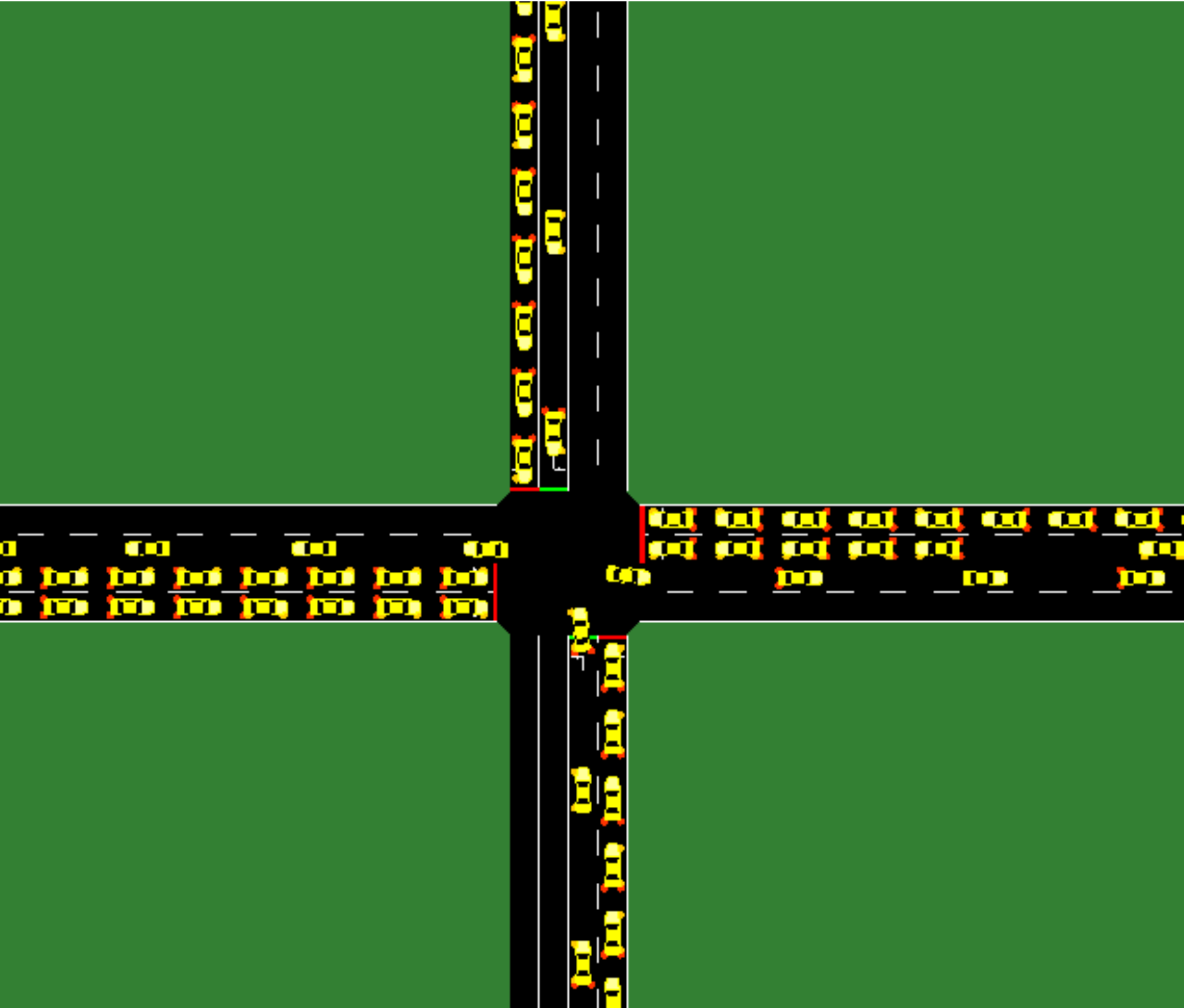}
\caption{SUMO traffic controller simulation.}
\label{fig:sumo-env}
\end{center}
\vskip -0.2in
\end{figure}

We formulate the MOMDP for this domain as follows:
\begin{itemize}
    \item A state is a 20-dimensional vector containing the current traffic light phase (4 phases represented with a one-hot encoding) and for each lane, its total waiting time and density of cars stopped at the intersection ($8 \times 2$).

    \item 
    An action corresponds to a traffic light phase. 
    Therefore, 
    $ \Ac = \{NSL, NSSR, EWL, EWSR\} $.

    \item The transition function depends on 
    the current traffic light phase, 
    how cars drive through the intersection, and 
    how new traffic is generated.
    
    The duration of each phase is fixed (which corresponds to one time step in the RL problem). 
    The green light time for each phase is 10 seconds while the yellow time is 4 (if there is a change to a new phase).
    For simplicity, all the vehicles have the same  characteristics (e.g., car speed, acceleration, length) that are provided by default in SUMO.
    The phase duration and the car characteristics determine how many waiting cars can drive through the intersection.
    
    At each time step, for each lane, new vehicles enter the intersection  according to a fixed Bernoulli distribution in each episode. 
    The probabilities used for each lane are provided in Figure~\ref{fig:dist1}.
    They simulate a heavy traffic intersection.
    \begin{figure}
    \vskip 0.2in
    \begin{center}
    \centering
    \includegraphics[width=0.5\linewidth]{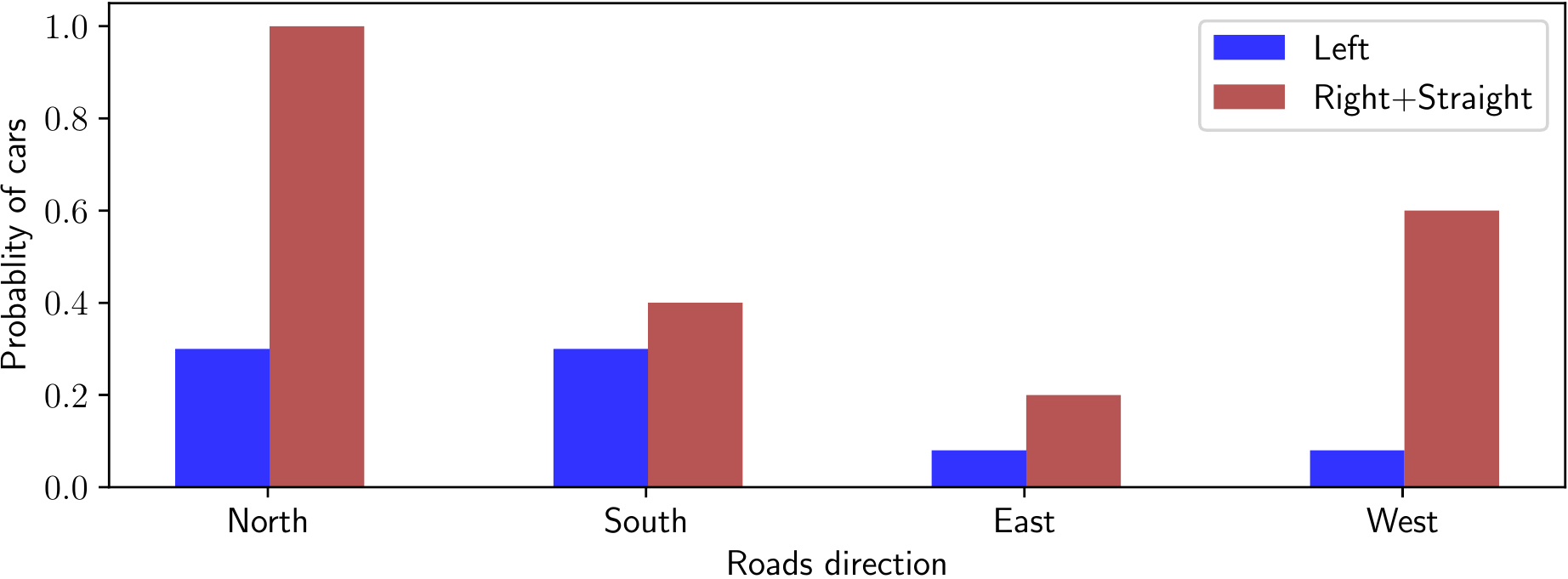}
    \caption{Probabilities of cars entering in each lane.}
    \label{fig:dist1}
    \end{center}
    \vskip -0.2in
    \end{figure}
    
    \item 
    A reward is a vector of 4 components corresponding to the waiting times of each direction:
    $$\vR_{a,s} = 
    \begin{bmatrix} 
        -\sum_{j=1}^{2} {W}^1_j(s) \\ 
        -\sum_{j=1}^{2} {W}^2_j(s) \\
        -\sum_{j=1}^{2} {W}^3_j(s) \\
        -\sum_{j=1}^{2} {W}^4_j(s)
    \end{bmatrix}$$
    where ${W}^i_j(s)$ is the total waiting time of all the cars of the $i^{th}$ direction and $j^{th}$ lane.
    The standard approach to this problem would define the scalar reward as the sum of those components.
\end{itemize}

\subsection{Data Center Traffic Control}

In the Data Center (DC) traffic congestion control problem  \cite{iroko}, a centralized controller manages a  computer network that is shared by a certain number of hosts in order to optimize the bandwidths of each host.
For the network topology, a fat-tree topology (see Figure~\ref{fig:fat-tree} is considered, which has $D=16$ hosts, $20$ switches with $n=4$ ports each, which leads to a total of $80$ queues.
For the experiments, we used Mininet\footnote{\url{https://github.com/mininet/mininet}} to simulate the network with the fat-tree topology using UDP as the underlying transport protocol and goben\footnote{\url{https://github.com/udhos/goben}} to generate traffic and monitor/collect network information. 
In \cite{iroko}, a reward function was designed such that an RL agent would learn to maximize a sum of host bandwidths penalized by queue lengths (in order to avoid switch bufferbloats).
Here, we instead aim at maximizing separately the bandwidth of each host penalized by queue lengths, while ensuring fairness.
\begin{figure}[h]
\vskip 0.2in
\begin{center}
\centering
\includegraphics[width=0.5\linewidth]{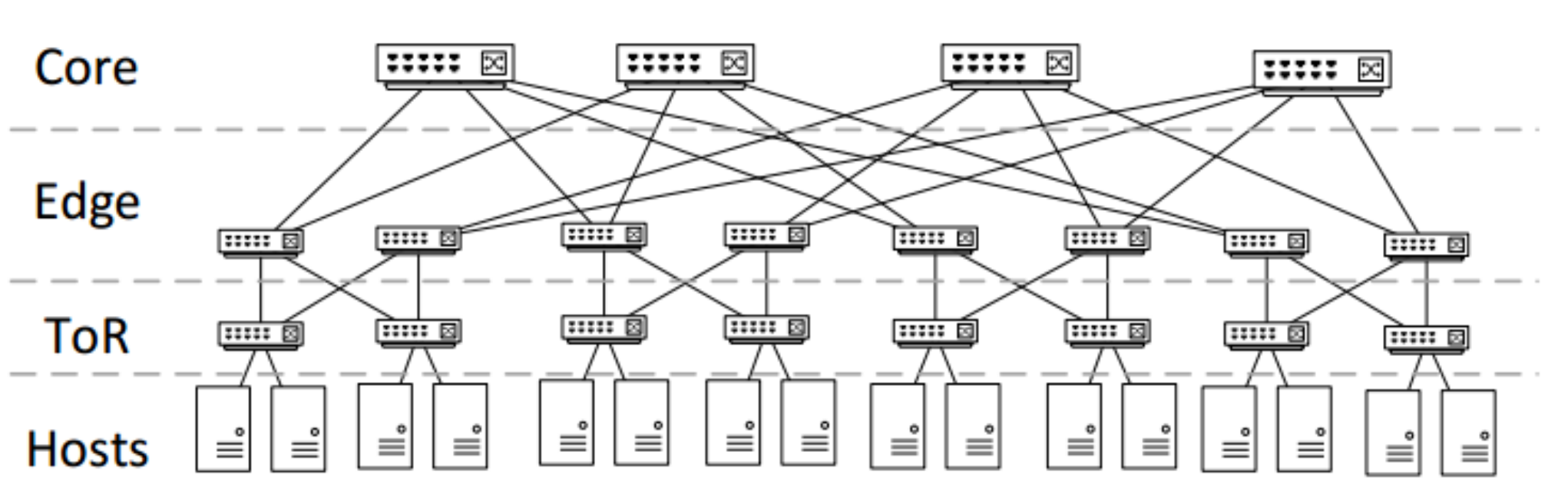}
\caption{Network with a fat-tree topology from \cite{iroko}.}
\label{fig:fat-tree}
\end{center}
\vskip -0.2in
\end{figure}

The MOMDP for this problem can be summarized as follows (for motivation and more explanation check \citep{iroko}):
\begin{itemize}
    \item A state is composed of a $n\times m$ matrix that stores the information statistics from the transport and lower layers,  where $n$ is the number of ports in a switch and $m$ is the number of network features (i.e., the queue length, the derivative over time of the queue length, the number of packet drops and the queue length above the limit).
    A state also contains information about the current bandwidth allocation to the $D$ hosts.
    A bandwidth is a value in $[0, 10]$.
    
    \item An action is a $D$-dimensional vector of bandwidth allocation to the hosts.
    
    \item  
    Network traffic between hosts is generated according to an input file (see \cite{iroko}).
    
    \item
    The $D$-dimensional reward vector is defined as follows:
    $$\vR_{\bm a,s} = \bm{a} - 2*\bm{a}*\max_{i}q_{i}(s)$$
    where $\bm a$ is the vector action that represents the bandwidth allocation and  $q_i(s) $ represents the $i$-th queue length.
    This definition is adapted from \cite{iroko}, which uses the average of the previous components to define a scalar reward.
\end{itemize}

%%%%%%%%%%%%%%%%%%%%%%%%%%%%%%%%%%%%%%%%%%%%%
\section{Hyperparameters}
For the sake of reproducibility, all the hyperparameters for all the environments are reported in Tables~\ref{table:hyperpara-PPO},  \ref{table:hyperpara-DQN} and \ref{table:hyperpara-A2C}.
In those tables, subscripts "sc", "tl", "dc" stand for our three experimental domains, respectively, species conservation, traffic lights control, and data center traffic control. 

\begin{table}[H]
\caption{Set of hyperparameters used during training with PPO and GGF-PPO}
\label{table:hyperpara-PPO}
\vskip 0.15in
\begin{center}
\begin{small}
\begin{sc}
\begin{tabular}{lcccr}
\toprule
Hyperparameter & Values$_{PPO}$ & Values$_{GGF-PPO}$ \\
\midrule
$\gamma$    & $0.99_{sc, tl, dc}$ & $0.99_{sc, tl, dc}$ \\
Learning rate & $0.001_{sc},0.0005_{tl}, 0.0001_{dc}$  & $0.00005_{sc, dc}, 0.0005_{tl}$ \\
n\_envs     & $ 10_{sc, tl}, 1_{dc} $     & $ 10_{sc, tl}, 1_{dc} $ \\
n\_steps per update     & $ 128_{sc, tl, dc} $     & $ 128_{sc, tl, dc} $ \\
cliprange   & $0.2_{tl}, 0.1_{sc}, 0.5_{dc} $   & $0.2_{tl}, 0.1_{sc}, 0.5_{dc}$   \\
Actor Network    & $64 * 64_{sc, tl, dc}$  & $64 * 64_{sc, tl, dc}$ \\
Critic Network    & $64 * 64_{sc, tl, dc}$  & $64 * 64_{sc, tl, dc}$ \\
Networks hidden activation     & $ TanH_{sc, tl, dc} $  & $ TanH_{sc, tl, dc} $ \\
Networks output activation     & $ Linear_{sc, tl, dc} $   & $ Linear_{sc, tl, dc} $ \\
Optimizer     & $ Adam_{sc, tl, dc} $     & $ Adam_{sc, tl, dc} $\\
Adam Epsilon     & $ 1e^{-5}_{sc, tl, dc} $     & $ 1e^{-5}_{sc, tl, dc} $ \\
ent\_coef     & $ 0.01_{sc, tl, dc} $  & $ 0.01_{sc, tl, dc} $ \\
vf\_coef     & $ 0.5_{sc, tl, dc} $   & $ 0.5_{sc, tl, dc} $ \\
\bottomrule
\end{tabular}
\end{sc}
\end{small}
\end{center}
\vskip -0.1in
\end{table}

\begin{table}[H]
\caption{Set of hyperparameters used during training with A2C and GGF-A2C}
\label{table:hyperpara-A2C}
\vskip 0.15in
\begin{center}
\begin{small}
\begin{sc}
\begin{tabular}{lcccr}
\toprule
Hyperparameter & Values$_{A2C}$ & Values$_{GGF-A2C}$ \\
\midrule
$\gamma$    & $0.99_{sc, tl, dc}$ & $0.99_{sc, tl, dc}$ \\
Learning rate & $0.0001_{sc, tl, dc}$  & $0.0001_{sc, tl}, 0.0005_{dc}, $ \\
n\_envs     & $ 10_{sc, tl}, 1_{dc} $     & $ 10_{sc, tl}, 1_{dc} $ \\
n\_steps per update     & $ 10_{sc}, 5_{tl, dc} $     & $ 30_{sc, tl, dc} $ \\
Actor Network    & $64 * 64_{sc, tl, dc}$  & $64 * 64_{sc, tl, dc}$ \\
Critic Network    & $64 * 64_{sc, tl, dc}$  & $64 * 64_{sc, tl, dc}$ \\
Networks hidden activation     & $ TanH_{sc, tl, dc} $  & $ TanH_{sc, tl, dc} $ \\
Networks output activation     & $ Linear_{sc, tl, dc} $   & $ Linear_{sc, tl, dc} $ \\
Optimizer     & $ RMSprop_{sc, tl, dc} $     & $ RMSprop_{sc, tl, dc} $\\
RMSProp Epsilon     & $ 1e^{-5}_{sc, tl, dc} $     & $ 1e^{-5}_{sc, tl, dc} $ \\
RMSProp alpha     & $ 0.99_{sc, tl, dc} $     & $ 0.99_{sc, tl, dc} $ \\
value\_func\_coef     & $ 0.25_{sc, tl, dc} $  & $ 0.25_{sc, tl, dc} $ \\
entropy \_coef     & $ 0.01_{sc, tl, dc} $   & $ 0.01_{sc, tl, dc} $ \\
\bottomrule
\end{tabular}
\end{sc}
\end{small}
\end{center}
\vskip -0.1in
\end{table}

\begin{table}[H]
\caption{Set of hyperparameters used during training with DQN and GGF-DQN}
\label{table:hyperpara-DQN}
\vskip 0.15in
\begin{center}
\begin{small}
\begin{sc}
\begin{tabular}{lcccr}
\toprule
Hyperparameter & Values$_{DQN}$ & Values$_{GGF-DQN}$ \\
\midrule
$\gamma$    & $0.99_{sc, tl}$ & $0.99_{sc, tl}$ \\
Learning rate & $0.0001_{sc}, 0.0005_{tl}$ & $0.005_{sc}$, $0.0005_{tl}$ \\
batch\_size     & $64_{sc}$, $128_{tl} $ & $128_{sc, tl} $ \\
Q Network    & $64 * 64_{sc, tl}$ & $64 * 64_{sc, tl}$ \\
Network hidden activation     & $ ReLU_{sc, tl} $   & $ ReLU_{sc, tl} $ \\
Network output activation     & $ ReLU_{sc, tl} $  & $ ReLU_{sc, tl} $ \\
buffer\_size     & $ 50000_{sc, tl} $  & $ 50000_{sc, tl} $ \\
exploration\_fraction     & $ 0.1_{sc, tl} $  & $ 0.1_{sc, tl} $ \\
prioritized\_replay     & $ False_{sc, tl} $    & $ False_{sc, tl} $ \\
exploration\_fraction     & $ 0.1_{sc, tl} $  & $ 0.1_{sc, tl} $ \\
target\_network\_update\_freq     & $ 500_{sc, tl} $  & $ 500_{sc, tl} $ \\
Dueling     & $ False_{sc, tl} $    & $ False_{sc, tl} $ \\
\bottomrule
\end{tabular}
\end{sc}
\end{small}
\end{center}
\vskip -0.1in
\end{table}

%%%%%%%%%%%%%%%%%%%%%%%%%%%%%%%%%%%%%%%%%%%%%
\section{Additional Experimental Results}
\label{sec:expes_add}

We present here additional experimental results that were not included in the main document.

\subsection{Conservation of two endangered species}

In the SC domain we also perform some weight analysis. 
As it is a small problem with two objectives, it is easy to perform those experiments and clearly show how our approach yields more balanced or fairer solutions. 
In addition to the experiments that we have shown in the main paper, we also performed experiments where the GGF weights are decreasing faster.
Concretely, GGF coefficients were defined as $\w_i = \frac{1}{2^i}$ from 0 to $\nO-1$, while in this set of experiments it is defined as $\w_i = \frac{1}{10^i}$ from 0 to $\nO-1$.
Similar conclusions, which we detail next, can be drawn with both sets of weights.

Figure~\ref{fig:species-box10} shows the distributions of GGF score for the policies learned by DQN, A2C, PPO and their GGF counterparts.
As expected, all the three GGF algorithms have higher GGF score than their original algorithms.
Higher GGF scores means the solution is more balanced which can be
validated from Figure~\ref{fig:species-bar10}.

\begin{figure}[H]
\vskip 0.2in
\begin{center}
\centering
\includegraphics[width=0.7\linewidth]{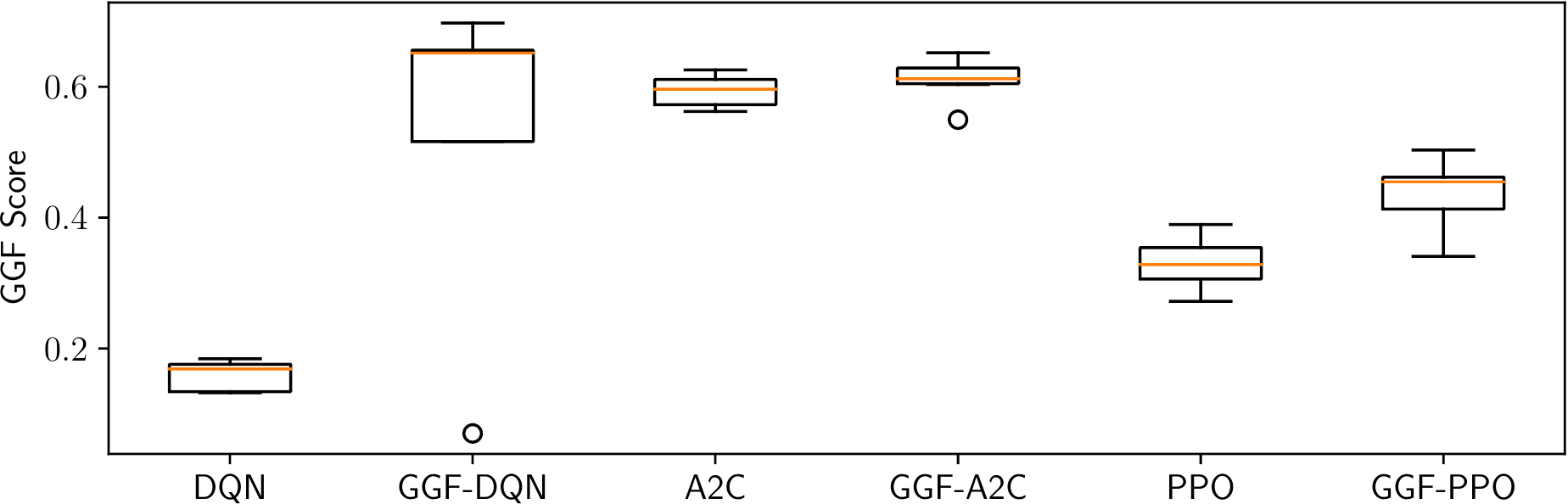}
\caption{GGF scores of DQN, A2C, PPO and their GGF algorithms with $\w_i = \frac{1}{10^i}$ during the testing phase in the SC domain.} 
\label{fig:species-box10}
\end{center}
\vskip -0.2in
\end{figure}

\begin{figure}[H]
\vskip 0.2in
\begin{center}
\centering
\includegraphics[width=0.7\linewidth]{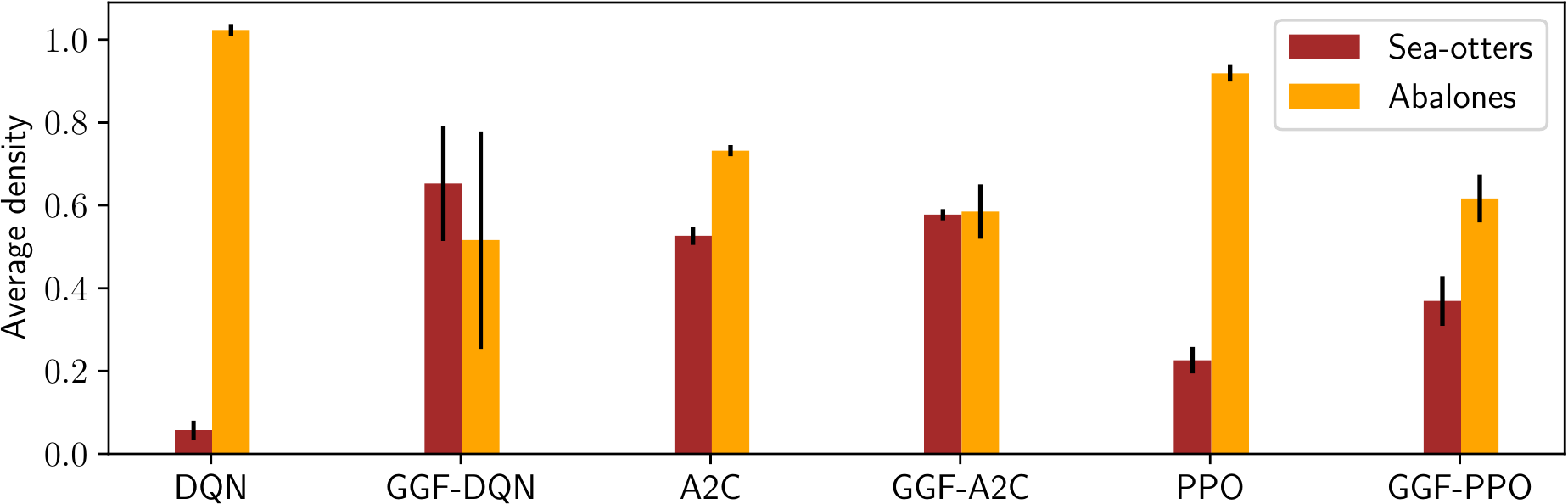}
\caption{Individual densities for DQN, A2C, PPO and their GGF versions with $\w_i = \frac{1}{10^i}$ during the testing phase in the SC domain.} 
\label{fig:species-bar10}
\end{center}
\vskip -0.2in
\end{figure}

Similar to the case of $\w_i = \frac{1}{2^i}$, we also compared those GGF algorithms with faster decreasing weights in terms of their CV, minimum and maximum of densities (Figure~\ref{fig:species-cv10}). 
As explained before, standard RL algorithms generate unequal distributions of rewards while our adapted versions of DQN, A2C and PPO generate more balanced solutions.
Again the CV of GGF algorithms is lower than their original algorithms which shows the less variations in their objectives. 

\begin{figure}[H]
\vskip 0.2in
\begin{center}
\centering
\includegraphics[width=0.7\linewidth]{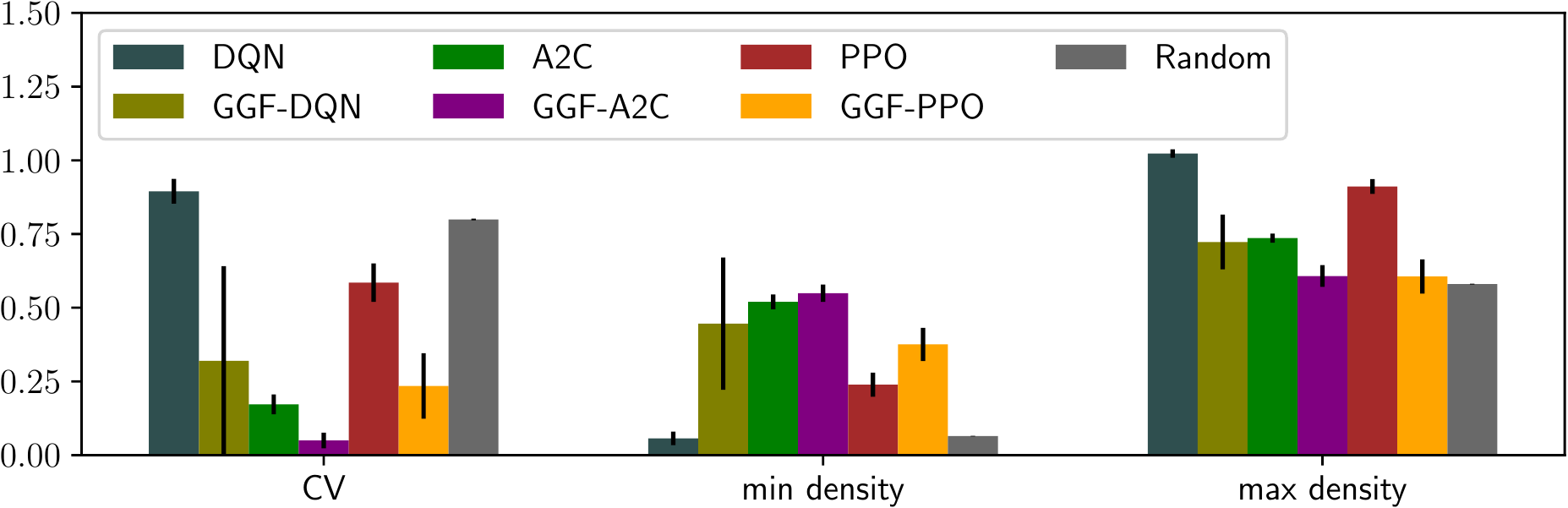}
\caption{The performances of different RL algorithms and their GGF versions with $\w_i = \frac{1}{10^i}$ in the SC domain.} 
\label{fig:species-cv10}
\end{center}
\vskip -0.2in
\end{figure}

\subsection{Traffic Light Control}

To clearly demonstrate that our proposition yields more equitable solutions, we comapre PPO, A2C, DQN and their GGF counterparts
in terms of waiting times per direction, which were estimated after training.
As shown in Figure~\ref{fig:ppo-bar}, the waiting times achieved by GGF-PPO is more balanced.

In terms of average waiting times, DQN and GGF-DQN did not work very well.
However, from the results in Figure~\ref{fig:dqn-bar}, it is clear that the GGF version of DQN is fairer than the standard DQN.

\begin{figure}[H]
\vskip 0.2in
\begin{center}
\centering
\includegraphics[width=0.5\linewidth]{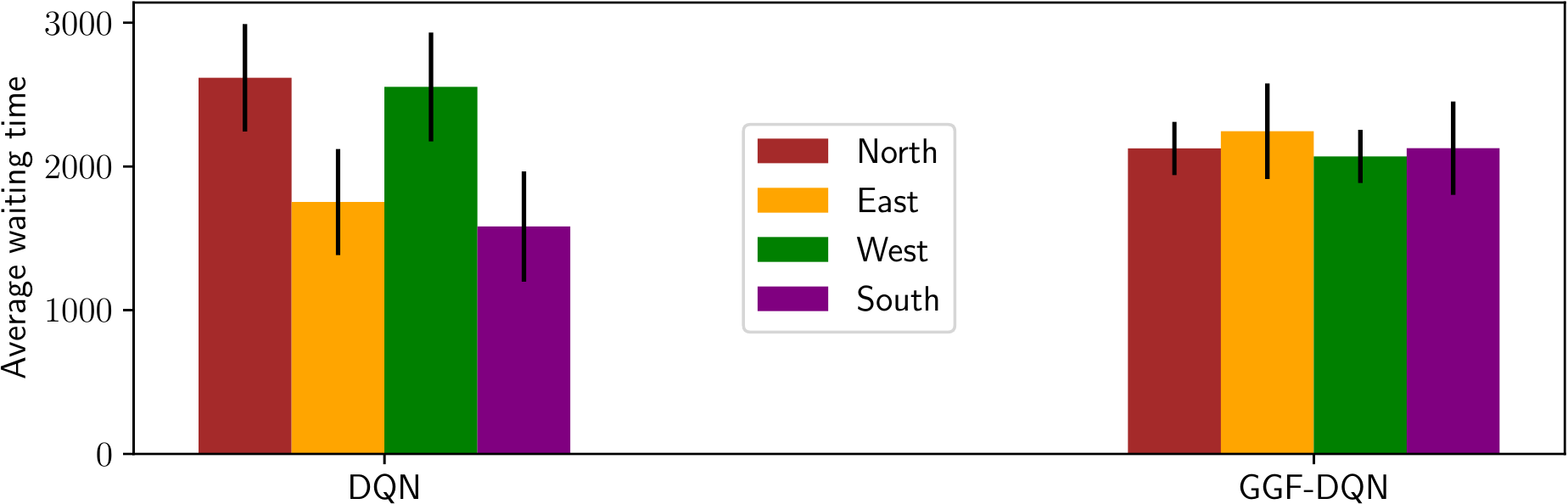}
\caption{Individual average waiting times of DQN and GGF-DQN during the testing phase.}\label{fig:dqn-bar}
\end{center}
\vskip -0.2in
\end{figure}

\begin{figure}[H]
\vskip 0.2in
\begin{center}
\centering
\includegraphics[width=0.5\linewidth]{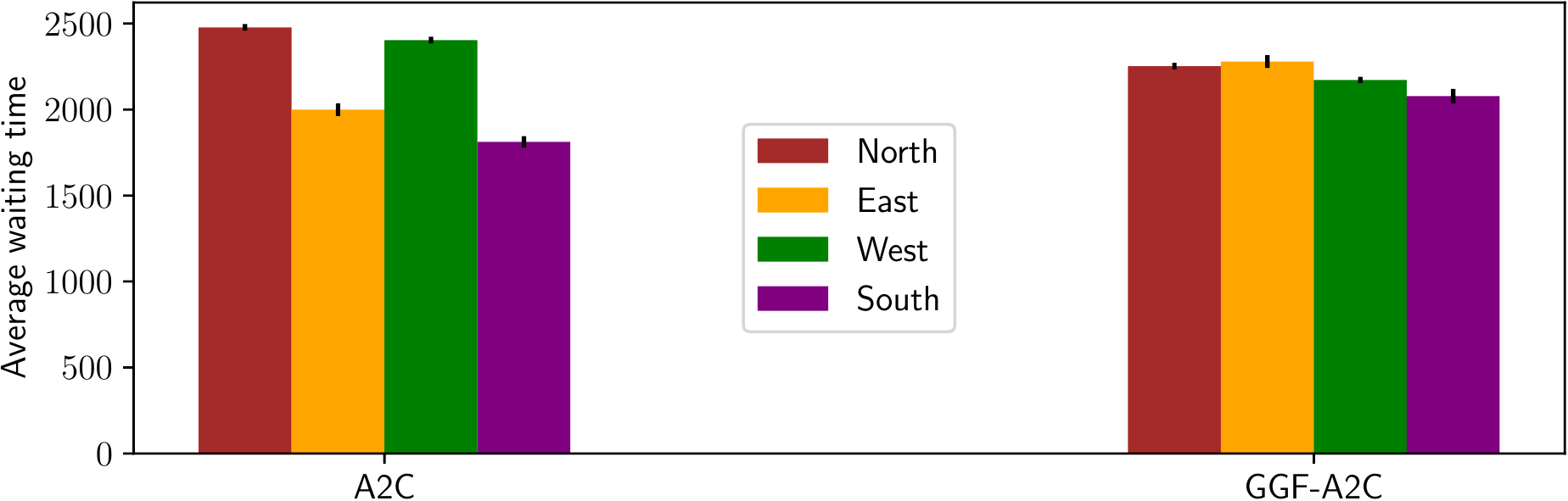}
\caption{Individual average waiting times of A2C and GGF-A2C during the testing phase.}\label{fig:a2c-bar}
\end{center}
\vskip -0.2in
\end{figure}

\begin{figure}[H]
\vskip 0.2in
\begin{center}
\centering
\includegraphics[width=0.5\linewidth]{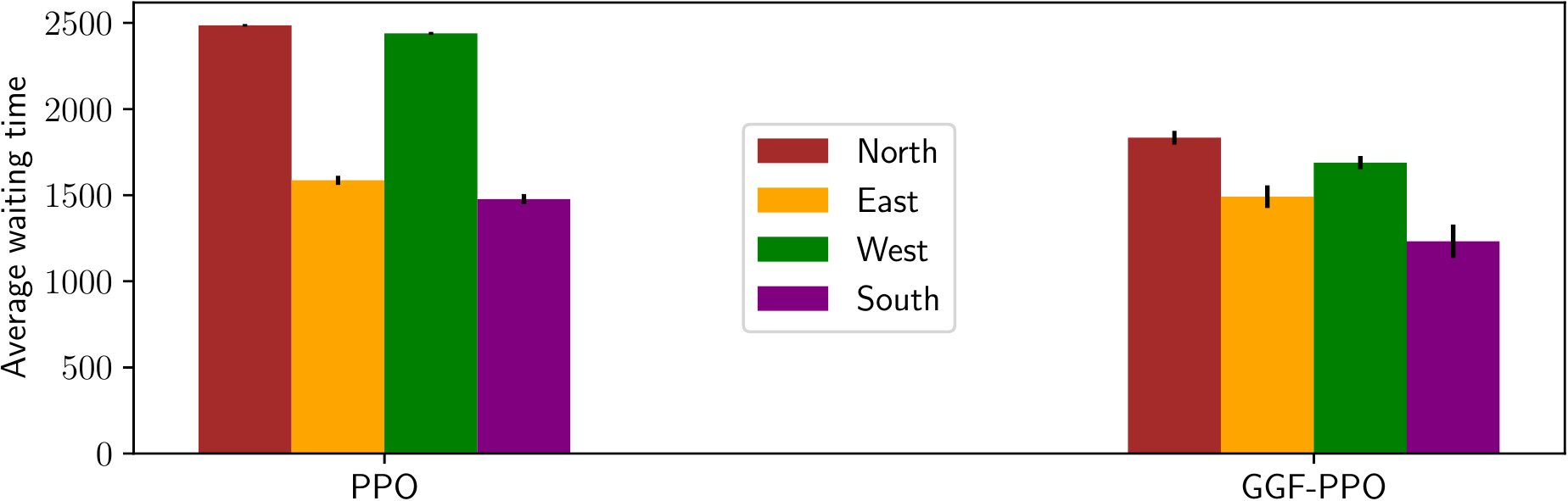}
\caption{Individual average waiting times of PPO and GGF-PPO during the testing phase.}\label{fig:ppo-bar}
\end{center}
\vskip -0.2in
\end{figure}

We also ran some additional experiments on a non-stationary environment.
For this experiment, the traffic generation is not fixed and changes during the day.
The problem becomes more challenging, but is much closer to a real environment.
There are many ways to add the variance in the traffic patterns. 
We defined 4 distributions (see~Figure~\ref{fig:dits almostistributions} for one lane) corresponding to four different periods of a day: morning, afternoon, evening, and night.

\begin{figure}[H]
\vskip 0.2in
\begin{center}
\centering
\includegraphics[width=0.6\linewidth]{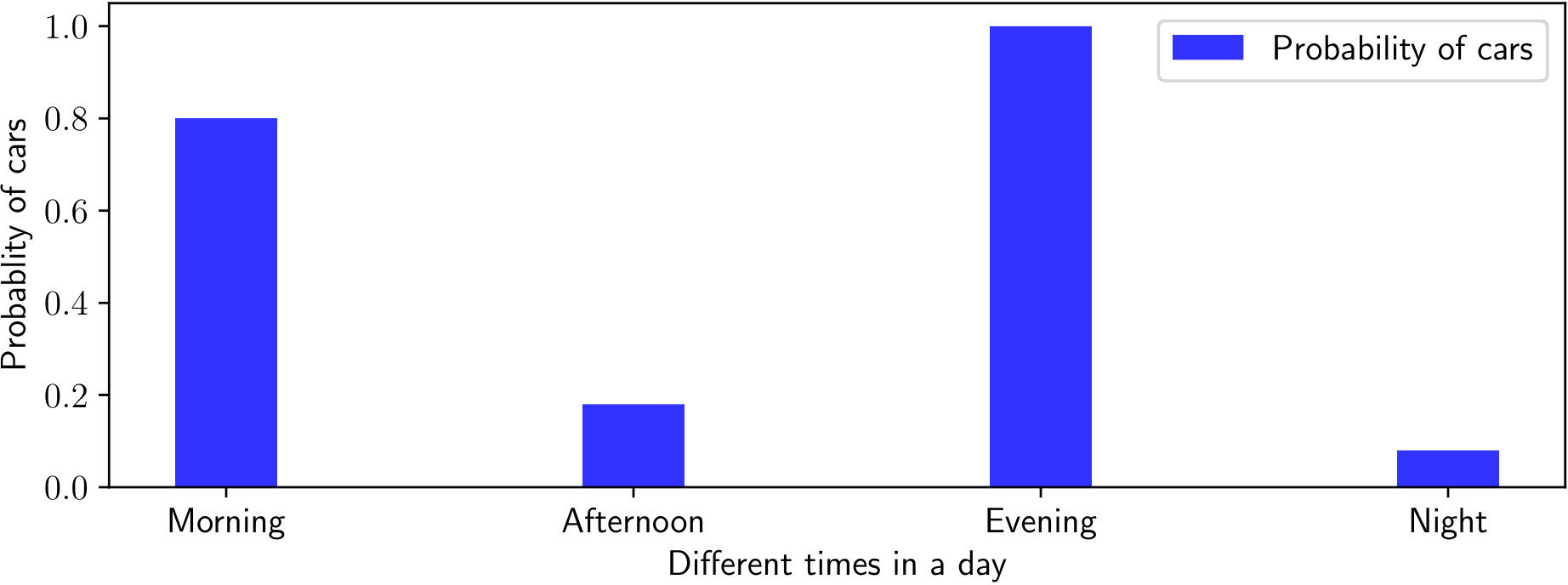}
\caption{Probabilities of car entering at different times in a day for one lane.} 
\label{fig:dits almostistributions}
\end{center}
\vskip -0.2in
\end{figure}

\begin{figure}[H]
\vskip 0.2in
\begin{center}
\centering
\includegraphics[width=0.6\linewidth]{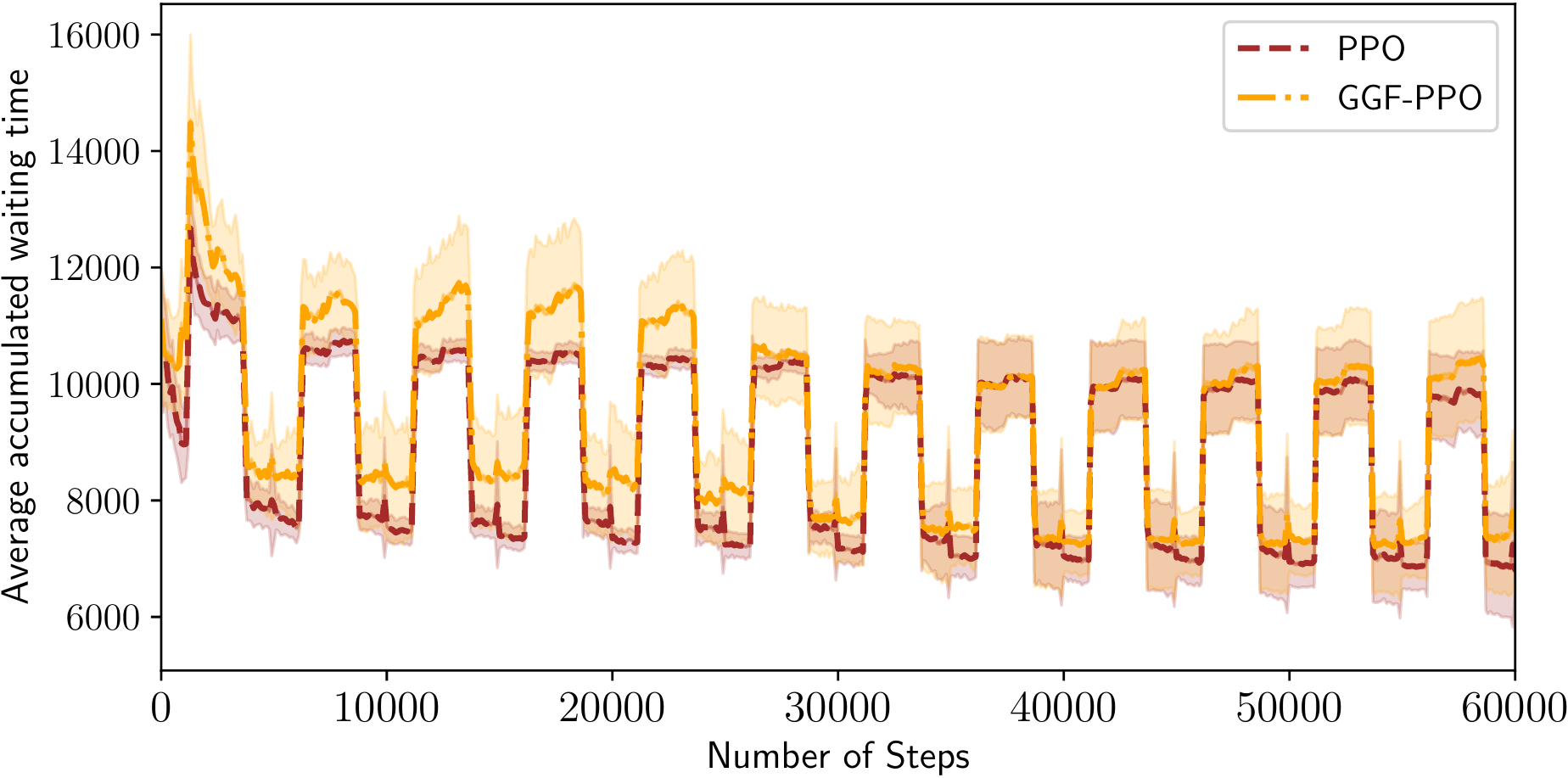}
\caption{Average waiting times of PPO and GGF-PPO during learning phase in the non-stationary TL domain.}
\label{fig:ppo-non-stat}
\end{center}
\vskip -0.2in
\end{figure}

Figure~\ref{fig:ppo-non-stat} visualizes the average accumulated waiting time of PPO and GGF-PPO on this non-stationary environment. 
As expected, GGF-PPO performs worse than PPO on that metric.  
The ups and downs represents the different times in a day. 
However, GGF-PPO achieves a much higher GGF score than PPO (see Figure~\ref{fig:ppo-non-stat-box}).

\begin{figure}[H]
\vskip 0.2in
\begin{center}
\centering
\includegraphics[width=0.6\linewidth]{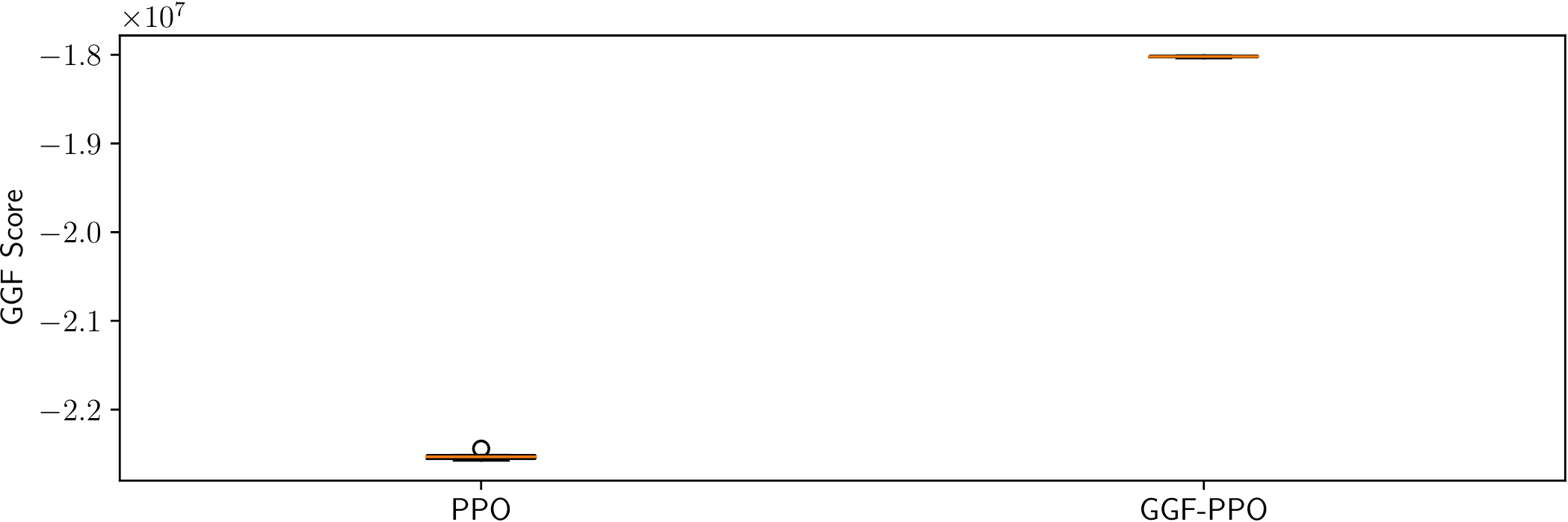}
\caption{GGF scores of PPO and GGF-PPO during the testing phase in the non-stationary TL domain.}\label{fig:ppo-non-stat-box}
\end{center}
\vskip -0.2in
\end{figure}

\subsection{Data Center Traffic Control}

\begin{figure}[H]
\vskip 0.2in
\begin{center}
\centering
\includegraphics[width=0.7\linewidth]{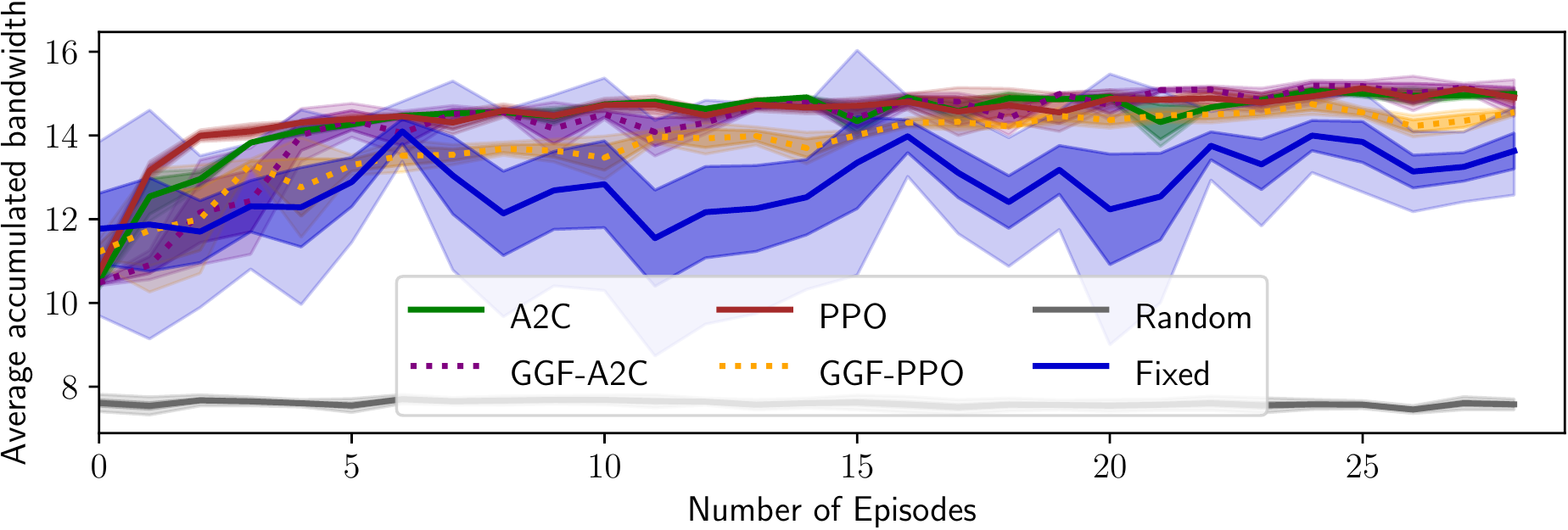}
\caption{Episodic reward of PPO and GGF-PPO during the learning phase, and that of the Fixed and Random policies in the DC domain over 20 runs with different seeds.}
\label{fig:iroko-episodic}
\end{center}
\vskip -0.2in
\end{figure}

Figure~\ref{fig:iroko-episodic} illustrates the learning curves of PPO, A2C and their GGF counterparts in terms of episodic rewards.
The performance of two policies (a fixed one and a random one) are added for comparison.
The fixed policy always chooses the maximum bandwidth for each host.
The random policy selects actions with a uniform distribution.
We can see that PPO and GGF-PPO converge to a much higher reward than random and fixed policies. 
GGF-PPO's and GGF-A2C's average bandwidth in an episode is lower than PPO and A2C, this is because it is hard for a single policy to maximize rewards while ensuring fairness. 
However, GGF algorithms still performs better than the random and fixed policies and tries to get high rewards while allocating bandwidth equally to different hosts. 

\end{document}